\documentclass[11pt]{article}
\pdfoutput=1

\usepackage[utf8]{inputenc}
\usepackage[T1]{fontenc}
\usepackage[english]{babel}
\usepackage{microtype}
\usepackage[in]{fullpage}
\usepackage{graphicx}
\usepackage{subfigure}
\usepackage{amsmath,amssymb,amsthm}
\usepackage{mathtools}
\usepackage{mathrsfs}
\usepackage{thmtools}
\usepackage{thm-restate}
\usepackage{xfrac}
\usepackage{comment}
\usepackage{enumitem}
\usepackage{xr}
\usepackage[style=numeric,natbib=true,maxbibnames=10000,maxalphanames=5,backend=biber,sortcites=true]{biblatex}
\bibliography{biblio}
\usepackage{etoolbox}
\usepackage{hyperref}

\DeclareFieldFormat{url}{\url{#1}}
\DeclareFieldFormat[unpublished]{note}{\mkbibemph{#1}}
\DeclareFieldFormat{year}{\mkbibemph{#1}}

\DeclareBibliographyDriver{unpublished}{
\usebibmacro{bibindex}
\usebibmacro{begentry}
\usebibmacro{author/translator+others}
\setunit{\labelnamepunct}\newblock
\usebibmacro{title}
\newunit\newblock
\printfield{note}
\setunit{\addcomma\space}
\usebibmacro{date}
\newunit\newblock
\usebibmacro{url}
\usebibmacro{finentry}}

\DeclareBibliographyDriver{article}{
\usebibmacro{bibindex}
\usebibmacro{begentry}
\usebibmacro{author/translator+others}
\setunit{\labelnamepunct}\newblock
\usebibmacro{title}
\newunit\newblock
\usebibmacro{journal}
\setunit{\addcomma\space}
\usebibmacro{date}
\newunit\newblock
\usebibmacro{url}
\usebibmacro{finentry}}

\DeclareBibliographyDriver{inproceedings}{
\usebibmacro{bibindex}
\usebibmacro{begentry}
\usebibmacro{author/translator+others}
\setunit{\labelnamepunct}\newblock
\usebibmacro{title}
\newunit\newblock
\usebibmacro{booktitle}
\setunit{\addcomma\space}
\usebibmacro{date}
\newunit\newblock
\usebibmacro{url}
\usebibmacro{finentry}}

\newtoggle{isicml}
\togglefalse{isicml}

\usepackage{pgfplots}
\usepackage{mathtools}
\usepackage[utf8]{inputenc}
\usepackage[T1]{fontenc}
\usepackage[english]{babel}
\usepackage{amsmath,amssymb,amsthm}
\usepackage{mathtools}
\usepackage{mathrsfs}
\usepackage{thmtools}
\usepackage{xfrac}

\declaretheorem{theorem}

\declaretheorem[sibling=theorem]{fact}
\declaretheorem[sibling=theorem]{lemma}
\declaretheorem[sibling=theorem]{corollary}
\declaretheorem[sibling=theorem]{definition}

\newcommand{\R}{\ensuremath{\mathbb{R}}}
\newcommand{\E}{\ensuremath{\mathbb{E}}}
\newcommand{\I}{\ensuremath{\mathbb{I}}}
\newcommand{\Eop}{\ensuremath{\mathop{\mathbb{E}}}}
\newcommand{\prob}{\ensuremath{\mathbb{P}}}
\newcommand{\probop}{\ensuremath{\mathop{\mathbb{P}}}}
\newcommand{\eps}{\ensuremath{\varepsilon}}
\newcommand{\vz}{\ensuremath{z}}
\newcommand{\vx}{\ensuremath{x}}
\newcommand{\vzbar}{\overline{\ensuremath{z}}}
\newcommand{\zbar}{\overline{\ensuremath{z}}}
\newcommand{\vg}{\ensuremath{g}}
\newcommand{\vgbar}{\overline{\ensuremath{g}}}
\newcommand{\vw}{\ensuremath{w}}
\newcommand{\vmu}{\ensuremath{\mu}}
\newcommand{\what}{\ensuremath{\widehat{w}}}
\newcommand{\matI}{\ensuremath{I}}
\newcommand{\thetastar}{\ensuremath{\theta^\star}}

\newcommand{\normal}{\ensuremath{\mathcal{N}}}
\newcommand{\M}{\ensuremath{\mathcal{M}}}
\newcommand{\rademacher}{\ensuremath{\mathcal{R}}}
\newcommand{\distP}{\ensuremath{\mathcal{P}}}
\newcommand{\powerset}{\ensuremath{\mathscr{P}}}
\newcommand{\perturbB}{\ensuremath{\mathcal{B}}}
\newcommand{\defeq}{\ensuremath{=}}

\newcommand{\eqdef}{\defeq}
\newcommand{\sign}{\ensuremath{\text{sgn}}}

\newcommand{\pmset}{\ensuremath{\{\pm 1\}}}

\newcommand\numberthis{\addtocounter{equation}{1}\tag{\theequation}}

\newcommand{\loss}{\ensuremath{\mathcal{L}}}

\DeclarePairedDelimiter{\norm}{\lVert}{\rVert}
\DeclarePairedDelimiter{\abs}{\lvert}{\rvert}
\DeclarePairedDelimiter{\parens}{\lparen}{\rparen}
\DeclarePairedDelimiter{\brackets}{[}{]}
\DeclarePairedDelimiter{\ip}{\langle}{\rangle}

\usepackage[textsize=scriptsize,textwidth=2cm]{todonotes}

\newcommand\AND{
\end{tabular}\hfil\linebreak[4]\hfil
\begin{tabular}[t]{c}\ignorespaces
}

\title{Adversarially Robust Generalization Requires More Data}
\author{Ludwig Schmidt\\ MIT \and Shibani Santurkar\\ MIT \and Dimitris Tsipras\\ MIT \AND Kunal Talwar\\ Google Brain \and Aleksander M\k{a}dry\\ MIT}
\date{}

\begin{document}
\maketitle

\begin{abstract}
  Machine learning models are often susceptible to adversarial perturbations of their inputs.
Even small perturbations can cause state-of-the-art classifiers with high ``standard'' accuracy to produce an incorrect prediction with high confidence.
To better understand this phenomenon, we study adversarially robust learning from the viewpoint of generalization.
We show that already in a simple natural data model, the sample complexity of robust learning can be significantly larger than that of ``standard'' learning.
This gap is information theoretic and holds irrespective of the training algorithm or the model family.
We complement our theoretical results with experiments on popular image classification datasets and show that a similar gap exists here as well.
We postulate that the difficulty of training robust classifiers stems, at least partially, from this inherently larger sample complexity.

\end{abstract}

\section{Introduction}
Modern machine learning models achieve high accuracy on a broad range of datasets, yet can easily be misled by small perturbations of their input.
While such perturbations are often simple noise to a human or even imperceptible, they cause state-of-the-art models to misclassify their input with high confidence.
This phenomenon has first been studied in the context of secure machine learning for spam filters and malware classification~\cite{DDMSV04,LM05,biggio2017wild}.
More recently, researchers have demonstrated the phenomenon under the name of \emph{adversarial examples} in image classification~\cite{SzegedyZSBEGF13,GoodfellowSS14}, question answering~\cite{jia2017adversarial}, voice recognition~\cite{CMVZSSWZ16,ZYJZZX17,SM17,carlini2018audio}, and other domains (for instance, see \cite{GPMBM16,CANK17,AMT17,BM17,HPGDA17,KFS17,XCLRDS17,HAHO18}).
Overall, the existence of such adversarial examples raises concerns about the robustness of trained classifiers.
As we increasingly deploy machine learning systems in safety- and security-critical environments, it is crucial to understand the robustness properties of our models in more detail.

A growing body of work is exploring this robustness question from the security perspective by proposing \emph{attacks} (methods for crafting adversarial examples) and \emph{defenses} (methods for making classifiers robust to such perturbations).
Often, the focus is on deep neural networks, e.g., see~\cite{SharifBBR16,MoosDez16,papernot2016distillation,CarliniW16a,TramerKPBM17,madry2017towards,xu2017feature,he2017weak}.
While there has been success with robust classifiers on simple datasets~\cite{madry2017towards,kolter2017provable,sinha2017certifiable,raghunathan2018certified}, more complicated datasets still exhibit a large gap between ``standard'' and robust accuracy~\cite{CarliniW16a,athalye2018obfuscated}.
An implicit assumption underlying most of this work is that the same training dataset that enables good standard accuracy also suffices to train a robust model.
However, it is unclear if this assumption is valid. 

So far, the \emph{generalization} aspects of adversarially robust classification have not been thoroughly investigated.
Since adversarial robustness is a learning problem, the statistical perspective is of integral importance.
A key observation is that adversarial examples are not at odds with the standard notion of generalization as long as they occupy only a small total measure under the data distribution.
So to achieve adversarial robustness, a classifier must generalize in a stronger sense.
We currently do not have a good understanding of how such a stronger notion of generalization compares to standard ``benign'' generalization, i.e., without an adversary.

In this work, we address this gap and explore the statistical foundations of adversarially robust generalization.
We focus on sample complexity as a natural starting point since it underlies the core question of when it is possible to learn an adversarially robust classifier.
Concretely, we pose the following question:
\begin{quote}
    \emph{How does the sample complexity of standard generalization compare to that of adversarially robust generalization?}
\end{quote}

To study this question, we analyze robust generalization in two distributional models.
By focusing on specific distributions, we can establish information-theoretic lower bounds and describe the exact sample complexity requirements for generalization.
We find that even for a simple data distribution such as a mixture of two class-conditional Gaussians, the sample complexity of robust generalization is significantly larger than that of standard generalization.
Our lower bound holds for \emph{any} model and learning algorithm.
Hence no amount of algorithmic ingenuity is able to overcome this limitation.

In spite of this negative result, simple datasets such as MNIST have recently seen significant progress in terms of adversarial robustness~\cite{madry2017towards,kolter2017provable,sinha2017certifiable,raghunathan2018certified}.
The most robust models achieve accuracy around 90\% against large $\ell_\infty$-perturbations.
To better understand this discrepancy with our first theoretical result, we also study a second distributional model with binary features.
This binary data model has the same standard generalization behavior as the previous Gaussian model.
Moreover, it also suffers from a significantly increased sample complexity whenever one employs \emph{linear} classifiers to achieve adversarially robust generalization.
Nevertheless, a slightly non-linear classifier that utilizes thresholding turns out to recover the smaller sample complexity of standard generalization.
Since MNIST is a mostly binary dataset, our result provides evidence that $\ell_\infty$-robustness on MNIST is significantly easier than on other datasets.
Moreover, our results show that distributions with similar sample complexity for standard generalization can still exhibit considerably different sample complexity for robust generalization.

To complement our theoretical results, we conduct a range of experiments on MNIST, CIFAR10, and SVHN.
By subsampling the datasets at various rates, we study the impact of sample size on adversarial robustness.
When plotted as a function of training set size, our results show that the standard accuracy on SVHN indeed plateaus well before the adversarial accuracy reaches its maximum.
On MNIST, explicitly adding thresholding to the model during training significantly reduces the sample complexity, similar to our upper bound in the binary data model.
On CIFAR10, the situation is more nuanced because there are no known approaches that achieve more than 50\% accuracy even against a mild adversary.
But as we show in the next subsection, there is clear evidence for overfitting in the current state-of-the-art methods.

Overall, our results suggest that current approaches may be unable to attain higher adversarial accuracy on datasets such as CIFAR10 for a fundamental reason:
the dataset may not be large enough to train a standard convolutional network robustly.
Moreover, our lower bounds illustrate that the existence of adversarial examples should not necessarily be seen as a shortcoming of specific classification methods.
Already in a simple data model, adversarial examples \emph{provably} occur for any learning approach, even when the classifier already achieves high standard accuracy.
So while vulnerability to adversarial $\ell_\infty$-perturbations might seem counter-intuitive at first, in some regimes it is an unavoidable consequence of working in a statistical setting.

\begin{figure}[tb]
\begin{center}
\iftoggle{isicml}{
  \includegraphics[width=0.45\textwidth]{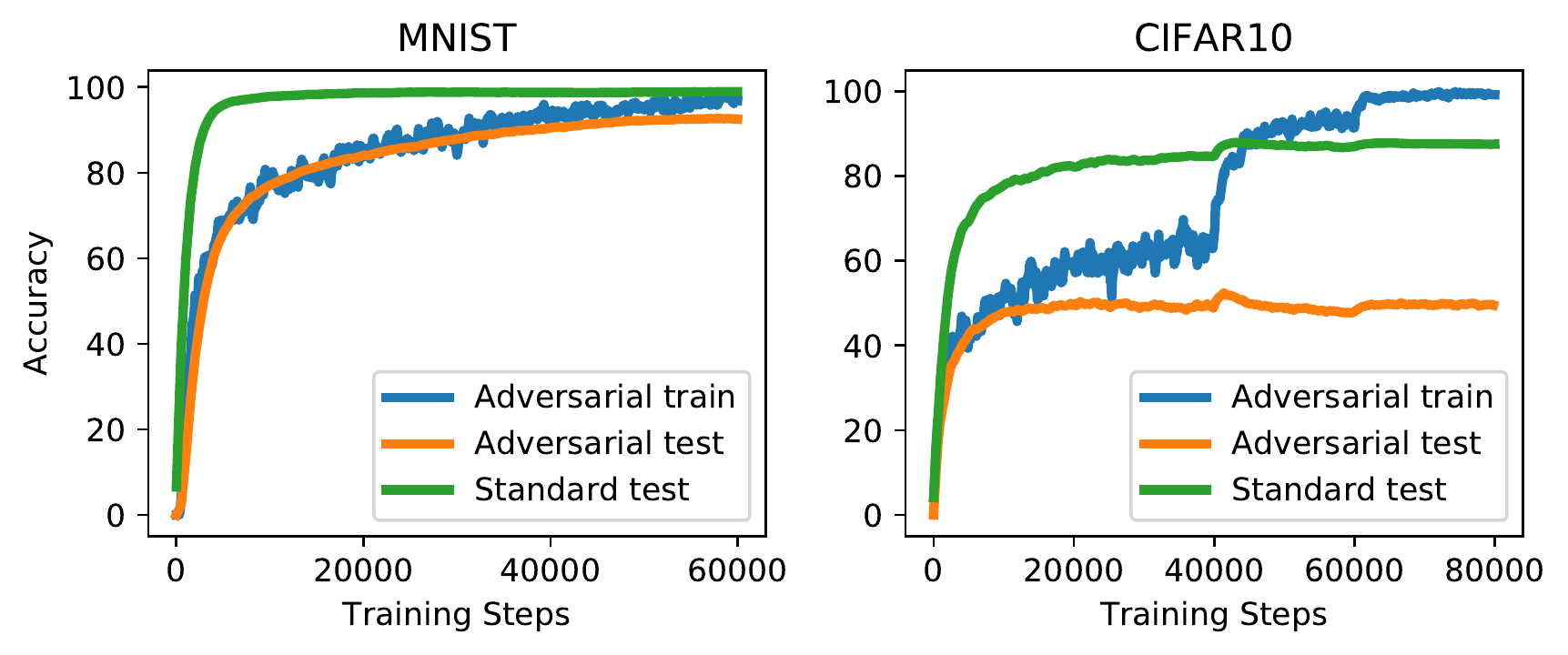}
}{
  \includegraphics[width=0.85\textwidth]{figures/overfit.pdf}

  \vspace{-.5cm}
}
\end{center}
\caption{Classification accuracies for robust optimization on MNIST and CIFAR10.
In both cases, we trained standard convolutional networks to be robust to $\ell_\infty$-perturbations of the input.
On MNIST, the robust test error closely tracks the corresponding training error and the model achieves high robust accuracy.
On CIFAR10, the model still achieves a good natural (non-adversarial) test error, but there is a significant generalization gap for the robust accuracy.
This phenomenon motivates our study of adversarially robust generalization.
}
\label{fig:intro}
\end{figure}

\subsection{A motivating example: Overfitting on CIFAR10}
Before we describe our main results, we briefly highlight the importance of generalization for adversarial robustness via two experiments on MNIST and CIFAR10.
In both cases, our goal is to learn a classifier that achieves good test accuracy even under $\ell_\infty$-bounded perturbations.
We follow the standard robust optimization approach \cite{wald1945statistical,ben2009robust,madry2017towards} -- also known as adversarial training \cite{SzegedyZSBEGF13,GoodfellowSS14} -- and (approximately) solve the saddle point problem
\[
  \min_\theta \mathop{\mathbb{E}}_{x} \left[ \max_{\| x' - x\|_\infty \leq \eps} \text{loss}(\theta, x') \right]
\]
via stochastic gradient descent over the model parameters $\theta$.
We utilize projected gradient descent for the inner maximization problem over allowed perturbations of magnitude $\eps$ (see \cite{madry2017towards} for details).
Figure \ref{fig:intro} displays the training curves for three quantities: (i) adversarial training error, (ii) adversarial test error, and (iii) standard test error.

The results show that on MNIST, robust optimization is able to learn a model with around 90\% adversarial accuracy and a relatively small gap between training and test error.
However, CIFAR10 offers a different picture.
Here, the model (a wide residual network \cite{ZK16}) is still able to fully fit the training set even against an adversary, but the generalization gap is significantly larger.
The model only achieves 47\% adversarial test accuracy, which is about 50\% lower than its training accuracy.\footnote{We remark that this accuracy is still currently the best published robust accuracy on CIFAR10 \cite{athalye2018obfuscated}.
For instance, contemporary approaches to architecture tuning do not yield better robust accuracies \cite{ZCLS18}.}
Moreover, the standard test error is about 87\%, so the failure of generalization indeed primarily occurs in the context of adversarial robustness.
This failure might be surprising particularly since properly tuned convolutional networks rarely overfit much on standard vision datasets.

\subsection{Outline of the paper}
In the next section, we describe our main theoretical results at a high level.
Sections~\ref{sec:linfty_gauss} and \ref{sec:bernlb_sketch} then provide more details for our lower bounds on $\ell_\infty$-robust generalization.
Section~\ref{sec:experiments} complements these results with experiments.
We conclude with a discussion of our results and future research directions.

\section{Theoretical Results}
\label{sec:contributions}
Our theoretical results concern statistical aspects of adversarially robust classification.
In order to understand how properties of data affect the number of samples needed for robust generalization, we study two concrete distributional models.
While our two data models are clearly much simpler than the image datasets currently being used in the experimental work on $\ell_\infty$-robustness, we believe that the simplicity of our models is a strength in this context.

After all, the fact that we can establish a separation between standard and robust generalization already in our Gaussian data model is evidence that the existence of adversarial examples for neural networks should not come as a surprise.
The same phenomenon (i.e., classifiers with just enough samples for high standard accuracy \emph{necessarily} being vulnerable to $\ell_\infty$- attacks) already occurs in much simpler settings such as a mixture of two Gaussians.

Also, our main contribution is a \emph{lower bound}.
So establishing a hardness result for a simple problem means that more complicated distributional setups that can ``simulate'' the Gaussian model directly inherit the same hardness.

Finally, as we describe in the subsection on the Bernoulli model, the benefits of the thresholding layer predicted by our theoretical analysis do indeed appear in experiments on MNIST as well.
  Since multiple defenses against adversarial examples have been primarily evaluated on MNIST \cite{kolter2017provable, raghunathan2018certified,sinha2017certifiable}, it is important to note that $\ell_\infty$-robustness on MNIST is a particularly easy case: adding a simple thresholding layer directly yields nearly state-of-the-art robustness against moderately strong adversaries ($\eps = 0.1$), without any further changes to the model architecture or training algorithm.

\subsection{The Gaussian model}
Our first data model is a mixture of two spherical Gaussians with one component per class.
\begin{definition}[Gaussian model]
  Let $\thetastar \in \R^d$ be the per-class mean vector and let $\sigma > 0$ be the variance parameter.
  Then the $(\thetastar, \sigma)$-Gaussian model is defined by the following distribution over $(x, y) \in \R^d \times \{\pm 1\}$:
  First, draw a label $y \in \{\pm 1\}$ uniformly at random.
  Then sample the data point $x \in \R^d$ from $\normal(y \cdot \thetastar, \sigma^2 \matI)$.
\end{definition}
While not explicitly specified in the definition, we will use the Gaussian model in the regime where the norm of the vector $\thetastar$ is approximately $\sqrt{d}$.
Hence the main free parameter for controlling the difficulty of the classification task is the variance $\sigma^2$, which controls the amount of overlap between the two classes.

To contrast the notions of ``standard'' and ``robust'' generalization, we briefly recap a standard definition of classification error.
\begin{definition}[Classification error]
  Let $\distP : \R^d \times \{\pm 1 \} \rightarrow \R$ be a distribution.
  Then the classification error $\beta$ of a classifier $f : \R^d \rightarrow \{ \pm 1\}$ is defined as $\beta \defeq \prob_{(x,y) \sim \distP}\brackets*{f(x) \ne y}$.
\end{definition}

Next, we define our main quantity of interest, which is an adversarially robust counterpart of the above classification error.
Instead of counting misclassifications under the data distribution, we allow a bounded worst-case perturbation before passing the perturbed sample to the classifier.
\begin{definition}[Robust classification error]
  \label{def:robusterror}
  Let $\distP : \R^d \times \{\pm 1 \} \rightarrow \R$ be a distribution and let $\perturbB : \R^d \rightarrow \powerset(\R^d)$ be a perturbation set.\footnote{We write $\powerset(\R^d)$ to denote the power set of $\R^d$, i.e., the set of subsets of $\R^d$.}
  Then the $\perturbB$-robust classification error $\beta$ of a classifier $f : \R^d \rightarrow \{ \pm 1\}$ is defined as $\beta \defeq \prob_{(x,y) \sim \distP}\brackets*{\, \exists \, x' \in \perturbB(x) \, : \: f(x') \ne y }$.
\end{definition}
Since $\ell_\infty$-perturbations have recently received a significant amount of attention, we focus on robustness to $\ell_\infty$-bounded adversaries in our work.
For this purpose, we define the perturbation set $\perturbB_\infty^\eps(x) = \{ x' \in \R^d \, | \, \norm{x' - x}_\infty \leq \eps\}$.
To simplify notation, we refer to robustness with respect to this set also as $\ell_\infty^\eps$-robustness.
As we remark in the discussion section, understanding generalization for other measures of robustness ($\ell_2$, rotatations, etc.) is an important direction for future work.

\paragraph{Standard generalization.}
The Gaussian model has one parameter for controlling the difficulty of learning a good classifier.
In order to simplify the following bounds, we study a regime where it is possible to achieve good \emph{standard} classification error from a single sample.\footnote{We remark that it is also possible to study a more general setting where standard generalization requires a larger number of samples.}
As we will see later, this also allows us to calibrate our two data models to have comparable standard sample complexity.

Concretely, we prove the following theorem, which is a direct consequence of Gaussian concentration.
Note that in this theorem we use a \emph{linear classifier}:
for a vector $w$, the linear classifier $f_w : \R^d \rightarrow \pmset$ is defined as $f_w(x) = \sign({\ip{w, x}})$.
\begin{theorem}
  \label{thm:gauss_standard_upper}
  Let $(\vx, y)$ be drawn from a $(\thetastar, \sigma)$-Gaussian model with $\norm{\thetastar}_2 = \sqrt{d}$ and $\sigma \, \leq \, c \cdot d^{\sfrac{1}{4}}$ where $c$ is a universal constant.
  Let $\what \in\R^d$ be the vector $\what = y \cdot \vx$.
  Then with high probability, the linear classifier $f_{\what}$ has classification error at most 1\%.
\end{theorem}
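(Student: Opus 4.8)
The plan is to exploit the rotational structure of the Gaussian model and reduce the whole argument to a one-dimensional Gaussian tail bound. First I would record the law of the weight vector. Writing the training sample as $\vx = y\thetastar + \sigma g_1$ with $g_1 \sim \normal(0,\matI)$, we get $\what = y\vx = \thetastar + \sigma\, y g_1$, and since $y g_1$ has the same distribution as a standard Gaussian $g$, the weight vector is distributed as $\what = \thetastar + \sigma g$ with $g \sim \normal(0,\matI)$. Thus $\what$ is simply a noisy copy of the true mean, and the remainder of the proof is about controlling how far the noise $\sigma g$ can tilt the decision boundary.

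Next I would compute the classification error for a \emph{fixed} $\what$, so that ``high probability'' refers to the training draw. By the symmetry of the two class-conditional Gaussians it suffices to bound the error on the $y=1$ component, so I condition on a fresh test point $x \sim \normal(\thetastar,\sigma^2\matI)$ and ask for $\prob[\ip{\what, x} \le 0]$. Since $\ip{\what, x}$ is an affine image of a Gaussian, it is itself Gaussian with mean $\ip{\what,\thetastar}$ and variance $\sigma^2\norm{\what}_2^2$, which gives the closed form $\beta(\what) = \Phi\!\left(-\frac{\ip{\what,\thetastar}}{\sigma\norm{\what}_2}\right)$, where $\Phi$ denotes the standard Gaussian CDF. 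Everything then reduces to lower bounding the single quantity $\ip{\what,\thetastar}/(\sigma\norm{\what}_2)$ with high probability.

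The third step is the concentration estimates for numerator and denominator. For the numerator, $\ip{\what,\thetastar} = \norm{\thetastar}_2^2 + \sigma\ip{g,\thetastar} = d + \sigma\ip{g,\thetastar}$, where $\ip{g,\thetastar}\sim\normal(0,d)$; a Gaussian tail bound gives $\ip{\what,\thetastar} \ge (1-o(1))\,d$ with high probability, because the fluctuation $\sigma\sqrt{d} = O(d^{3/4})$ is of lower order than $d$. For the denominator, expanding $\norm{\what}_2^2 = d + 2\sigma\ip{g,\thetastar} + \sigma^2\norm{g}_2^2$ and using $\chi^2$-concentration of $\norm{g}_2^2$ around $d$ yields $\norm{\what}_2^2 \le (1+o(1))(1+\sigma^2)\,d$. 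A union bound over these two events combines them. Together they give a ratio of at least $(1-o(1))\,\sqrt{d}/(\sigma\sqrt{1+\sigma^2})$.

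Finally I would plug in the hypothesis $\sigma \le c\,d^{1/4}$. In the worst case $\sigma \ge 1$ we have $\sigma\sqrt{1+\sigma^2} \le \sqrt{2}\,\sigma^2 \le \sqrt{2}\,c^2\sqrt{d}$, so the ratio is bounded below by a constant of order $1/c^2$; when $\sigma \le 1$ the ratio is of order $\sqrt{d}$ and only larger. Choosing the universal constant $c$ small enough makes $1/c^2$ exceed the threshold $\Phi^{-1}(0.99)\approx 2.33$, so that $\beta(\what) = \Phi(-\text{ratio}) \le 0.01$. The main obstacle — really the only delicate point — is the interplay in the denominator: the noise term $\sigma^2\norm{g}_2^2$ scales like $\sigma^2 d$, which at $\sigma \sim d^{1/4}$ is the \emph{dominant} contribution to $\norm{\what}_2^2$, so one cannot pretend $\what \approx \thetastar$; the constant $c$ must be calibrated precisely against this term to keep the ratio above the tail threshold.
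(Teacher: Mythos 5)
Your proof is correct and follows essentially the same route as the paper's (Lemmas \ref{lem:gaussian_ip1}--\ref{lem:gaussian_classification} and Corollary \ref{cor:single_sample}): lower-bound $\ip{\what,\thetastar}$, upper-bound $\norm{\what}_2$, reduce the error to a one-dimensional Gaussian tail in the ratio $\ip{\what,\thetastar}/(\sigma\norm{\what}_2)$, and calibrate $c$ against the dominant $\sigma^2\norm{g}_2^2 \sim \sigma^2 d$ term. The only differences are cosmetic: you expand $\norm{\what}_2^2$ and use $\chi^2$-concentration where the paper uses the triangle inequality plus norm concentration, and you keep the exact $\Phi$ where the paper uses the sub-Gaussian bound $\exp(-t^2/2)$.
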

To minimize the number of parameters in our bounds, we have set the error probability to 1\%.
By tuning the model parameters appropriately, it is possible to achieve a vanishingly small error probability from a single sample (see Corollary \ref{cor:single_sample} in Appendix~\ref{app:gaussians_upper}).

\paragraph{Robust generalization.}
As we just demonstrated, we can easily achieve \emph{standard} generalization from only a single sample in our Gaussian model.
We now show that achieving a low $\ell_\infty$-\emph{robust} classification error requires significantly more samples.
To this end, we begin with a natural strengthening of Theorem \ref{thm:gauss_standard_upper} and prove that the (class-weighted) sample mean can also be a robust classifier (given sufficient data).
\begin{theorem}
  \label{thm:main_gauss_upper}
  Let $(\vx_1, y_1), \ldots, (\vx_n, y_n)$ be drawn i.i.d.\ from a $(\thetastar, \sigma)$-Gaussian model with $\norm{\thetastar}_2 = \sqrt{d}$ and $\sigma \leq c_1 d^{\sfrac{1}{4}}$.
  Let $\what \in\R^d$ be the weighted mean vector $\what= \frac{1}{n} \sum_{i=1}^{n} y_i \vx_i$.
  Then with high probability, the linear classifier $f_{\what}$ has $\ell_\infty^\eps$-robust classification error at most 1\% if
  \[
  n \; \geq \; \begin{cases} 1 \quad &\text{ for } \;\; \eps \, \leq \, \frac{1}{4}d^{-\sfrac{1}{4}} \\
  c_2 \, \eps^2\sqrt{d} & \text{ for } \; \; \frac{1}{4}d^{-\sfrac{1}{4}} \, \leq \, \eps \, \leq \, \frac{1}{4}\end{cases} \; .
  \]
\end{theorem}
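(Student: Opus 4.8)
The plan is to reduce the robust error to a one-dimensional Gaussian tail bound and then control the relevant geometric quantities of $\what$ by concentration. First I would write each training point as $\vx_i = y_i \thetastar + \sigma \vg_i$ with $\vg_i \sim \normal(0,\matI)$, so that $y_i \vx_i = \thetastar + \sigma\, y_i \vg_i$ and hence
\[
\what = \thetastar + \tfrac{\sigma}{\sqrt{n}}\, z, \qquad z \sim \normal(0,\matI),
\]
since the $y_i \vg_i$ are i.i.d.\ standard Gaussian and their empirical mean is $\normal(0,\tfrac1n \matI)$. This isolates the signal $\thetastar$ from the estimation noise, whose scale $\sigma/\sqrt{n}$ shrinks with the sample size.

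Next I would compute the robust error exactly, conditional on $\what$. For a linear classifier the worst-case $\ell_\infty^\eps$ perturbation shifts the margin by exactly $\eps\norm{\what}_1$, so a test point $(x,y)$ is robustly misclassified iff $y\,\ip{\what,x} < \eps\norm{\what}_1$. Writing $x = y\thetastar + \sigma g$ gives $y\,\ip{\what,x} \sim \normal\!\big(\ip{\what,\thetastar},\, \sigma^2\norm{\what}_2^2\big)$, so the conditional robust error equals $\Phi\!\big(\tfrac{\eps\norm{\what}_1 - \ip{\what,\thetastar}}{\sigma\norm{\what}_2}\big)$. It therefore suffices to show that, with high probability over the training sample, the robust-margin inequality
\[
\ip{\what,\thetastar} - \eps\norm{\what}_1 \;\ge\; \tau\,\sigma\norm{\what}_2
\]
holds, where $\tau = \Phi^{-1}(0.99)$ is the constant making the tail equal to $1\%$.

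The core of the argument is then a concentration estimate for the three quantities above, using $\what = \thetastar + \tfrac{\sigma}{\sqrt n}z$ and $\norm{\thetastar}_2 = \sqrt d$: (i) $\ip{\what,\thetastar} = d + \tfrac{\sigma}{\sqrt n}\ip{z,\thetastar}$ concentrates around $d$ since $\ip{z,\thetastar}\sim\normal(0,d)$; (ii) $\norm{\what}_1 \le \norm{\thetastar}_1 + \tfrac{\sigma}{\sqrt n}\norm{z}_1 \le d + O\!\big(\tfrac{\sigma d}{\sqrt n}\big)$, using $\norm{\thetastar}_1 \le \sqrt{d}\,\norm{\thetastar}_2 = d$ (worst case: a flat $\thetastar$) together with concentration of $\norm{z}_1$ around $\sqrt{2/\pi}\,d$; and (iii) $\norm{\what}_2^2 = d + \tfrac{2\sigma}{\sqrt n}\ip{z,\thetastar} + \tfrac{\sigma^2}{n}\norm{z}_2^2 = O\!\big(d + \tfrac{\sigma^2 d}{n}\big)$ by $\chi^2$-concentration of $\norm{z}_2^2$. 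Substituting these and using $\sigma \le c_1 d^{\sfrac14}$, the signal term $\ip{\what,\thetastar}\approx d$ dominates: the deterministic part $\eps\norm{\thetastar}_1 \le \tfrac14 d$ (since $\eps\le\tfrac14$), while the noise part $\eps\cdot\tfrac{\sigma d}{\sqrt n}$ is only $\Omega(d)$ when $\tfrac{\eps\sigma}{\sqrt n}=\Omega(1)$, i.e.\ $n = O(\eps^2\sigma^2)=O(\eps^2\sqrt d)$; similarly $\sigma\norm{\what}_2 = O\!\big(\sigma\sqrt d + \tfrac{\sigma^2\sqrt d}{\sqrt n}\big)$ stays below $d$ under the same condition. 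This is precisely where the two-regime threshold $n \gtrsim \eps^2\sqrt d$ (and $n\ge 1$ for $\eps\le\tfrac14 d^{-\sfrac14}$, where $\eps^2\sqrt d \le \tfrac{1}{16}$) emerges, and choosing $c_1$ small and $c_2$ large makes the margin inequality hold with slack $\tau$.

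The step I expect to be the main obstacle is the bookkeeping of the denominator $\sigma\norm{\what}_2$ in the small-$\eps$ regime. When $n$ is permitted to be a small constant, the estimation-noise term $\tfrac{\sigma^2 d}{n}$ can dominate $d$ inside $\norm{\what}_2^2$, so one cannot simply approximate $\norm{\what}_2 \approx \sqrt d$; the calculation must retain both terms and verify that the ratio of the $\Theta(d)$ numerator to the (possibly much larger) denominator still exceeds $\tau$. This is exactly what forces the precise scaling $\sigma \le c_1 d^{\sfrac14}$ and ultimately pins down the admissible constants $c_1$ and $c_2$.
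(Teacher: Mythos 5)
Your proposal is correct and follows essentially the same route as the paper's proof (Lemma \ref{lem:robustupper} together with Theorem \ref{thm:gausslinf} and Corollary \ref{cor:gausslinf}): reduce robust misclassification to the margin condition $y\ip{\what,x} \leq \eps\norm{\what}_1$ via $\ell_1$/$\ell_\infty$ duality, bound the resulting Gaussian tail, and control $\ip{\what,\thetastar}$, $\norm{\what}_1$, and $\norm{\what}_2$ by concentration of the weighted sample mean. The only differences are cosmetic bookkeeping choices --- you work with the unnormalized $\what = \thetastar + \tfrac{\sigma}{\sqrt{n}}z$ and bound $\norm{\what}_1$ by the triangle inequality, while the paper normalizes $\what$ to a unit vector and uses $\norm{\what}_1 \leq \sqrt{d}$ by Cauchy--Schwarz --- and your flagged obstacle (the denominator $\sigma\norm{\what}_2$ at constant $n$, which forces $c_1$ small) is exactly the constraint the paper absorbs into its choice $\sigma \leq \tfrac{1}{32}d^{\sfrac{1}{4}}$.
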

We refer the reader to Corollary \ref{cor:gaussian_robust} in Appendix \ref{app:gaussians_upper} for the details.
As before, $c_1$ and $c_2$ are two universal constants.
Overall, the theorem shows that it is possible to learn an $\ell_\infty^\eps$-robust classifier in the Gaussian model as long as $\eps$ is bounded by a small constant and we have a large number of samples.

Next, we show that this significantly increased sample complexity is necessary.
Our main theorem establishes a lower bound for \emph{all} learning algorithms, which we formalize as functions from data samples to binary classifiers.
In particular, the lower bound applies not only to learning linear classifiers.
\begin{theorem}
  \label{thm:main_gauss_lower}
Let $g_n$ be any learning algorithm, i.e., a function from $n$ samples to a binary classifier $f_n$.
Moreover, let $\sigma = c_1 \cdot d^{\sfrac{1}{4}}$, let $\eps \geq 0$, and let $\theta \in \R^d$ be drawn from $\normal(0, \matI)$.
We also draw $n$ samples from the $(\theta, \sigma)$-Gaussian model.
Then the expected $\ell_\infty^\eps$-robust classification error of $f_n$ is at least $(1 - \sfrac{1}{d}) \frac{1}{2}$ if
\[
  n \; \leq \; c_2 \frac{\eps^2 \, \sqrt{d}}{\log d} \; .
\]
\end{theorem}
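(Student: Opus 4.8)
The plan is to prove this via a Bayesian information-theoretic argument: since $\theta$ is itself drawn from the prior $\normal(0,\matI)$, the expected robust error of $f_n$ (over both $\theta$ and the data) is governed by the \emph{posterior predictive} distribution of a fresh test point, and I will show that with so few samples this distribution barely moves, forcing the error toward $\tfrac12$. First I would reduce the robust error to a statement about the (data-dependent, arbitrary measurable) decision regions $R_+ = \{x : f_n(x)=+1\}$ and $R_- = \{x : f_n(x)=-1\}$. Letting $A^{-\eps} = \{u : \perturbB_\infty^\eps(u)\subseteq A\}$ denote the $\ell_\infty$ $\eps$-erosion of a set $A$, the standard duality between erosion and dilation by the symmetric box $[-\eps,\eps]^d$ shows that the $\perturbB_\infty^\eps$-robust error of $f_n$ equals
\[
1 - \tfrac12\big[\,\prob_{x\sim\normal(\theta,\sigma^2\matI)}(x\in R_+^{-\eps}) + \prob_{x\sim\normal(-\theta,\sigma^2\matI)}(x\in R_-^{-\eps})\,\big].
\]
Note that nothing about the form of $g_n$ (linearity, etc.) is used; $R_\pm$ are arbitrary. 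The decisive structural observation is that the eroded regions are not merely disjoint but \emph{$\ell_\infty$-separated by more than $2\eps$}: if $\perturbB_\infty^\eps(u)\subseteq R_+$ and $\perturbB_\infty^\eps(v)\subseteq R_-$, then these two balls are disjoint, which forces $\norm{u-v}_\infty > 2\eps$.

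Next I would pass to the posterior. Writing $z_i = y_i x_i \sim \normal(\theta,\sigma^2\matI)$ and $\zbar = \frac1n\sum_i z_i$, Gaussian conjugacy gives $\theta \mid D \sim \normal(m,\tau^2\matI)$ with $m = \frac{n}{\sigma^2+n}\zbar$ and $\tau^2 = \frac{\sigma^2}{\sigma^2+n}$, so the posterior predictive law of a test point is $\normal(\pm m,\nu^2\matI)$ with $\nu^2 = \sigma^2+\tau^2$. Since the test point is independent of $D$ given $\theta$, taking the expectation over $(\theta,D)$ and conditioning on $D$ replaces each $\prob_{x\sim\normal(\pm\theta)}$ above by $\prob_{x\sim\normal(\pm m,\nu^2\matI)}$. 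The crux is then a translation trick: $\prob_{\normal(-m,\nu^2\matI)}(R_-^{-\eps}) = \prob_{\normal(m,\nu^2\matI)}(R_-^{-\eps}+2m)$, and whenever $\norm{m}_\infty \le \eps$ the separation property guarantees that $R_+^{-\eps}$ and $R_-^{-\eps}+2m$ are disjoint, so their combined mass under the \emph{single} Gaussian $\normal(m,\nu^2\matI)$ is at most $1$. Hence, on the event $E = \{\norm{m}_\infty \le \eps\}$, the bracketed quantity is at most $1$, and the robust error is at least $\tfrac12 - \prob(E^c)$.

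Finally I would control $\prob(E^c)$ by concentration. Marginally $\zbar_j \sim \normal(0,\,1+\sigma^2/n)$, so $m_j \sim \normal\!\big(0,\tfrac{n}{\sigma^2+n}\big)$ with per-coordinate variance $s^2 \le n/\sigma^2 \le c_2\eps^2/(c_1^2\log d)$ after substituting $\sigma = c_1 d^{1/4}$ and $n \le c_2\eps^2\sqrt d/\log d$. A Gaussian tail bound gives $\prob(|m_j|>\eps) \le 2\exp(-\eps^2/2s^2) \le 2 d^{-c_1^2/(2c_2)}$, and a union bound over the $d$ coordinates yields $\prob(E^c) \le 2 d^{1-c_1^2/(2c_2)}$; choosing the universal constant $c_2$ small enough relative to $c_1$ makes this at most $\tfrac{1}{2d}$, giving the claimed bound of $(1-\tfrac1d)\tfrac12$.

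The main obstacle is the structural reduction of the first paragraph: recognizing that the robust error is controlled by the \emph{erosion} measures, and that erosion turns disjoint decision regions into $2\eps$-separated sets, so that the posterior-predictive means $\pm m$ are displaced by only $\norm{m}_\infty \le \eps$, i.e.\ by \emph{less than the separation}, leaving no decision rule able to exploit the data. Once this picture is in place the tail estimate is routine; the only mild care required is that the argument hold uniformly in $\eps$, which it does because the ratio $\eps^2/s^2$ is independent of $\eps$.
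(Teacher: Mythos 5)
Your proposal is correct and follows essentially the same route as the paper: the Bayesian swap to the posterior predictive $\normal(\pm m,\nu^2\matI)$, a translation argument that neutralizes the classifier whenever $\norm{m}_\infty \le \eps$, and Gaussian concentration of $\norm{m}_\infty$ to bound the failure probability by $\sfrac{1}{d}$. Your erosion/separation formulation is just the complementary (dual) view of the paper's dilation-and-shift step---since $R_+^{-\eps}$ is the complement of $\perturbB_\infty^\eps(R_-)$, your ``accuracies sum to at most $1$'' is literally the paper's ``errors sum to at least $1$''---and your per-coordinate tail plus union bound replaces the paper's standard max-of-Gaussians bound, so no substantive difference remains.
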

The proof of the theorem can be found in Corollary \ref{cor:gaussian_robust} (Appendix \ref{app:gaussian_lb}) and we provide a brief sketch in Section \ref{sec:linfty_gauss}.
It is worth noting that the classification error $\sfrac{1}{2}$ in the lower bound is tight.
A classifier that always outputs a fixed prediction trivially achieves perfect robustness on one of the two classes and hence robust accuracy $\sfrac{1}{2}$.

Comparing Theorems \ref{thm:main_gauss_upper} and \ref{thm:main_gauss_lower}, we see that the sample complexity $n$ required for robust generalization is bounded as
\[
  \frac{c}{\log d} \; \leq \; \frac{n}{\eps^2 \sqrt{d}} \; \leq \; c' \; .
\]
Hence the lower bound is nearly tight in our regime of interest.
When the perturbation has constant $\ell_\infty$-norm, the sample complexity of robust generalization is larger than that of standard generalization by $\sqrt{d}$, i.e., \emph{polynomial} in the problem dimension.
This shows that for high-dimensional problems, adversarial robustness can provably require a significantly larger number of samples.

Finally, we remark that our lower bound applies also to a more restricted adversary.
As we outline in Sections \ref{sec:linfty_gauss}, the proof uses only a single adversarial perturbation per class.
As a result, the lower bound provides \emph{transferable} adversarial examples and applies to worst-case distribution shifts without a classifier-adaptive adversary.
We refer the reader to Section \ref{sec:discussion} for a more detailed discussion.

\subsection{The Bernoulli model}
\label{sec:contrib_bern}
As mentioned in the introduction, simpler datasets such as MNIST have recently seen significant progress in terms of $\ell_\infty$-robustness.
We now investigate a possible mechanism underlying these advances.
To this end, we study a second distributional model that highlights how the data distribution can significantly affect the achievable robustness.
The second data model is defined on the hypercube $\{\pm 1\}^d$, and the two classes are represented by opposite vertices of that hypercube.
When sampling a datapoint for a given class, we flip each bit of the corresponding class vertex with a certain probability.
This data model is inspired by the MNIST dataset because MNIST images are close to binary (many pixels are almost fully black or white).
\begin{definition}[Bernoulli model]
  Let $\thetastar \in \{\pm 1\}^d$ be the per-class mean vector and let $\tau > 0$ be the class bias parameter.
  Then the $(\thetastar, \tau)$-Bernoulli model is defined by the following distribution over $(x, y) \in \{\pm 1\}^d \times \{\pm 1\}$:
  First, draw a label $y \in \{\pm 1\}$ uniformly at random from its domain.
  Then sample the data point $x \in \{\pm 1\}^d$ by sampling each coordinate $x_i$ from the distribution
  \[
    x_i \; = \; \begin{cases}
      \phantom{-}y \cdot \thetastar_i & \text{with probability} \; \sfrac{1}{2} + \tau \\
      -y \cdot \thetastar_i & \text{with probability} \; \sfrac{1}{2} - \tau \\
    \end{cases} \; .
  \]
\end{definition}
As in the previous subsection, the model has one parameter for controlling the difficulty of learning.
A small value of $\tau$ makes the samples less correlated with their respective class vectors and hence leads to a harder classification problem.
Note that both the Gaussian and the Bernoulli model are defined by simple sub-Gaussian distributions.
Nevertheless, we will see that they differ significantly in terms of robust sample complexity.

\paragraph{Standard generalization.}
As in the Gaussian model, we first calibrate the distribution so that we can learn a classifier with good \emph{standard} accuracy from a single sample.\footnote{To be precise, the two distributions have comparable sample complexity for standard generalization in the regime where $\sigma \approx \tau^{-1}$.}
The following theorem is a direct consequence of the fact that bounded random variables exhibit sub-Gaussian concentration.
\begin{theorem}
  \label{thm:bernoulli_standard_upper}
	Let $(\vx, y)$ be drawn from a $(\thetastar, \tau)$-Bernoulli model with $\tau \, \geq \, c\cdot d^{-\sfrac{1}{4}}$ where $c$ is a universal constant.
	Let $\what \in\R^d$ be the vector $\what = y \cdot \vx$.
	Then with high probability, the linear classifier $f_{\what}$ has classification error at most 1\%.
\end{theorem}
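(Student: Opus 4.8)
The plan is to show that with high probability over the single training draw, the weighted mean $\what = y\vx$ is aligned closely enough with $\thetastar$ that the induced halfspace separates the two class-conditional distributions up to $1\%$ error; this mirrors the Gaussian analogue of Theorem~\ref{thm:gauss_standard_upper} but uses Hoeffding-type concentration for bounded variables in place of Gaussian concentration. The starting observation is that $\what$ is a noisy copy of $\thetastar$: since $y^2 = 1$, each coordinate satisfies $\what_i = \thetastar_i$ with probability $\sfrac12 + \tau$ and $\what_i = -\thetastar_i$ otherwise, independently across $i$. It is convenient to pass to the agreement variables $a_i \defeq \thetastar_i \what_i \in \pmset$, which are i.i.d.\ with $\prob[a_i = 1] = \sfrac12 + \tau$ and hence $\E[a_i] = 2\tau$. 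To measure the classification error we draw a fresh test point $(\vx', y')$ from the same model; by the invariance of its law under $(\vx', y') \mapsto (-\vx', -y')$ together with $\ip{\what, -\vx'} = -\ip{\what, \vx'}$, the two class-conditional error rates coincide, so we may condition on $y' = 1$. Writing $b_i \defeq \thetastar_i \vx'_i \in \pmset$, which is independent of the training sample and has $\E[b_i] = 2\tau$ given $y' = 1$, the classifier's margin telescopes to
\[
  \ip{\what, \vx'} \;=\; \sum_{i=1}^d \what_i \vx'_i \;=\; \sum_{i=1}^d \thetastar_i^2 \, a_i b_i \;=\; \sum_{i=1}^d a_i b_i \, ,
\]
so that a correct prediction on this point is exactly the event $\{\ip{\what, \vx'} > 0\}$.

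I would then handle the two sources of randomness in sequence. First, over the training draw, Hoeffding's inequality for the bounded i.i.d.\ variables $a_i$ gives $\sum_i a_i \ge \tau d$ with probability at least $1 - \exp(-\tau^2 d / 2)$; since $\tau \ge c\, d^{-\sfrac14}$ this failure probability is at most $\exp(-c^2\sqrt{d}/2)$ and is absorbed into the ``high probability'' guarantee. Call this good event $E$ and condition on any $\what$ realizing it. Over the test randomness the $b_i$ are then independent and bounded, with conditional mean $\E[\ip{\what, \vx'} \mid \what] = 2\tau \sum_i a_i \ge 2\tau^2 d$, so a second application of Hoeffding yields
\[
  \prob_{\vx'}\brackets*{\ip{\what, \vx'} \le 0 \,\mid\, \what} \;\le\; \exp\!\parens*{-\frac{\parens*{2\tau \sum_i a_i}^2}{2d}} \;\le\; \exp\parens*{-2\tau^4 d} \;\le\; \exp\parens*{-2c^4} \, .
\]

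Choosing the universal constant $c$ large enough that $\exp(-2c^4) \le 0.01$ makes the conditional error on the $y'=1$ class at most $1\%$; the symmetric argument controls the $y'=-1$ class, so on the event $E$ the total classification error of $f_{\what}$ is at most $1\%$, which is the claim. The only genuine subtlety is bookkeeping the two-layer randomness correctly: collapsing the training and test draws into the single product variables $a_i b_i$ (which have mean $4\tau^2$) would bound only the \emph{expected} error $\E[\beta(f_{\what})]$, giving a high-probability statement over the training sample merely through Markov's inequality and with much weaker probability; conditioning on $E$ first is what produces the exponentially small failure probability. Everything else is the routine Hoeffding computation above, and the threshold $\tau \gtrsim d^{-\sfrac14}$ arises exactly because the signal $2\tau^2 d$ must dominate the $\sqrt{d}$-scale fluctuations of the margin.
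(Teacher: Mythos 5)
Your proof is correct and follows essentially the same route as the paper's: a first Hoeffding bound over the training draw showing $\ip{\what,\thetastar} = \sum_i a_i \geq \tau d$ with probability $1-\exp(-\tau^2 d/2)$, then a second Hoeffding bound over the test draw giving conditional error at most $\exp(-2\tau^4 d)$, exactly matching the paper's Lemmas \ref{lem:bernoulli_ip1}--\ref{lem:bernoulli_classification} and Theorem \ref{thm:bernoulli_standard}. Your reparametrization via agreement variables $a_i b_i$ and the explicit symmetry reduction to $y'=1$ are only cosmetic differences from the paper's formulation in terms of $\ip{w,z}$ with $z = yx$.
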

To simplify the bound, we have set the error probability to be 1\% as in the Gaussian model.
We refer the reader to Corollary \ref{cor:bernoulli_single_sample} in Appendix \ref{app:bernoulli_upper} for the proof.

\paragraph{Robust generalization.}
Next, we investigate the sample complexity of robust generalization in our Bernoulli model.
For {\em linear} classifiers, a small robust classification error again requires a large number of samples:
\begin{theorem}
\label{thm:main_bernoulli_lower}
Let $g_n$ be a linear classifier learning algorithm, i.e., a function from $n$ samples to a linear classifier $f_n$.
Suppose that we choose $\thetastar$ uniformly at random from $\{ \pm1\}^d$ and draw $n$ samples from the $(\thetastar, \tau)$-Bernoulli model with $\tau = c_1 \cdot d^{-\sfrac{1}{4}}$.
Moreover, let $\eps < 3 \tau$ and $0 < \gamma < \sfrac{1}{2}$.
Then the expected $\ell_\infty^\eps$-robust classification error of $f_n$ is at least $\frac{1}{2} - \gamma$ if
\[
  n \; \leq \; c_2 \frac{\eps^2\gamma^2 d}{\log \sfrac{d}{\gamma}} \; .
\]
\end{theorem}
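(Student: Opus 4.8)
The plan is to show that no data-dependent linear classifier can beat the $\ell_\infty^\eps$ adversary, because the adversary forces the classifier to rely only on coordinates of $\thetastar$ that are estimated with high confidence, and too few such coordinates exist when $n$ is small. First I would reduce the robust error to a clean analytic form. For a homogeneous linear classifier $f_w(x)=\sign(\ip{w,x})$, the worst-case $\ell_\infty^\eps$ perturbation of $(x,y)$ subtracts exactly $\eps\norm{w}_1$ from $y\ip{w,x}$, so the $\ell_\infty^\eps$-robust error equals $\prob_{(x,y)}[\,y\ip{w,x}\le\eps\norm{w}_1\,]$. Since rescaling $w$ does not change $f_w$ I assume $\norm{w}_2=1$, and since a bias term only increases the average error over the two symmetric classes I take it to be $0$. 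Writing $Z=y\ip{w,x}$ and $t=\eps\norm{w}_1$, the goal is $\E_{\thetastar,\text{data}}\,\prob[Z\le t]\ge\frac12-\gamma$.

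Second, I would exploit the symmetry $\thetastar\mapsto-\thetastar$. Conditioning on the data fixes $w$, and the fresh test point is independent of the data given $\thetastar$. Because the Bernoulli model satisfies $Z_{-\thetastar}\overset{d}{=}-Z_{\thetastar}$ for fixed $w$, the two robust accuracies obey $\mathrm{RobAcc}(\thetastar)+\mathrm{RobAcc}(-\thetastar)\le1$. Pairing $\thetastar$ with $-\thetastar$ under the posterior then gives $\E[\mathrm{RobErr}]\ge\frac12-\frac12\,\E_{\thetastar,\text{data}}[g]$, where $g(\thetastar)=\prob[Z>t]-\prob[Z<-t]$ is the signed tail asymmetry. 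A clean exact step is that, since the posterior over $\thetastar$ factorizes into independent coordinates with mean $\thetahat_j=\E[\thetastar_j\mid\text{data}]$, averaging $g$ over the posterior yields $\E_{\thetastar\mid\text{data}}[g]=\prob_{\chi\sim\widetilde P}[\ip{w,\chi}>t]-\prob_{\chi\sim\widetilde P}[\ip{w,\chi}<-t]$, where $\widetilde P$ is the product measure on $\pmset^d$ whose $j$-th coordinate has mean $2\tau\thetahat_j$. Everything thus reduces to bounding a signed tail difference of a sum of independent bounded variables whose coordinate means are the rescaled posterior means.

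Third, and this is the crux, I would bound this signed tail difference using the adversary threshold $t=\eps\norm{w}_1$. The key phenomenon is that the two tails nearly cancel: their difference is governed by the mean shift $2\tau\ip{w,\thetahat}$ damped by the difficulty of pushing the sum past $\pm t$. Because the classifier pays $\eps$ per unit of $\ell_1$ weight, only coordinates with $2\tau\abs{\thetahat_j}\gtrsim\eps$, i.e.\ $\abs{\thetahat_j}\gtrsim\eps/(2\tau)$, can profitably contribute; the usable signal is the soft-thresholded vector with entries $(2\tau\abs{\thetahat_j}-\eps)_+$, and I expect $\E_{\thetastar,\text{data}}[g]$ to be controlled by $\E\,\norm{(2\tau\abs{\thetahat}-\eps)_+}_2$. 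Making this rigorous while retaining the cancellation between the two tails is the delicate part: a naive bound by total variation or by $\E\abs{g}$ discards the threshold entirely and is too weak by precisely the factor $\eps^2 d/\log d$ on which the theorem hinges (it corresponds to the $\eps=0$ regime).

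Finally I would control the estimation side. Given the labels, $\thetahat_j$ is a monotone function of $S_j=\sum_i y_i x_{i,j}$, a sum of $n$ independent $\pmset$-variables with mean $2\tau\thetastar_j$. In the regime $\eps<3\tau$ and $\tau=c_1 d^{-\sfrac14}$ one has $\tau\sqrt n\ll1$, so $\abs{S_j}$ is sub-Gaussian at scale $\sqrt n$; the event $\abs{\thetahat_j}>\eps/(2\tau)$ requires $\abs{S_j}\gtrsim\eps/\tau^2$, which by Hoeffding has probability at most $\exp(-\Omega(\eps^2/(\tau^4 n)))=\exp(-\Omega(\eps^2 d/n))$. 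For $n\le c_2\,\eps^2\gamma^2 d/\log(\sfrac d\gamma)$ this is at most $(\sfrac d\gamma)^{-\Omega(1/\gamma^2)}$ per coordinate, so summing over the $d$ coordinates keeps $\E\,\norm{(2\tau\abs{\thetahat}-\eps)_+}_2\le2\gamma$ for a suitable $c_2$ (the $\eps\in[2\tau,3\tau)$ case is immediate, since then the threshold $\eps/(2\tau)\ge1$ and no coordinate contributes). Combining this with the symmetrization bound gives $\E[\mathrm{RobErr}]\ge\frac12-\gamma$. The main obstacle is the third step: proving the inequality that bounds the signed tail asymmetry $g$ by the soft-thresholded posterior signal, since this is exactly where the adversary strength $\eps$ must enter and where the near-cancellation of the two tails must be preserved.
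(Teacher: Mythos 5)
Your steps 1, 2, and 4 are sound, and they partially overlap with the paper's own proof (the duality identity $t=\eps\norm{w}_1$ and the per-coordinate posterior log-odds bound are exactly Lemma~\ref{lem:bb_one_d} and its use in Theorem~\ref{thm:bern_lin_lb}). But step 3 --- which you yourself flag as the crux --- is a genuine gap, and the inequality you propose to fill it with is false. Concretely, the claim that the signed tail asymmetry $g$ is controlled by $\norm{(2\tau\abs{\thetahat}-\eps)_+}_2$ cannot hold: take $2\tau\abs{\thetahat_j}=\eps-\delta$ for every $j$ (admissible whenever $\eps<2\tau$, so $\abs{\thetahat_j}<1$) and $w_j=\sign(\thetahat_j)/\sqrt{d}$. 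The soft-thresholded vector is identically zero, yet under $\widetilde{P}$ the margin $\ip{w,\chi}$ has mean $\mu=(\eps-\delta)\sqrt{d}$, standard deviation $\approx 1$, and threshold $t=\eps\sqrt{d}$, so by the CLT $g\approx\Phi(\mu-t)-\Phi(-\mu-t)\to\sfrac{1}{2}-\Phi(-2\eps\sqrt{d})\approx\sfrac{1}{2}$ as $\delta\to 0$, since $\eps\sqrt{d}=\Theta(d^{\sfrac{1}{4}})$ in the theorem's regime. This configuration cannot arise from few samples, but that is precisely the point: any correct bound on $g$ must couple the distribution of $\thetahat$ with anti-concentration of $\ip{w,\chi}$ near $t$ (uniformly over all $w$, including sparse and atomic ones), and that coupling is the content you have not supplied. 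A further warning sign: by your own step-4 estimate, with high probability \emph{every} coordinate satisfies $2\tau\abs{\thetahat_j}\ll\eps$ under the theorem's hypothesis, so your claimed inequality would yield $g\approx 0$ and hence error $\geq\sfrac{1}{2}-o(1)$ independently of $\gamma$ --- it proves too much.

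The paper sidesteps the two-tail cancellation entirely; this is the idea your outline is missing. After conditioning on the event (probability $1-\sfrac{\gamma}{2}$) that $\abs{\E[\thetastar_i\mid S]}\leq 15\tau\sqrt{2n\log(4d/\gamma)}$ for all $i$, one gets $\E[\ip{w,yx}\mid S]\leq a_n\norm{w}_1$ with $a_n=30\tau^2\sqrt{2n\log(4d/\gamma)}$. Each $yx_iw_i$ is then stochastically dominated by $E_i\abs{w_i}$, where $E_i=A_i+B_i$ with $A_i\in\{0,1\}$ of mean $a_n$ and $B_i$ symmetric conditioned on $A_i$. Markov's inequality applied to the nonnegative sum $\sum_i\abs{w_i}A_i$, together with $\prob[\sum_i\abs{w_i}B_i\leq 0\mid A]\geq\sfrac{1}{2}$, gives $\ip{w,yx}\leq\frac{a_n}{\gamma}\norm{w}_1<\eps\norm{w}_1$ with probability at least $\frac{1-\gamma}{2}$, and $(1-\sfrac{\gamma}{2})\cdot\frac{1-\gamma}{2}\geq\frac{1}{2}-\gamma$ finishes the proof. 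In other words, the $\sfrac{1}{2}$ in the lower bound comes from the symmetric part of a single dominating sum (a one-sided Markov-plus-symmetrization argument applied to the \emph{unperturbed} margin), not from comparing the upper and lower tails of the test-point margin; without this device, or a genuine anti-concentration theorem replacing your step 3, your plan does not go through.
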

We provide a proof sketch in Section \ref{sec:bernlb_sketch} and the full proof in Appendix \ref{app:bern_lb}.
At first, the lower bound for linear classifiers might suggest that $\ell_\infty$-robustness requires an inherently larger sample complexity here as well.
However, in contrast to the Gaussian model, non-linear classifiers can achieve a significantly improved robustness. 
In particular, consider the following thresholding operation $T : \R^d \rightarrow \R^d$ which is defined element-wise as
\[
  T(x)_i \; = \; \begin{cases} +1 & \quad \text{if } x_i \geq 0 \\
                               -1 & \quad \text{otherwise}
\end{cases} .
\]
It is easy to see that for $\eps < 1$, the thresholding operator undoes the action of any $\ell_\infty$-bounded adversary, i.e., we have $T(\perturbB_\infty^\eps(x)) = \{ x \}$ for any $x \in \{\pm 1\}^d$.
Hence we can combine the thresholding operator with the classifier learned from a single sample to get the following upper bound.
\begin{theorem}
  \label{thm:main_bernoulli_upper}
	Let $(\vx, y)$ be drawn from a $(\thetastar, \tau)$-Bernoulli model with $\tau \, \geq \, c\cdot d^{-\sfrac{1}{4}}$ where $c$ is a universal constant.
	Let $\what \in\R^d$ be the vector $\what = y \vx$.
	Then with high probability, the classifier $f_{\what} \circ T$ has $\ell_\infty^\eps$-robust classification error at most 1\% for any $\eps < 1$.
\end{theorem}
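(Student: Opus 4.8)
The plan is to exploit the thresholding observation stated just before the theorem: for any $x \in \pmset^d$ and any $\eps < 1$, every perturbation $x' \in \perturbB_\infty^\eps(x)$ satisfies $T(x') = x$. Granting this, the composed classifier $f_{\what} \circ T$ is \emph{invariant} to $\ell_\infty^\eps$-perturbations on the support of the Bernoulli model, so its robust error collapses to the standard error of $f_{\what}$, which we then control via Theorem \ref{thm:bernoulli_standard_upper}.

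First I would verify the thresholding claim coordinatewise. Fix $x \in \pmset^d$ and $x' \in \perturbB_\infty^\eps(x)$, so that $\norm{x' - x}_\infty \leq \eps < 1$. If $x_i = +1$ then $x'_i \geq 1 - \eps > 0$, hence $T(x')_i = +1 = x_i$; if $x_i = -1$ then $x'_i \leq -1 + \eps < 0$, hence $T(x')_i = -1 = x_i$. The strict inequality $\eps < 1$ is exactly what keeps each perturbed coordinate on the correct side of the threshold. Thus $T(x') = x$ for every admissible $x'$, which is the stated identity $T(\perturbB_\infty^\eps(x)) = \{x\}$.

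Next I would substitute this into Definition \ref{def:robusterror}. For a sample $(x,y)$ from the model and any $x' \in \perturbB_\infty^\eps(x)$, we have $(f_{\what} \circ T)(x') = f_{\what}(T(x')) = f_{\what}(x)$, which is independent of the choice of $x'$. Hence the event ``$\exists\, x' \in \perturbB_\infty^\eps(x) : (f_{\what}\circ T)(x') \ne y$'' coincides with the event ``$f_{\what}(x) \ne y$'', and therefore
\[
  \prob_{(x,y)}\brackets*{\exists\, x' \in \perturbB_\infty^\eps(x) : (f_{\what}\circ T)(x') \ne y} \;=\; \prob_{(x,y)}\brackets*{f_{\what}(x) \ne y} .
\]
In other words, the $\ell_\infty^\eps$-robust error of $f_{\what} \circ T$ equals the non-robust classification error of $f_{\what}$.

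Finally I would invoke Theorem \ref{thm:bernoulli_standard_upper}: under the same hypothesis $\tau \ge c \cdot d^{-\sfrac{1}{4}}$, with $\what = y \vx$ formed from a single sample, the right-hand side above is at most 1\% with high probability over the draw of that sample. Combining this with the display yields the claimed bound. I do not expect a genuine obstacle here — all the content sits in the thresholding identity — and the only point requiring care is keeping the two sources of randomness distinct: the ``with high probability'' quantifies over the single training sample used to build $\what$, whereas the probability inside the error is over a fresh test draw. Since $T$ removes any dependence on the test-time perturbation, the reduction to Theorem \ref{thm:bernoulli_standard_upper} is exact rather than merely approximate.
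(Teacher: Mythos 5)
Your proposal is correct and is exactly the argument the paper uses: the coordinatewise verification that $T(\perturbB_\infty^\eps(x)) = \{x\}$ for $\eps < 1$ reduces the $\ell_\infty^\eps$-robust error of $f_{\what} \circ T$ to the standard error of $f_{\what}$, which is then bounded by the single-sample result (Theorem \ref{thm:bernoulli_standard_upper}, proved via Corollary \ref{cor:bernoulli_single_sample}). Your added care in distinguishing the randomness of the training sample from that of the fresh test point, and in noting that positive rescaling of $\what$ does not affect $f_{\what}$, matches the paper's treatment.
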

This theorem shows a stark contrast to the Gaussian case.
Although both models have similar sample complexity for standard generalization, there is a $\sqrt{d}$ gap between the $\ell_\infty$-robust sample complexity for the Bernoulli and Gaussian models.
This discrepancy provides evidence that robust generalization requires a more nuanced understanding of the data distribution than standard generalization.

In isolation, the thresholding step might seem specific to the Bernoulli model studied here.
However, our experiments in Section~\ref{sec:experiments} show that an explicit thresholding layer also significantly improves the sample complexity of training a robust neural network on MNIST.
We conjecture that the effectiveness of thresholding is behind many of the successful defenses against adversarial examples on MNIST (for instance, see Appendix C in \cite{madry2017towards}).

\section{Lower Bounds for the Gaussian Model}
\label{sec:linfty_gauss}
Recall our main theoretical result: In the Gaussian model, no algorithm can produce a robust classifier unless it has seen a large number of samples.
In particular, we give a nearly tight trade-off between the number of samples and the $\ell_{\infty}$-robustness of the classifier.
The following theorem is the technical core of this lower bound.
Combined with standard bounds on the $\ell_\infty$-norm of a random Gaussian vector, it gives Theorem \ref{thm:main_gauss_lower} from the previous section.
\begin{restatable}{theorem}{gaussianlbmain}
\label{thm:gauss_linf_lower}
Let $g_n$ be any learning algorithm, i.e., a function from $n$ samples in $\R^d \times \{\pm 1\}$ to a binary classifier $f_n$.
  Moreover, let $\sigma > 0$, let $\eps \geq 0$, and let $\theta \in \R^d$ be drawn from $\normal(0, \matI)$.
  We also draw $n$ samples from the $(\theta, \sigma)$-Gaussian model.
  Then the expected $\ell_\infty^\eps$-robust classification error of $f_n$ is at least
  \[
           \frac{1}{2} \probop_{v \sim \normal(0, \matI)} \brackets*{ \sqrt{\frac{n}{\sigma^2 + n}} \norm{v}_\infty \leq \eps   } \; .
  \]
\end{restatable}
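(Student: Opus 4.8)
The plan is to treat this as a Bayesian lower bound. Since $\theta \sim \normal(0,\matI)$ and then both the training samples and the fresh test point are generated from the $(\theta,\sigma)$-Gaussian model, the expected robust error of $f_n = g_n(\text{data})$ equals the Bayes robust risk, and for a fixed realization of the training data the classifier $f_n$ is a deterministic function. First I would condition on the training data and integrate out $\theta$ via its posterior. Writing $\bar{\vz} = \frac1n\sum_{i=1}^n y_i \vx_i$ and noting that $y_i\vx_i \sim \normal(\theta,\sigma^2\matI)$ regardless of $y_i$, Gaussian conjugacy gives the posterior $\theta \mid \text{data} \sim \normal(\hat\mu,\, s^2\matI)$ with $\hat\mu = \frac{n}{\sigma^2+n}\bar{\vz}$ and $s^2 = \frac{\sigma^2}{\sigma^2+n}$. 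Because each class contributes to the robust error through the probability that a fresh test point lands in a \emph{fixed} set (the $\eps$-expansion of one side of the decision region), integrating the test point over the posterior simply replaces the class-conditional law $\normal(\pm\theta,\sigma^2\matI)$ by the predictive law $\normal(\pm\hat\mu,\nu\matI)$, where $\nu = s^2+\sigma^2$. Thus, conditioned on the data, the expected robust error of any classifier with positive region $A \subseteq \R^d$ equals
\[
\tfrac12\,\probop_{x\sim\normal(\hat\mu,\nu\matI)}\brackets*{x \in (A^c)^{+\eps}} \;+\; \tfrac12\,\probop_{x\sim\normal(-\hat\mu,\nu\matI)}\brackets*{x \in A^{+\eps}},
\]
where $S^{+\eps}$ denotes the set of points within $\ell_\infty$-distance $\eps$ of $S$.

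The heart of the argument is a coupling/midpoint observation that lower bounds this conditional error by $\tfrac12$ whenever $\norm{\hat\mu}_\infty \le \eps$. I would couple a class-$+1$ and a class-$-1$ predictive sample through a shared Gaussian: set $x_+ = \hat\mu + \sqrt{\nu}\, h$ and $x_- = -\hat\mu + \sqrt{\nu}\, h$ for $h \sim \normal(0,\matI)$, so that their midpoint is $m = \sqrt{\nu}\, h$ and $\norm{x_\pm - m}_\infty = \norm{\hat\mu}_\infty$. If $\norm{\hat\mu}_\infty \le \eps$, then $m \in \perturbB_\infty^\eps(x_+)\cap\perturbB_\infty^\eps(x_-)$, so depending on whether $m \in A$ or $m \in A^c$ the point $m$ is a misclassifying perturbation of either $x_-$ or $x_+$. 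Hence for every $h$ at least one of the events $x_+\in(A^c)^{+\eps}$ and $x_-\in A^{+\eps}$ holds, and taking expectation over $h$ the two class-error probabilities sum to at least $1$, i.e.\ the conditional robust error is at least $\tfrac12$. Note that the exact value of $\nu$ is irrelevant here: the coupling only uses that both predictive laws share the same covariance and are centered symmetrically at $\pm\hat\mu$.

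To finish, I would compute the marginal law of the posterior mean $\hat\mu$. Since $\bar{\vz} \mid \theta \sim \normal(\theta,\tfrac{\sigma^2}{n}\matI)$ and $\theta\sim\normal(0,\matI)$, we obtain $\bar{\vz} \sim \normal(0,\tfrac{n+\sigma^2}{n}\matI)$ and therefore $\hat\mu = \tfrac{n}{\sigma^2+n}\bar{\vz} \sim \normal\parens*{0,\tfrac{n}{\sigma^2+n}\matI}$; equivalently $\hat\mu \overset{d}{=} \sqrt{\tfrac{n}{\sigma^2+n}}\,v$ for $v\sim\normal(0,\matI)$. Lower bounding the conditional error by $0$ when $\norm{\hat\mu}_\infty > \eps$ and by $\tfrac12$ otherwise, and then taking expectation over the training data, yields
\[
\E[\text{robust error}] \;\ge\; \tfrac12\,\probop_{\text{data}}\brackets*{\norm{\hat\mu}_\infty \le \eps} \;=\; \tfrac12\,\probop_{v\sim\normal(0,\matI)}\brackets*{\sqrt{\tfrac{n}{\sigma^2+n}}\,\norm{v}_\infty \le \eps},
\]
which is exactly the claimed bound.

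The main conceptual obstacle is the geometric coupling step: one must realize that a single shared perturbation target (the midpoint $m$) simultaneously threatens both classes, which is what converts a smallness condition on $\hat\mu$ into an unavoidable $\tfrac12$ error for \emph{every} decision region $A$. The other point requiring care is the reduction to the predictive distribution, which is valid precisely because, conditioned on the data, $f_n$ and hence the sets $A^{+\eps}$ and $(A^c)^{+\eps}$ are fixed, so integrating the class-conditional indicator over the posterior of $\theta$ is merely a change to the marginal law of the test point. By contrast, the variance bookkeeping for the posterior and for the marginal of $\hat\mu$ is routine Gaussian conjugacy.
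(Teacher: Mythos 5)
Your proof is correct and takes essentially the same route as the paper: condition on the training data, use Gaussian conjugacy to get the posterior $\normal(\hat\mu, s^2\matI)$ and the predictive laws $\normal(\pm\hat\mu,\nu\matI)$, show the conditional robust error is at least $\tfrac12$ whenever $\norm{\hat\mu}_\infty \le \eps$, and finish with the marginal law $\hat\mu \sim \normal(0,\tfrac{n}{\sigma^2+n}\matI)$. Your midpoint-coupling step is a pointwise repackaging of the paper's argument, which instead shifts each predictive measure by $\mp\hat\mu$ onto the common centered Gaussian $\mu_0 = \normal(0,\nu\matI)$ and uses $\mu_0(A_+)+\mu_0(A_-)=1$; the shared variable $m=\sqrt{\nu}\,h$ in your coupling is exactly that centered Gaussian, so the two arguments coincide.
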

Several remarks are in order. 
Since we lower bound the expected robust classification error for a distribution over the model parameters $\thetastar$, our result implies a lower bound on the minimax robust classification  error (i.e., minimum over learning algorithms, maximum over unknown parameters $\thetastar$).
Second, while we refer to the learning procedure as an algorithm, our lower bounds are information theoretic and hold irrespective of the computational power of this procedure.

Moreover, our proof shows that given the $n$ samples, there is a {\em single} adversarial perturbation that (a) applies to all learning algorithms, and (b) leads to at least a constant fraction of fresh samples being misclassified.
In other words, the same perturbation is transferable across examples as well as across architectures and learning procedures.
Hence our simple Gaussian data model already exhibits the transferability phenomenon, which has recently received significant attention in the deep learning literature (e.g., \cite{SzegedyZSBEGF13,TramerPGBM17,dezfooli2017universal}).

We defer a full proof of the theorem to Section \ref{app:gaussian_lb} of the supplementary material.
Here, we sketch the main ideas of the proof.

We fix an algorithm $g_n$ and let $S_n$ denote the set of $n$ samples given to the algorithm. We are interested in the expected robust classification error, which can be formalized as
\begin{align*}
  \E_{\theta^*} \E_{S_n}  \E_{y \sim \pm 1} \Pr_{x \sim \mathcal{N}(y\theta^*, \sigma^2 I)} \brackets*{ \exists x' \in \perturbB_{\infty}^\eps(x) : f_n(x') \neq y} \; .
\end{align*}
We swap the two outer expectations so the quantity of interest becomes
\begin{align*}
  \E_{S_n} \E_{\theta^*}  \E_{y \sim \pm 1} \Pr_{x \sim \mathcal{N}(y\theta^*, \sigma^2 I)} \brackets*{ \exists x' \in \perturbB_{\infty}^\eps(x) : f_n(x') \neq y } \; .
\end{align*}

Given the samples $S_n$, the posterior on $\theta^*$ is a Gaussian distribution with parameters defined by simple statistics of $S_n$ (the sample mean and the number of samples).
Since the new data point $x$ (to be classified) is itself drawn from a Gaussian distribution with mean $\theta^*$, the posterior distribution $\mu_{+}$ on the positive examples $x \sim \mathcal{N}(\theta^*, \sigma^2)$ is another Gaussian with a certain mean $\bar{z}$ and standard deviation $\sigma'$.
Similarly, the posterior distribution $\mu_{-}$ on the negative examples is a Gaussian with mean $-\bar{z}$ and the same  standard deviation $\sigma'$.
At a high level, we will now argue that the adversary can make the two posterior distributions $\mu_{-}$ and $\mu_{+}$ similar enough so that the problem becomes inherently noisy, preventing any classifier from achieving a high accuracy.

To this end, define the classification sets of $f_n$ as $A_{+} =  \{x \, | \, f_n(x) = +1\}$ and $A_{-} = \R^d \setminus A_{+}$.
This allows us to write the expected robust classification error as
\begin{align*}
  \E_{S_n} \E_{\theta^*} \parens*{\frac{1}{2} \Pr_{\mu_+}\brackets*{\perturbB_\infty^\eps(A_-)} + \frac{1}{2} \Pr_{\mu_-}\brackets*{\perturbB_\infty^\eps(A_+)}} \; .
\end{align*}

We now lower bound the inner probabilities by considering the fixed perturbation $\Delta=\vzbar$.
Note that a point $x \sim \mu_+$ is certainly misclassified if we have $\norm{\Delta}_\infty \leq \eps$ and $x - \Delta \in A_{-}$.
Thus the expected misclassification rate is at least $\mu_+(\{x \, | \,  x - \Delta \in A_{-}\}) = \mu_+(A_{-} + \Delta )$.\footnote{For a set $A$ and a vector $v$, we use the notation $A+v$ to denote the set $\{x + v : x \in A\}$.}
But since $\mu_+$ is simply a translated version of $N(0, {\sigma'}^2)$, this implies that
\begin{align*}
\Pr_{\mu_+}\brackets*{\perturbB_\infty^\eps(A_-)} \; \geq \; \mu_0(A_{-} + \Delta - \bar{z}) \; = \; \mu_0(A_-)
\end{align*}
where the distribution $\mu_0$ is the centered Gaussian $\mu_0 = \mathcal{N}(0, {\sigma'}^2)$.
Similarly,
\begin{align*}
\Pr_{\mu_-}\brackets*{\perturbB_\infty^\eps(A_+)} \; \geq \; \mu_0(A_{+} - \Delta + \bar{z}) \; = \; \mu_0(A_+).
\end{align*}

Since $\mu_0(A_-) + \mu_0(A_+) = 1$, this implies that the adversarial perturbation $-\bar{z}$ misclassifies in expectation half of the positively labeled examples, which completes the proof.
As mentioned above, the crucial step is that the posteriors $\mu_+$ and $\mu_-$ are similar enough so that we can shift both to the origin while still controlling the measure of the sets $A_-$ and $A_+$.

\section{Lower Bounds for the Bernoulli Model}
\label{sec:bernlb_sketch}
For the Bernoulli model, our lower bound applies only to \emph{linear} classifiers.
As pointed out in Section \ref{sec:contrib_bern}, non-linear classifiers do not suffer an increase in sample complexity in this data model.
We now give a high-level overview of our proof that the sample complexity for learning a linear classifier must increase as
\begin{equation}
  \label{eq:bernsketch}
  n \; \geq \; c \, \frac{\eps^2 d}{\log d} \; .
\end{equation}

At first, this lower bound may look stronger than in the Gaussian case, where Theorem \ref{thm:main_gauss_lower} established a lower bound of the form $\frac{\eps^2 \sqrt{d}}{\log d}$, i.e., with only a square root dependence on $d$.
However, it is important to note that the relevant $\ell_{\infty}$-robustness scale for linear classifiers in the Bernoulli model is on the order of $\Theta(\tau)$, whereas non-linear classifiers can achieve robustness for noise level $\eps$ up to $1$.
In particular, we prove that no linear classifier can achieve small $\ell_\infty$-robust classification error for $\eps > 3 \tau$ (see Lemma \ref{lem:linear_besteps} in Appendix \ref{app:bern_lb} for details).
Recall that we focus on the $\tau = \Theta(d^{-\frac 1 4})$ regime.
In this case, the lower bound in Equation \ref{eq:bernsketch} is on the order of $\sqrt{d}$ samples, which is comparable to the (nearly) tight bound for the Gaussian case.
This is no coincidence: for our noise parameters $\sigma \approx \tau^{-1} \approx d^{\frac 1 4}$, one can show that approximately $\sigma^2 = \sqrt{d}$ samples suffice to recover $\theta^*$ to sufficiently good accuracy.

The point of start of our proof of the lower bound for linear classifiers is the following observation. 
For an example $(x,y)$, a linear classifier with parameter vector $w$ robustly classifies the point $x$ if and only if
\[
  \inf_{\Delta : \norm{\Delta}_\infty \leq \eps} \ip{yw, x+\Delta} \; > \; 0 \; ,
\]
which is equivalent to
\[
  \ip{yw, x}  \; > \; \sup_{\Delta : \norm{\Delta}_\infty \leq \eps} \ip{yw, \Delta} \; .
\]
By the definition of dual norms, the supremum on the right hand size is thus equal to $\eps \norm{yw}_{1} = \eps \norm{w}_1$.  

The learning algorithm infers the parameter vector $w$ from a limited number of samples.
Since these samples are noisy copies of the unknown parameters $\thetastar$, the algorithm cannot be too certain of any single bit in $\theta^*$ (recall that  we draw $\theta^*$ uniformly from the hypercube).
We formalize this intuition in Lemma \ref{lem:bb_one_d} (Appendix \ref{app:bern_lb}) as a bound on the log odds given a sample $S$:
\[
  \log \frac{\Pr[\theta =+1 \mid S]}{\Pr[\theta =-1 \mid S]} \; .
\]
Given such a bound, we can analyze the uncertainty in the estimate $w$ by establishing an upper bound on the posterior $\abs*{\E[\thetastar_i | S]}$ for each $i \in [d]$.
This in turn allow us to bound $\E\brackets*{\ip{w, \thetastar} | S}$.
With control over this expectation, we can then relate the prediction $\ip{w, x}$ and the $\ell_1$-norm $\norm{w}_1$ via a tail bound argument.
We defer the details to Appendix \ref{app:bern_lb}.

\section{Experiments}
\label{sec:experiments}
We complement our theoretical results by performing experiments on multiple common datasets.

\subsection{Experimental setup}
We consider standard convolutional neural networks and train models on datasets of varying complexity.
Specifically, we study the MNIST~\cite{lecun1998mnist}, CIFAR-10~\cite{krizhevsky2009learning}, and SVHN~\cite{netzer2011reading} datasets.
The latter is particularly well-suited for our analysis since it contains a large number of training images (more than 600,000), allowing us to study adversarially robust generalization in the large dataset regime.

\paragraph{Model architecture.}
For MNIST, we use the simple convolution architecture obtained from the TensorFlow tutorial~\cite{TFtutorial}.
In order to prevent the model from overfitting when trained on small data samples, we regularize the model by adding weight decay with parameter $0.5$ to the training loss.
For CIFAR-10, we consider a standard ResNet model~\cite{ResnetPaper}.
It has 4 groups of residual layers with filter sizes (16, 16, 32, 64) and 5 residual units each.
On SVHN, we also trained a network of larger capacity (filter sizes of $(16, 64, 128, 256)$ instead of $(16, 16, 32, 64)$) in order to perform well on the harder problems with larger adversarial perturbations.
All of our models achieve close to state-of-the-art performance on the respective benchmark.

\paragraph{Robust optimization.}
We perform robust optimization to train our classifiers.
In particular, we train against a projected gradient descent (PGD) adversary, starting from a random initial perturbation of the training datapoint (see \cite{madry2017towards} for more details).
We consider adversarial perturbations in  $\ell_\infty$ norm, performing PGD updates of the form
\[
  x_{t+1} = \Pi_{\perturbB^\eps_\infty(x_0)}\left(
                x_t + \lambda \cdot \sign(\nabla\loss (x_t))
            \right)
\]
for some step size $\lambda$.
Here, $\loss$ denotes the loss of the model, while $\Pi_{\perturbB^r_\infty(x)}(z)$ corresponds to projecting $z$ onto the $\ell_\infty$ ball of radius $r$ around $x$.
On MNIST, we perform $20$ steps of PGD, while on CIFAR-10 and SVHN we perform $10$ steps.
We evaluate all networks against a $20$-step PGD adversary.
We choose the PGD step size to be $2.5\cdot \eps/k$, where $\eps$ denotes the maximal allowed perturbation and $k$ is the total number of steps.
This allows PGD to reach the boundary of the optimization region within $\frac{k}{2.5}$ steps from any starting point.

\subsection{Empirical sample complexity evaluation}
\label{sec:sample_complexity}
We study how the generalization performance of adversarially robust networks varies with the size of the training dataset.
To do so, we train networks with a specific $\ell_\infty$ adversary (for some fixed $\eps_{train}$) while reducing the size of the training set.
The training subsets are produced by randomly sub-sampling the complete dataset in a class-balanced fashion.
When increasing the number of samples, we ensure that each dataset is a superset of the previous one.

We then evaluate the robustness of each trained network to perturbations of varying magnitude ($\eps_{test}$).
For each choice of training set size $N$ and fixed attack $\eps_{test}$, we select the best performance achieved across all hyperparameters settings (training perturbations $\eps_{train}$ and model size).
On all three datasets, we observed that the best natural accuracy is usually achieved for the naturally trained network, while the best adversarial accuracy for almost all values of $\eps_{test}$ was achieved when training with the largest $\eps_{train}$.
We maximize over the hyperparameter settings since we are not interested in the performance of a specific model, but rather in the inherent generalization properties of the dataset independently of the classifier used.
The results of these experiments are shown in Figure~\ref{fig:sample_complexity} for each dataset.

The plots clearly demonstrate the need for more data to achieve adversarially robust generalization.
For any fixed test set accuracy, the number of samples needed is significantly higher for robust generalization.
In the SVHN experiments (where we have sufficient training samples to observe plateauing behavior), the natural accuracy reaches its maximum with significantly fewer samples than the adversarial accuracy.
We report more details of our experiments in Section~\ref{sec:additional_experiments} of the supplementary material.

\begin{figure*}[!t]
	\vskip 0.2in
	\centering
	\includegraphics[width=1.0\linewidth] {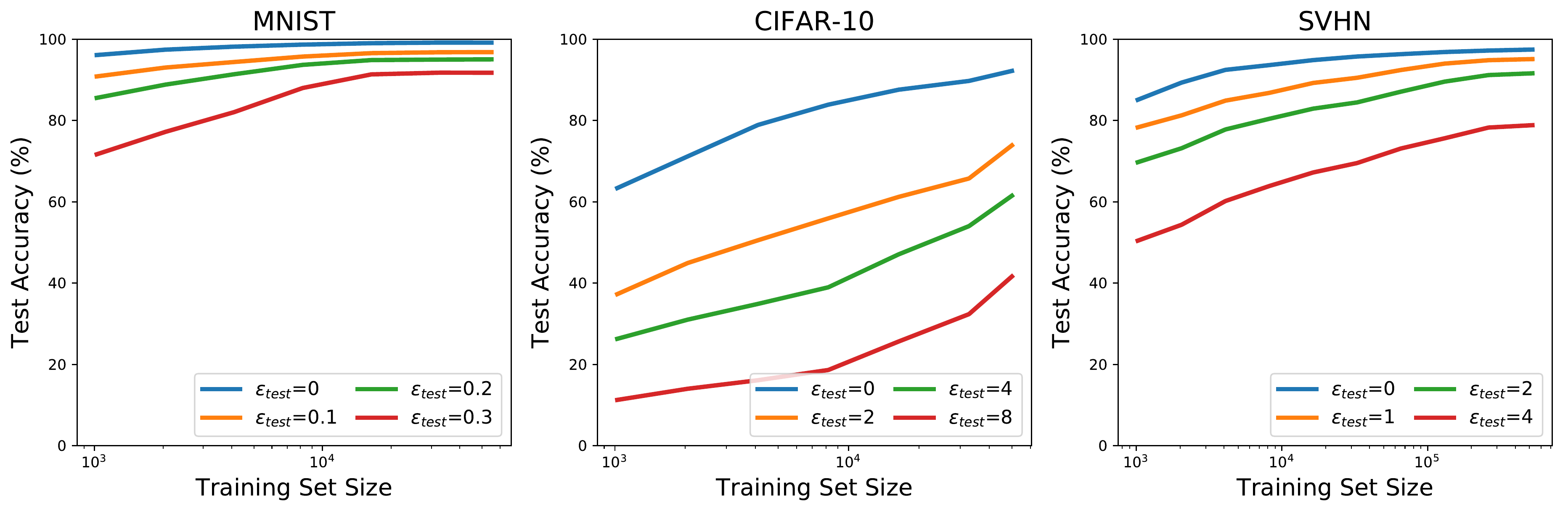}
	\caption{Adversarially robust generalization performance as a function of training data size for $\ell_\infty$ adversaries on the MNIST, CIFAR-10 and SVHN datasets.
    For each choice of training set size and $\eps_{test}$, we plot the best performance achieved over $\eps_{train}$ and network capacity.
  This clearly shows that achieving a certain level of adversarially robust generalization requires significantly more samples than achieving the same level of standard generalization.}
	\label{fig:sample_complexity}
	\vskip -0.2in
\end{figure*} 

\subsection{Thresholding experiments}
Motivated by our theoretical study of the Bernoulli model, we investigate whether thresholding can also improve the sample complexity of robust generalization against an $\ell_\infty$ adversary on a real dataset.
MNIST is a natural candidate here since the images are nearly black-and-white and hence lie close to vertices of a hypercube (as in the Bernoulli model).
This is further motivated by experimental evidence indicating that adversarially robust networks on MNIST learn such thresholding filters when trained adversarially \cite{madry2017towards}.

We repeat the sample complexity experiments performed in Section~\ref{sec:sample_complexity} with networks where thresholding filters are explicitly encoded in the model.
Here, we replace the first convolutional layer with a fixed thresholding layer consisting of two channels, $ReLU(x - \eps_{train})$ and $ReLU(x - (1 - \eps_{train}))$, where $x$ is the input image.
Results from networks trained with this thresholding layer are shown in Figure~\ref{fig:thresholding}.
For naturally trained networks, we use a value of $\eps=0.1$ for the thresholding filters, whereas for adversarially trained networks we set $\eps = \eps_{train}$.
For each data subset size and test perturbation $\eps_{test}$, we plot the best test accuracy achieved over networks trained with different thresholding filters, i.e., different values of $\eps$.
We separately show the effect of explicit thresholding in such networks when they are trained naturally or adversarially using PGD.
As predicted by our theory, the networks achieve good adversarially robust generalization with significantly fewer samples when thresholding filters are added. Further, note that adding a simple thresholding layer directly yields nearly state-of-the-art robustness against moderately strong adversaries ($\eps = 0.1$), without any other modifications to the model architecture or training algorithm.
It is also worth noting that the thresholding filters could have been learned by the original network architecture, and that this modification only decreases the capacity of the model.
Our findings emphasize network architecture as a crucial factor for learning adversarially robust networks from a limited number of samples.

We also experimented with thresholding filters on the CIFAR10 dataset, but did not observe any significant difference from the standard architecture.
This agrees with our theoretical understanding that thresholding helps primarily in the case of (approximately) binary datasets.

\begin{figure*}[!t]
	\vskip 0.2in
	\begin{center}
		\centerline{\includegraphics[width=1.0\linewidth]{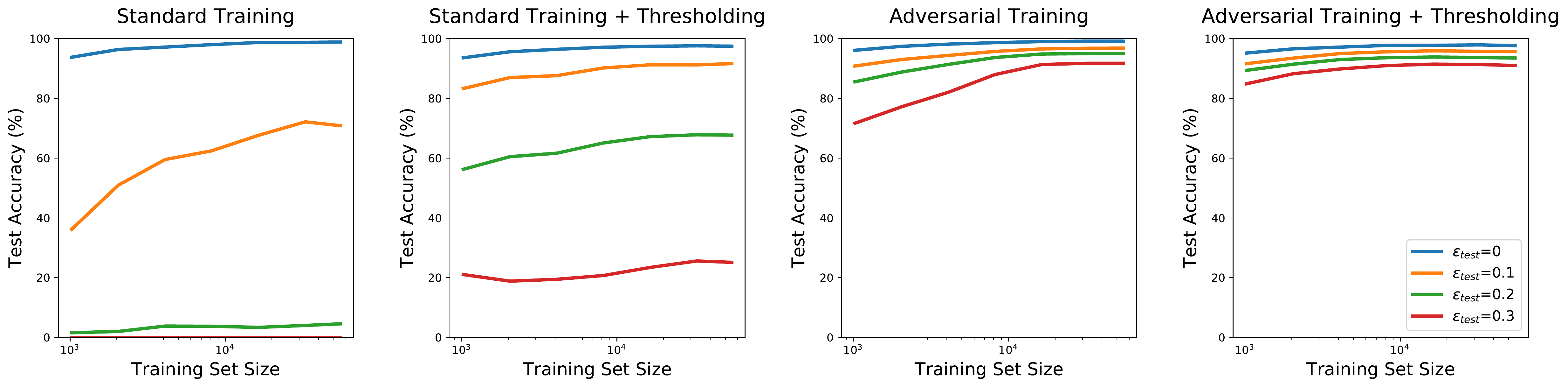}}
		\caption{Adversarial robustness to $\ell_\infty$ attacks on the MNIST dataset for a simple convolution network \cite{madry2017towards} with and without explicit thresholding filters.
        For each training set size choice and $\eps_{test}$, we report the best test set accuracy achieved over choice of thresholding filters and $\eps_{train}$.
        We observe that introducing thresholding filters significantly reduces the number of samples needed to achieve good adversarial generalization.}
		\label{fig:thresholding}
	\end{center}
	\vskip -0.2in
\end{figure*} 

\section{Related Work}
Due to the large body of work on adversarial robustness, we focus on related papers that also provide theoretical explanations for adversarial examples.
Compared to prior work, the main difference of our approach is the focus on generalization.
Most related papers study robustness either without the learning context, or in the limit as the number of samples approaches infinity.
As a result, finite sample phenomena do not arise in these theoretical approaches.
As we have seen in Figure \ref{fig:intro}, adversarial examples are currently a failure of generalization from a limited training set.
Hence we believe that studying robust generalization is an insightful avenue for understanding adversarial examples.

\begin{itemize}[leftmargin=0.5cm]
\item \citet{wang2017adversarial} study the adversarial robustness of nearest neighbor classifiers.
In contrast to our work, the authors give theoretical guarantees for a specific classification algorithm.
We focus on the inherent sample complexity of adversarially robust generalization independently of the learning method.
Moreover, our results hold for finite sample sizes while the results in \cite{wang2017adversarial} are only asymptotic.

\item Recent work by \citet{gilmer2017spheres} explores a specific distribution where robust learning is empirically difficult with overparametrized neural networks.\footnote{It is worth noting that the distribution in \cite{gilmer2017spheres} has only one degree of freedom. Hence we conjecture that the observed difficulty of robust learning in their setup is due to the chosen model class and not due to an information-theoretic limit as in our work.}
The main phenomenon is that even a small natural error rate on their dataset translates to a large adversarial error rate.
Our results give a more nuanced picture that involves the sample complexity required for generalization.
In our data models, it is possible to achieve an error rate that is essentially zero by using a very small number of samples, whereas the adversarial error rate is still large unless we have seen a lot of samples.

\item \citet{FMF16} relate the robustness of linear and non-linear classifiers to adversarial and (semi-)random perturbations.
Their work studies the setting where the classifier is fixed and does not encompass the learning task.
We focus on generalization aspects of adversarial robustness and provide upper and lower bounds on the sample complexity.
Overall, we argue that adversarial examples are inherent to the statistical setup and not necessarily a consequence of a concrete classifier model.

\item The work of \citet{XCM2009} establishes a connection between robust optimization and regularization for linear classification.
In particular, they show that robustness to a specific perturbation set is exactly equivalent to the standard support vector machine.
The authors give asymptotic consistency results under a robustness condition, but do not provide any finite sample guarantees.
In contrast, our work considers specific distributional models where we can demonstrate a clear gap between robust and standard generalization.

\item \citet{PMSW18} discuss adversarial robustness at the population level.
They assume the existence of an adversary that can significantly increase the loss for \emph{any} hypothesis in the hypothesis class.
By definition, robustness against adversarial perturbations is impossible in this regime.
As demonstrated in Figure \ref{fig:intro}, we instead conjecture that current classification models are not robust to adversarial examples because they fail to generalize.
Hence our results concern generalization from a finite number of samples.
We show that even when the hypothesis class is large enough to achieve good robust classification error, the sample complexity of robust generalization can still be significantly bigger than that of standard generalization.

\item In a recent paper, \citet{FFF18} also give provable lower bounds for adversarial robustness.
There are several important differences between their work and ours.
At a high level, the results in \cite{FFF18} state that there are fundamental limits for adversarial robustness that apply to \emph{any} classifier.
As pointed out by the authors, their bounds also apply to the human visual system.
However, an important aspect of adversarial examples is that they often fool current classifiers, yet are still easy to recognize for humans.
Hence we believe that the approach in \cite{FFF18} does not capture the underlying phenomenon since it does not distinguish between the robustness of current artificial classifiers and the human visual system.

Moreover, the lower bounds in \cite{FFF18} do not involve the training data and consequently apply in the limit where an infinite number of samples is available.
In contrast, our work investigates how the amount of available training data affects adversarial robustness.
As we have seen in Figure \ref{fig:intro}, adversarial robustness is currently an issue of \emph{generalization}.
In particular, we can train classifiers that achieve a high level of robustness on the CIFAR10 training set, but this robustness does not transfer to the test set.
Therefore, our perspective based on adversarially robust generalization more accurately reflects the current challenges in training robust classifiers.

Finally, \citet{FFF18} utilize the notion of a latent space for the data distribution in order to establish lower bounds that apply to any classifier.
While the existence of generative models such as GANs provides empirical evidence for this assumption, we note that it does not suffice to accurately describe the robustness phenomenon.
For instance, there are multiple generative models that produce high-quality samples for the MNIST dataset, yet there are now also several successful defenses against adversarial examples on MNIST.
As we have shown in our work, the fine-grained properties of the data distribution can have significant impact on how hard it is to learn a robust classifier.

\end{itemize}

\paragraph{Margin-based theory.} There is a long line of work in machine learning on exploring the connection between various notions of margin and generalization, e.g., see \citep{shaishalev2014} and references therein.
In this setting, the $\ell_p$ margin, i.e., how robustly classifiable the data is for $\ell_p^*$-bounded classifiers, enables dimension-independent control of the sample complexity.
However, the sample complexity in concrete distributional models can often be significantly smaller than what the margin implies.
As we will see next, standard margin-based bounds do not suffice to demonstrate a gap between robust and benign generalization for the distributional models studied in our work.

First, we briefly remind the reader about standard margin-based results (see Theorem 15.4 in \citep{shaishalev2014} for details).
For a dataset that has bounded $\ell_2$ norm $\rho$ and $\ell_2$ margin $\gamma$, the classification error of the hard-margin SVM scales as
\[
  \sqrt{\frac{\parens*{\sfrac{\rho}{\gamma}}^2}{n}}
\]
where $n$ is the number of samples.
To illustrate this bound, consider the Gaussian model in the regime $\sigma = \Theta(d^{\sfrac{1}{4}})$ where a single sample suffices to learn a classifier with low error (see Theorem~\ref{thm:gauss_standard_upper}).
The standard bound on the norm of an i.i.d.\ Gaussian vector shows that we have a data norm bound $\rho = \Theta(d^{\sfrac{3}{4}})$ with high probability.
While the Gaussian model is not strictly separable in any regime, we can still consider the probability that a sample achieves at least a certain margin:
\[
  \probop_{z \sim \normal(0, \sigma^2 \matI)} \brackets*{ \frac{\ip{z, \thetastar}}{\norm{\thetastar}_2}  \, \geq \rho } \; \geq \; 1 - \delta \; .
\]
A simple calculation shows that for $\norm{\thetastar}_2 = \sqrt{d}$ (as in our earlier bounds), the Gaussian model does not achieve margin $\gamma \geq \sqrt{d}$ even at the quantile $\delta = \sfrac{1}{2}$.
Hence the margin-based bound would indicate a sample complexity of $\Omega(d^{\sfrac{1}{4}})$ already for \emph{standard} generalization, which obscures the dichotomy between standard and robust sample complexity.

\paragraph{Robust statistics.} An orthogonal line of work in robust statistics studies robustness of estimators to corruption of \emph{training} data~\citep{Huber81a}.
This notion of robustness, while also important, is not directly relevant to the questions addressed in our work.

\section{Discussion and Future Directions}
\label{sec:discussion}
The vulnerability of neural networks to adversarial perturbations has recently been a source of much discussion and is still poorly understood.
Different works have argued that this vulnerability stems from their discontinuous nature~\citep{SzegedyZSBEGF13}, their linear nature~\citep{GoodfellowSS14}, or is a result of high-dimensional geometry and independent of the model class~\cite{gilmer2017spheres}. 
Our work gives a more nuanced picture.
We show that for a natural data distribution (the Gaussian model), the model class we train does not matter and a standard linear classifier achieves optimal robustness.
However, robustness also strongly depends on properties of the underlying data distribution.
For other data models (such as MNIST or the Bernoulli model), our results demonstrate that non-linearities are indispensable to learn from few samples.
This dichotomy provides evidence that defenses against adversarial examples need to be tailored to the specific dataset and hence may be more complicated than a single, broad approach.
Understanding the interactions between robustness, classifier model, and data distribution from the perspective of generalization is an important direction for future work.

What do our results mean for robust classification of real images?
Our Gaussian lower bound implies that if an algorithm works for all (or most) settings of the unknown parameter $\thetastar$, then achieving strong $\ell_\infty$-robustness requires a sample complexity increase that is polynomial in the dimension.
There are a few different ways this lower bound could be bypassed.
After all, it is conceivable that the noise scale $\sigma$ is significantly smaller for real image datasets, making robust classification easier.
And even if that was not the case, a good algorithm could work for the parameters $\thetastar$ that correspond to real datasets while not working for most other parameters.
To accomplish this, the algorithm would implicitly or explicitly have prior information about the correct $\thetastar$.
While some prior information is already incorporated in the model architectures (e.g., convolutional and pooling layers), the conventional wisdom is often not to bias the neural network with our priors.
Our work suggests that there are trade-offs with robustness here and that adding more prior information could help to learn more robust classifiers.

The focus of our paper is on adversarial perturbations in a setting where the test distribution (before the adversary's action) is the same as the training distribution.
While this is a natural scenario from a security point of view, other setups can be more relevant in different robustness contexts.
For instance, we may want a classifier that is robust to small changes between the training and test distribution.
This can be formalized as the classification accuracy on \emph{unperturbed} examples coming from an \emph{adversarially} modified distribution.
Here, the power of the adversary is limited by how much the test distribution can be modified, and the adversary is not allowed to perturb individual samples coming from the modified test distribution.
Interestingly, our lower bound for the Gaussian model also applies to such worst-case distributional shifts.
In particular, if the adversary is allowed to shift the mean $\thetastar$ by a vector in $\perturbB_\infty^\eps$, our proof sketched in Section \ref{sec:linfty_gauss} transfers to the distribution shift setting.
Since the lower bound relies only on a single universal perturbation, this perturbation can also be applied directly to the mean vector.

\paragraph{Future directions.} 
Several questions remain.
We now provide a list of concrete directions for future work on robust generalization.

\begin{description}
\item[Stronger lower bounds.]
  An interesting aspect of adversarial examples is that the adversary can often fool the classifier on most inputs \cite{SzegedyZSBEGF13,CarliniW16a}.
While our results show a lower bound for classification error $\sfrac{1}{2}$, it is conceivable that misclassification rates much closer to 1 are unavoidable for at least one of the two classes (or equivalently, when the adversary is allowed to pick the class label).
In order to avoid degenerate cases such as achieving robustness by being the constant classifier, it would be interesting to study regimes where the classifier has high standard accuracy but does not achieve robustness yet.
In such a regime, does good standard accuracy imply that the classifier is vulnerable to adversarial perturbations on almost all inputs?

\item[Different perturbation sets.]
Depending on the problem setting, different perturbation sets are relevant.
Due to the large amount of empirical work on $\ell_\infty$ robustness, our paper has focused on such perturbations.
From a security point of view, we want to defend against perturbations that are imperceptible to humans.
While this is not a well-defined concept, the class of small $\ell_\infty$-norm perturbations should be contained in any reasonable definition of imperceptible perturbations.
However, changes in different $\ell_p$ norms \cite{SzegedyZSBEGF13,MoosDez16,CarliniW16a}, sparse perturbations \cite{PapernotMJFCS16,CarliniW16,NK17,SVS17}, or mild spatial transformations can also be imperceptible to a human \cite{CJBWMD18}.
In less adversarial settings, more constrained and lower-dimensional perturbations such as small rotations and translations may be more appropriate \cite{ETTSM17}.
Overall, understanding the sample complexity implications of different perturbation sets is an important direction for future work.

\item[Further notions of test time robustness.]
  As mentioned above, less adversarial forms of robustness may be better suited to model challenges arising outside security. 
How much easier is it to learn a robust classifier in more benign settings?
This question is naturally related to problems such as transfer learning and domain adaptation.

\item[Broader classes of distributions.]
Our results directly apply to two concrete distributional models.
While the results already show interesting phenomena and are predictive of behavior on real data, understanding the robustness properties for a broader class of distributions is an important direction for future work.
Moreover, it would be useful to understand what general properties of distributions make robust generalization hard or easy.

\item[Wider sample complexity separations.]
In our work, we show a separation of $\sqrt{d}$ between the standard and robust sample complexity for the Gaussian model.
It is open whether larger gaps are possible.
Note that for large adversarial perturbations, the data may no longer be robustly separable which leads to trivial gaps in sample complexity, simply because the harder robust generalization problem is impossible to solve.
Hence this question is mainly interesting in the regime where a robust classifier exists in the model class of interest.

\item[Robustness in the PAC model.]
Our focus has been on robust learning for specific distributions without any limitations on the hypothesis class.
A natural dual perspective is to investigate robust learning for specific hypothesis classes, as in the probably approximately correct (PAC) framework.
For instance, it is well known that the sample complexity of learning a half space in $d$ dimensions is $O(d)$.
Does this sample complexity also suffice to learn in the presence of an adversary at test time?
While robustness to adversarial training noise has been studied in the PAC setting (e.g., see \cite{KL93,KSS94,BEK99}), we are not aware of similar work on test time robustness.
\end{description}

\section*{Acknowledgements}
Ludwig Schmidt is supported by a Google PhD Fellowship.
During this research project, Ludwig was also a research fellow at the Simons Institute for the Theory of Computing, an intern in the Google Brain team, and a visitor at UC Berkeley.
Shibani Santurkar is supported by the National Science Foundation (NSF) under grants IIS-1447786, IIS-1607189, and CCF-1563880, and the Intel Corporation.
Dimitris Tsipras was supported in part by the NSF grant CCF-1553428.
Aleksander M\k{a}dry was supported in part by an Alfred P.~Sloan Research Fellowship, a Google Research Award, and the NSF grant CCF-1553428.

\printbibliography

\appendix

\section{Omitted proofs for the Gaussian model}
\label{app:gaussians}
\subsection{Upper bounds}
\label{app:gaussians_upper}
We begin with standard results about (sub)-Gaussian concentration in Fact \ref{fact:gaussian_norm} and Lemmas \ref{lem:gaussian_norm} to \ref{lem:unit_ip}.
These results show that a class-weighted average of sufficiently many samples from the Gaussian model achieves a large inner product with the unknown mean vector.
Lemma \ref{lem:gaussian_classification} then relates the inner product between a linear classifier and the mean vector to the classification accuracy.
Theorem \ref{thm:gaussian_standard} uses the lemmas to establish our main theorem for standard generalization.
Corollary \ref{cor:single_sample} instantiates the bound for learning from one sample.
After further simplification, this yields Theorem \ref{thm:gauss_standard_upper} from the main text.

For robust generalization, we first relate the inner product between a linear classifier and the unknown mean vector to the robust classification accuracy in Lemma \ref{lem:robustupper}.
Similar to the standard classification error, Theorem \ref{thm:gausslinf} and Corollary \ref{cor:gausslinf} then yield our upper bounds for robust generalization.
Simplifying Corollary \ref{cor:gausslinf} further gives Theorem \ref{thm:main_gauss_upper} from the main text.
\begin{fact}
  \label{fact:gaussian_norm}
  Let $\vz \in \R^d$ be drawn from a centered spherical Gaussian, i.e., $\vz \sim \normal_d(0, \sigma^2 \matI)$ where $\sigma > 0$.
  Then we have $\prob[ \norm{\vz}_2 \geq \sigma \sqrt{d} + t ] \leq e^{-t^2 / (2 \sigma^2)}$ \; .
\end{fact}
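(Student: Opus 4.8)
The plan is to derive this from Gaussian concentration for Lipschitz functions, which is the cleanest route and avoids any explicit chi-distribution calculation. First I would observe that the map $\vz \mapsto \norm{\vz}_2$ is $1$-Lipschitz with respect to the Euclidean norm: by the triangle inequality, $\abs{\norm{\vz}_2 - \norm{\vz'}_2} \leq \norm{\vz - \vz'}_2$ for all $\vz, \vz' \in \R^d$. This Lipschitz property is the only structural feature of the norm that the argument needs.

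Next I would invoke the standard Gaussian concentration inequality (the Tsirelson--Ibragimov--Sudakov bound, a consequence of Gaussian isoperimetry): for any $1$-Lipschitz function $f : \R^d \to \R$ and $\vz \sim \normal(0, \sigma^2 \matI)$, one has $\prob[f(\vz) \geq \E[f(\vz)] + t] \leq e^{-t^2/(2\sigma^2)}$. Applying this with $f = \norm{\cdot}_2$ immediately gives $\prob[\norm{\vz}_2 \geq \E[\norm{\vz}_2] + t] \leq e^{-t^2/(2\sigma^2)}$, which is the claimed bound but centered at the true mean rather than at $\sigma\sqrt{d}$.

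The remaining step is to replace $\E[\norm{\vz}_2]$ by the cleaner quantity $\sigma\sqrt{d}$. By Jensen's inequality, $\E[\norm{\vz}_2] \leq \sqrt{\E[\norm{\vz}_2^2]} = \sqrt{\sigma^2 d} = \sigma\sqrt{d}$, using that each coordinate $\vz_i$ has variance $\sigma^2$. Since $\E[\norm{\vz}_2] \leq \sigma\sqrt{d}$, the event $\{\norm{\vz}_2 \geq \sigma\sqrt{d} + t\}$ is contained in $\{\norm{\vz}_2 \geq \E[\norm{\vz}_2] + t\}$, and the tail bound from the previous step transfers directly to yield $\prob[\norm{\vz}_2 \geq \sigma\sqrt{d} + t] \leq e^{-t^2/(2\sigma^2)}$, as required.

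The only nontrivial ingredient is the Gaussian concentration inequality itself; everything else is elementary. I would not reprove it here but cite it from a standard reference (it follows either from the Gaussian isoperimetric inequality or from a Herbst argument via the Gaussian log-Sobolev inequality). An alternative route would be to work directly with the $\chi^2$ concentration of $\norm{\vz}_2^2$, but that requires more bookkeeping to convert the quadratic tail into the stated linear-shift form, so the Lipschitz approach is preferable.
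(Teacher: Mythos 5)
Your proof is correct and follows essentially the same route as the paper: the paper also cites the Gaussian concentration inequality for Lipschitz functions (Example 5.7 in Boucheron--Lugosi--Massart) and combines it with the Jensen bound $\E[\norm{\vz}_2] \leq \sigma\sqrt{d}$ to re-center the tail at $\sigma\sqrt{d}$. The only difference is that you spell out the $1$-Lipschitz property and the event-inclusion step explicitly, which the paper leaves implicit.
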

\begin{proof}
  We refer the reader to Example 5.7 in \cite{BLM2013} for a reference of this standard result.
Combined with $\E[ \norm{\vz}_2] \leq \sigma \sqrt{d}$, which is obtained from Jensen's Inequality, the aforementioned example gives the desired upper tail bound.
\end{proof}

\begin{lemma}
  \label{lem:gaussian_norm}
  Let $\vz_1, \ldots, \vz_n \in \R^d$ be drawn i.i.d.\ from a spherical Gaussian, i.e., $\vz_i \sim \normal_d(\vmu, \sigma^2 \matI)$ where $\vmu \in \R^d$ and $\sigma > 0$.
  Let $\vzbar \in \R^d$ be the sample mean vector $\vzbar = \frac{1}{n} \sum_{i=1}^{n} \vz_i$.
  Finally, let $\delta > 0$ be the target probability.
  Then we have
  \[
    \prob \brackets*{ \norm{\vzbar}_2 \geq \norm{\vmu}_2 + \frac{\sigma \parens*{\sqrt{d} + \sqrt{2 \log\sfrac{1}{\delta}}}}{\sqrt n} } \, \leq \, \delta \; .
  \]
\end{lemma}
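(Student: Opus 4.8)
The plan is to reduce the statement to the centered-Gaussian norm tail bound of Fact~\ref{fact:gaussian_norm}. The key observation is that a sample mean of i.i.d.\ Gaussians is again Gaussian, with its variance shrinking by a factor of $n$. First I would note that since the $\vz_i \sim \normal_d(\vmu, \sigma^2\matI)$ are independent, their sum is distributed as $\normal_d(n\vmu, n\sigma^2\matI)$, and therefore $\vzbar = \frac{1}{n}\sum_{i=1}^{n} \vz_i \sim \normal_d(\vmu, \frac{\sigma^2}{n}\matI)$. Equivalently, I can decompose $\vzbar = \vmu + \vz$, where $\vz \sim \normal_d(0, (\sigma/\sqrt{n})^2\matI)$ is a centered spherical Gaussian with effective standard deviation $\sigma' \defeq \sigma/\sqrt{n}$.

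Next, by the triangle inequality $\norm{\vzbar}_2 \leq \norm{\vmu}_2 + \norm{\vz}_2$, so it suffices to produce a high-probability upper bound on $\norm{\vz}_2$. Here I would invoke Fact~\ref{fact:gaussian_norm} directly for the centered Gaussian $\vz$ with variance parameter $\sigma'$, which gives $\prob[\norm{\vz}_2 \geq \sigma'\sqrt{d} + t] \leq e^{-t^2/(2\sigma'^2)}$ for every $t > 0$.

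Finally, I would calibrate the free parameter $t$ so that the tail probability equals the target $\delta$. Setting $t = \sigma'\sqrt{2\log\sfrac{1}{\delta}}$ makes $e^{-t^2/(2\sigma'^2)} = \delta$, and the resulting deviation threshold becomes $\sigma'(\sqrt{d} + \sqrt{2\log\sfrac{1}{\delta}}) = \frac{\sigma(\sqrt{d} + \sqrt{2\log\sfrac{1}{\delta}})}{\sqrt{n}}$. Thus with probability at least $1 - \delta$ we have $\norm{\vz}_2 < \sigma'\sqrt{d} + t$, and combining this with the triangle-inequality bound above yields exactly the claimed inequality on the same event.

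Since every step is a standard manipulation, I do not anticipate a genuine obstacle; the only point requiring care is correctly tracking how the variance rescales under averaging — the $1/n$ appearing inside the variance of $\vzbar$ translates to a $1/\sqrt{n}$ in the effective standard deviation $\sigma'$, which is precisely what produces the $\sqrt{n}$ in the denominator of the stated bound.
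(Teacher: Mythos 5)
Your proposal is correct and follows essentially the same route as the paper's proof: decompose $\vzbar$ as $\vmu$ plus a centered Gaussian with variance $\sigma^2/n$, apply the triangle inequality to reduce to a tail bound on the centered part, and invoke Fact~\ref{fact:gaussian_norm} with $t$ calibrated so that the failure probability equals $\delta$. No gaps.
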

\begin{proof}
  Since each $\vz_i$ has the same distribution as $\vmu + \vg_i$ for $\vg_i \sim \normal_d(0, \sigma^2 \matI)$, we can bound the desired tail probability for
  \begin{align*}
    \vzbar \; &= \; \frac{1}{n} \sum_{i=1}^n \vmu + \vg_i \\
              &= \; \vmu + \frac{1}{n} \sum_{i=1}^n \vg_i  \; .
  \end{align*}
  Morever, the average of the $\vg_i$ has the same distribution as $\vgbar \sim \normal_d(0, \frac{\sigma^2}{n} \matI)$.
  Hence it suffices to bound the tail of $\norm{\vmu + \vgbar}_2$.
  For any $c \geq 0$, applying the triangle inequality then gives
  \begin{align*}
    \prob[\norm{\vzbar}_2 \geq \norm{\vmu}_2 + c] \; &= \; \prob[\norm{\vmu + \vgbar}_2 \geq \norm{\vmu}_2 + c] \\ 
      & \leq \; \prob[ \norm{\vgbar}_2 \geq c] \; .
  \end{align*}
  Setting $c = \sigma \sqrt{\sfrac{d}{n}} + t$ with
  \[
    t \; = \; \sigma \sqrt{\frac{2 \log \sfrac{1}{\delta}}{n}}
  \]
  and substituting into Fact \ref{fact:gaussian_norm} then gives the desired result.
\end{proof}

For convenient use in our later theorems, we instantiate Lemma \ref{lem:gaussian_norm} with the parameters most relevant for our Gaussian model.
In particular, the norm of the mean vector $\vmu$ is $\sqrt{d}$ and we are interested in up to exponentially small failure probability $\delta$ (but not necessarily smaller).
\begin{lemma}
  \label{lem:gaussian_norm2}
  Let $\vz_1, \ldots, \vz_n \in \R^d$ be drawn i.i.d.\ from a spherical Gaussian with mean norm $\sqrt{d}$, i.e., $\vz_i \sim \normal_d(\vmu, \sigma^2 \matI)$ where $\vmu \in \R^d$, $\norm{\vmu}_2 = \sqrt{d}$, and $\sigma > 0$.
  Let $\vzbar \in \R^d$ be the sample mean vector $\vzbar = \frac{1}{n} \sum_{i=1}^{n} \vz_i$.
  Then we have
  \[
    \prob\brackets*{ \norm{\vzbar}_2 \, \geq \, \parens*{1 + \frac{2 \sigma}{\sqrt{n}}} \sqrt{d} } \; \leq \; e^{-\sfrac{d}{2}} \; .
  \]
\end{lemma}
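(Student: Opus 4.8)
The plan is to recognize Lemma \ref{lem:gaussian_norm2} as a direct specialization of the more general tail bound in Lemma \ref{lem:gaussian_norm}, obtained by making a single well-chosen substitution for the failure probability $\delta$. Lemma \ref{lem:gaussian_norm} already furnishes a deviation bound for $\norm{\vzbar}_2$ that is parametrized by an arbitrary target probability $\delta$, so the only work is to select $\delta$ so that the resulting threshold matches the clean form $\parens*{1 + \frac{2\sigma}{\sqrt{n}}}\sqrt{d}$ stated here.

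Concretely, I would set $\delta = e^{-\sfrac{d}{2}}$. With this choice, $\log \sfrac{1}{\delta} = \sfrac{d}{2}$, and therefore the square-root term appearing in Lemma \ref{lem:gaussian_norm} simplifies to
\[
  \sqrt{2 \log \sfrac{1}{\delta}} \; = \; \sqrt{2 \cdot \sfrac{d}{2}} \; = \; \sqrt{d} \; .
\]
Next I would substitute the hypothesis $\norm{\vmu}_2 = \sqrt{d}$ of this lemma into the threshold from Lemma \ref{lem:gaussian_norm}. The two $\sqrt{d}$ contributions inside the parenthesis then combine, yielding
\[
  \norm{\vmu}_2 + \frac{\sigma \parens*{\sqrt{d} + \sqrt{d}}}{\sqrt{n}} \; = \; \sqrt{d} + \frac{2 \sigma \sqrt{d}}{\sqrt{n}} \; = \; \parens*{1 + \frac{2 \sigma}{\sqrt{n}}} \sqrt{d} \; .
\]
Since the threshold produced by Lemma \ref{lem:gaussian_norm} under these substitutions is exactly the threshold in the statement, and the corresponding failure probability is $\delta = e^{-\sfrac{d}{2}}$, the claimed bound follows immediately.

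There is no genuine obstacle here: the result is a bookkeeping consequence of the earlier lemma, and the only point requiring mild care is arranging the arithmetic so that the choice $\delta = e^{-\sfrac{d}{2}}$ makes $\sqrt{2 \log \sfrac{1}{\delta}}$ collapse to $\sqrt{d}$, allowing the two dimension-dependent terms to merge into the factor $\parens*{1 + \frac{2\sigma}{\sqrt{n}}}$. I would present the argument in two short lines invoking Lemma \ref{lem:gaussian_norm} with the stated parameter choices.
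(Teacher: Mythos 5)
Your proposal is correct and is exactly the paper's own proof: the paper likewise substitutes $\norm{\vmu}_2 = \sqrt{d}$ and $\sqrt{2\log\sfrac{1}{\delta}} = \sqrt{d}$ (equivalently $\delta = e^{-\sfrac{d}{2}}$) into Lemma~\ref{lem:gaussian_norm}, so the two dimension-dependent terms merge into the factor $\parens*{1 + \frac{2\sigma}{\sqrt{n}}}\sqrt{d}$. No gaps; the arithmetic checks out.
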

\begin{proof}
  We substitute into Lemma \ref{lem:gaussian_norm} with $\norm{\vmu}_2 = \sqrt{d}$ and $\sqrt{2 \log \sfrac{1}{\delta}} = \sqrt{d}$.
\end{proof}

\begin{lemma}
  \label{lem:gaussian_ip1}
  Let $\vz_1, \ldots, \vz_n \in \R^d$ be drawn i.i.d.\ from a spherical Gaussian, i.e., $\vz_i \sim \normal_d(\vmu, \sigma^2 \matI)$ where $\vmu \in \R^d$ and $\sigma > 0$.
  Let $\vzbar \in \R^d$ be the mean vector $\vzbar = \frac{1}{n} \sum_{i=1}^{n} \vz_i$.
  Finally, let $\delta > 0$ be the target probability.
  Then we have
  \[
  \prob \brackets*{ \ip{\vzbar, \vmu} \, \leq \, \norm{\vmu}_2^2 - \sigma \norm{\mu}_2 \sqrt{\frac{2 \log \sfrac{1}{\delta}}{n}} \, } \, \leq \, \delta \; .
  \]
\end{lemma}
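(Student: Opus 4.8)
The plan is to reduce the inner product $\ip{\vzbar, \vmu}$ to a one-dimensional Gaussian and then apply a standard scalar tail bound. First I would write each sample as $\vz_i = \vmu + \vg_i$ with $\vg_i \sim \normal_d(0, \sigma^2 \matI)$, exactly as in the proof of Lemma \ref{lem:gaussian_norm}. Averaging gives $\vzbar = \vmu + \vgbar$ where $\vgbar \sim \normal_d(0, \frac{\sigma^2}{n} \matI)$, so that
\[
  \ip{\vzbar, \vmu} \; = \; \norm{\vmu}_2^2 + \ip{\vgbar, \vmu} \; .
\]

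The key observation is that $\ip{\vgbar, \vmu}$ is a linear functional of a spherical Gaussian and is therefore itself a centered one-dimensional Gaussian. Since $\vgbar$ has covariance $\frac{\sigma^2}{n}\matI$, the variance of $\ip{\vgbar, \vmu}$ is $\frac{\sigma^2}{n}\norm{\vmu}_2^2$, i.e.\ $\ip{\vgbar, \vmu} \sim \normal\parens*{0, \frac{\sigma^2 \norm{\vmu}_2^2}{n}}$. The event in the lemma, $\ip{\vzbar, \vmu} \leq \norm{\vmu}_2^2 - t$ with $t = \sigma \norm{\vmu}_2 \sqrt{\frac{2\log \sfrac{1}{\delta}}{n}}$, is thus exactly the lower-tail event $\ip{\vgbar, \vmu} \leq -t$.

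Finally I would invoke the standard Gaussian tail bound: for a centered Gaussian of standard deviation $s$, the probability of falling below $-t$ is at most $e^{-t^2/(2s^2)}$. Here $s = \sigma \norm{\vmu}_2 / \sqrt{n}$, so plugging in the chosen value of $t$ makes the exponent exactly $-\log\sfrac{1}{\delta}$, yielding the bound $\delta$ and completing the proof. There is no real obstacle here: the only points requiring a moment's care are the computation of the variance of the scalar projection (as a linear combination of independent Gaussians) and the calibration of $t$ so that the tail bound evaluates precisely to $\delta$.
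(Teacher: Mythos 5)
Your proposal is correct and follows exactly the same route as the paper's proof: decompose $\vzbar = \vmu + \vgbar$ with $\vgbar \sim \normal_d(0, \frac{\sigma^2}{n}\matI)$, observe that $\ip{\vgbar, \vmu}$ is a centered one-dimensional Gaussian with variance $\frac{\sigma^2 \norm{\vmu}_2^2}{n}$, and apply the standard Gaussian tail bound with $t = \sigma\norm{\vmu}_2\sqrt{\frac{2\log\sfrac{1}{\delta}}{n}}$. No gaps; the argument is complete.
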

\begin{proof}
  As in Lemma \ref{lem:gaussian_norm}, we use the fact that $\vzbar$ has the same distribution as $\vmu + \vgbar$ where $\vgbar \sim \normal_d(0, \frac{\sigma^2}{n} \matI)$.
  For any $t \geq 0$, this allows us to simplify the tail event to 
  \[
    \prob \brackets*{ \ip{\vzbar, \vmu} \leq \norm{\vmu}_2^2 - t } \; = \; \prob \brackets*{ \ip{\vgbar, \vmu} \leq -t} \; .
  \]
  The right hand side can now be simplified to $\prob\brackets{h \geq t}$ where $h \sim \normal(0, \sfrac{\sigma^2 \norm{\vmu}_2^2}{n})$.
  Invoking the standard sub-Gaussian tail bound
  \[
    \prob\brackets*{h \geq t} \; \leq \; \exp\parens*{-\frac{n \cdot t^2}{2 \sigma^2 \norm{\mu}_2^2}}
  \]
  and substituting $t \; = \; \sigma \norm{\mu}_2 \sqrt{\frac{2 \log\sfrac{1}{\delta}}{n}}$ then gives the desired result.
\end{proof}

\begin{lemma}
  \label{lem:unit_ip}
  Let $\vz_1, \ldots, \vz_n \in \R^d$ be drawn i.i.d.\ from a spherical Gaussian with mean norm $\sqrt{d}$, i.e., $\vz_i \sim \normal_d(\vmu, \sigma^2 \matI)$ where $\vmu \in \R^d$, $\norm{\vmu}_2 = \sqrt{d}$, and $\sigma > 0$.
  Let $\vzbar \in \R^d$ be the sample mean vector $\vzbar = \frac{1}{n} \sum_{i=1}^{n} \vz_i$ and let $\what \in \R^d$ be the unit vector in the direction of $\vzbar$, i.e., $\what = \sfrac{\vzbar}{\norm{\vzbar}_2}$.
  Then we have
  \[
    \prob\brackets*{ \ip{\what, \vmu} \, \leq \, \frac{2 \sqrt{n} - 1}{2\sqrt{n} + 4 \sigma} \sqrt{d} } \; \leq \; 2 \exp\parens*{-\frac{d}{8(\sigma^2 + 1)}} \; .
  \]
\end{lemma}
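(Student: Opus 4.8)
The plan is to write $\ip{\what, \vmu} = \ip{\vzbar, \vmu} / \norm{\vzbar}_2$ and control the numerator and denominator separately using the two concentration estimates already established, then combine them through a union bound. First I would lower bound the numerator via Lemma \ref{lem:gaussian_ip1} with $\norm{\vmu}_2 = \sqrt{d}$ and the specific choice $\delta = \exp(-d/(8\sigma^2))$, i.e.\ $\sqrt{2\log\sfrac{1}{\delta}} = \sqrt{d}/(2\sigma)$. Substituting gives $\sigma \norm{\vmu}_2 \sqrt{2\log\sfrac{1}{\delta}/n} = d/(2\sqrt{n})$, so with probability at least $1 - \exp(-d/(8\sigma^2))$,
\[
  \ip{\vzbar, \vmu} \; \geq \; d - \frac{d}{2\sqrt{n}} \; = \; d\parens*{1 - \frac{1}{2\sqrt{n}}} \; .
\]
Since $n \geq 1$ this quantity is positive, which is the key fact that lets me safely divide by an upper bound on the denominator in the next step.

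Next I would upper bound the denominator directly through Lemma \ref{lem:gaussian_norm2}: with probability at least $1 - \exp(-d/2)$ we have $\norm{\vzbar}_2 \leq (1 + 2\sigma/\sqrt{n})\sqrt{d}$. Taking a union bound over the two failure events, both estimates hold simultaneously with probability at least $1 - \exp(-d/(8\sigma^2)) - \exp(-d/2)$, and on this event
\[
  \ip{\what, \vmu} \; = \; \frac{\ip{\vzbar, \vmu}}{\norm{\vzbar}_2} \; \geq \; \frac{d\parens*{1 - \frac{1}{2\sqrt{n}}}}{\parens*{1 + \frac{2\sigma}{\sqrt{n}}}\sqrt{d}} \; = \; \frac{2\sqrt{n} - 1}{2\sqrt{n} + 4\sigma}\sqrt{d} \; ,
\]
where the last equality is obtained by multiplying numerator and denominator by $2\sqrt{n}$. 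This matches the claimed threshold exactly, so the complementary event is precisely the one whose probability we must bound.

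It then remains to fold the two failure probabilities into the single clean bound $2\exp(-d/(8(\sigma^2+1)))$. Since $8(\sigma^2+1) \geq 8\sigma^2$ and $8(\sigma^2+1) \geq 8 \geq 2$, both exponents $d/(8\sigma^2)$ and $d/2$ dominate $d/(8(\sigma^2+1))$, so each failure term is at most $\exp(-d/(8(\sigma^2+1)))$ and their sum is at most twice that. I do not expect any genuine obstacle here: the argument is a routine decomposition-and-union-bound, and the only points requiring care are (i) choosing $\delta$ in Lemma \ref{lem:gaussian_ip1} so that the numerator bound produces exactly the factor $2\sqrt{n} - 1$ (rather than an $\sigma$-dependent constant), and (ii) verifying the elementary exponent comparison that collapses the two tail probabilities into the stated form.
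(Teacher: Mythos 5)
Your proof is correct and follows essentially the same route as the paper's: invoke Lemma \ref{lem:gaussian_ip1} with $\delta = \exp(-d/(8\sigma^2))$ for the numerator, Lemma \ref{lem:gaussian_norm2} for the denominator, take a union bound, and divide. In fact your final step is slightly more careful than the paper's, since you explicitly verify that both failure terms $\exp(-d/(8\sigma^2))$ and $\exp(-d/2)$ are dominated by $\exp(-d/(8(\sigma^2+1)))$, whereas the paper glosses over this comparison.
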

\begin{proof}
  We invoke Lemma \ref{lem:gaussian_norm2}, which yields
  \[
    \norm{\vzbar}_2 \, \leq \, \parens*{1 + \frac{2 \sigma}{\sqrt{n}}} \sqrt{d}
  \]
  with probability $1 - e^{\sfrac{-d}{2}}$.
  Moreover, we invoke Lemma \ref{lem:gaussian_ip1} with $\delta = e^{\sfrac{-d}{8\sigma^2}}$ and $\norm{\mu}_2 = \sqrt{d}$ to get
  \[
    \ip{\vzbar, \vmu} \, \geq \, d - \frac{d}{2\sqrt{n}}
  \]
  with probability $1 - e^{\sfrac{-d}{8\sigma^2}}$.
  We continue under both events, which yields the desired overall failure probability $2 e^{\sfrac{-d}{2}}$.

  We now have
  \begin{align*}
    \ip{\what, \vmu} \; &= \; \frac{\ip{\vzbar, \vmu}}{ \norm{\vzbar}_2} \\
                        &\geq \; \frac{\parens*{1 - \frac{1}{2\sqrt{n}}} d}{\norm{\vzbar}_2} \\
                        &\geq \; \frac{\parens*{1 - \frac{1}{2\sqrt{n}}} d}{(1 + \frac{2 \sigma}{\sqrt{n}})\sqrt{d}} \\
                        &= \; \frac{2 \sqrt{n} - 1}{2 \sqrt{n} + 4 \sigma} \sqrt{d}
  \end{align*}
  as stated in the lemma.
\end{proof}

\begin{lemma}
  \label{lem:gaussian_classification}
  Let $\vz \in \R^d$ be drawn from a spherical Gaussian, i.e., $\vz \sim \normal_d(\vmu, \sigma^2 \matI)$ where $\vmu \in \R^d$ and $\sigma > 0$.
  Moreover, let $\vw \in \R^d$ be an arbitrary unit vector with $\ip{\vw, \vmu} \geq \rho$ where $\rho \geq 0$.
  Then we have
  \[
    \prob\brackets*{\ip{\vw, \vz} \, \leq \, \rho} \; \leq \; \exp\parens*{-\frac{(\ip{\vw, \vmu} - \rho)^2}{2 \sigma^2}} \; .
  \]
\end{lemma}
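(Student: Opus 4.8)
The plan is to exploit the fact that any linear functional of a Gaussian vector is itself a one-dimensional Gaussian, reducing the claim to a standard scalar tail bound. First I would observe that since $\vz \sim \normal_d(\vmu, \sigma^2 \matI)$ and $\vw$ is a fixed vector, the scalar $\ip{\vw, \vz}$ is Gaussian with mean $\ip{\vw, \vmu}$ and variance $\sigma^2 \norm{\vw}_2^2$. Because $\vw$ is a unit vector, this variance is exactly $\sigma^2$, so $\ip{\vw, \vz} \sim \normal(\ip{\vw, \vmu}, \sigma^2)$.

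Next I would recast the lower-tail event in terms of a centered Gaussian. Writing $a \eqdef \ip{\vw, \vmu}$, the quantity $\ip{\vw, \vz} - a$ is distributed as $h \sim \normal(0, \sigma^2)$, so that $\prob\brackets*{\ip{\vw, \vz} \leq \rho} = \prob\brackets*{h \leq \rho - a}$. By the symmetry of the centered Gaussian, this equals $\prob\brackets*{h \geq a - \rho}$, converting the lower tail of interest into an upper tail at the threshold $a - \rho = \ip{\vw, \vmu} - \rho$.

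Finally, the hypothesis $\ip{\vw, \vmu} \geq \rho$ guarantees that this threshold is nonnegative, so the standard one-sided sub-Gaussian bound $\prob\brackets*{h \geq t} \leq \exp\parens*{-t^2 / (2\sigma^2)}$ applies with $t = \ip{\vw, \vmu} - \rho$, yielding exactly the claimed inequality. The argument is entirely routine; there is no real obstacle beyond bookkeeping. The one point that genuinely matters is verifying $\ip{\vw, \vmu} - \rho \geq 0$ before invoking the symmetric upper-tail bound, and this is precisely what the assumption $\ip{\vw, \vmu} \geq \rho$ provides—without it the exponential bound would not be directly applicable to the tail in question.
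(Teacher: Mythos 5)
Your proposal is correct and follows essentially the same route as the paper's proof: both reduce the claim to a one-dimensional statement by noting that $\ip{\vw, \vz}$ is Gaussian with mean $\ip{\vw, \vmu}$ and variance $\sigma^2$ (using $\norm{\vw}_2 = 1$), center it, and apply the standard sub-Gaussian tail bound at threshold $\ip{\vw, \vmu} - \rho$. Your explicit check that this threshold is nonnegative (via the hypothesis $\ip{\vw, \vmu} \geq \rho$) is a detail the paper leaves implicit, and is a worthwhile point of care.
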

\begin{proof}
  Since $\vz$ has the same distribution as $\vmu + \vg$ where $\vg \sim \normal_d(0, \sigma^2 \matI)$, we can bound the tail event as
  \begin{align*}
    \prob\brackets*{\ip{\vw, \vz} \, \leq \, \rho} \; &= \; \prob\brackets*{\ip{\vw, \vmu + \vg} \, \leq \, \rho} \\
                                                   &= \; \prob\brackets*{ \ip{\vw, \vg} \, \leq \, \rho - \ip{\vw, \vmu} } \; .
  \end{align*}
  The inner product $\ip{\vw, \vg}$ is distributed as a univariate normal $\normal(0, \sigma^2)$ because the vector $\vw$ has unit norm.
  Hence we can invoke the standard sub-Gaussian tail bound to get the desired tail probability.
\end{proof}

\begin{theorem}[Standard generalization in the Gaussian model.]
  \label{thm:gaussian_standard}
  Let $(\vx_1, y_1), \ldots, (\vx_n, y_n) \in \R^d \times \pmset$ be drawn i.i.d.\ from a $(\thetastar, \sigma)$-Gaussian model with $\norm{\thetastar}_2 = \sqrt{d}$.
  Let $\what \in\R^d$ be the unit vector in the direction of $\vzbar = \frac{1}{n} \sum_{i=1}^{n} y_i \vx_i$, i.e., $\what = \sfrac{\vzbar}{\norm{\vzbar}_2}$.
  Then with probability at least $1 - 2\exp(-\frac{d}{8 (\sigma^2 + 1)})$, the linear classifier $f_{\what}$ has classification error at most
  \[
    \exp\parens*{- \frac{(2 \sqrt{n} - 1)^2 d}{2 (2 \sqrt{n} + 4 \sigma)^2 \sigma^2}} \; .
  \]
\end{theorem}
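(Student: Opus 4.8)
The plan is to compose the two preparatory lemmas: Lemma \ref{lem:unit_ip} to control the correlation between the learned direction $\what$ and the true mean $\thetastar$, and Lemma \ref{lem:gaussian_classification} to convert that correlation into a bound on the classification error of a fresh sample. First I would observe that the quantities $\vz_i \defeq y_i \vx_i$ used to form $\vzbar$ are i.i.d.\ $\normal_d(\thetastar, \sigma^2 \matI)$: since $y_i \in \pmset$ and $\vx_i \sim \normal_d(y_i \thetastar, \sigma^2 \matI)$, multiplying by $y_i$ flips the sign of the mean back to $\thetastar$ while leaving the spherical covariance unchanged. Thus the sample mean $\vzbar = \frac{1}{n}\sum_i \vz_i$ and its normalization $\what = \vzbar / \norm{\vzbar}_2$ fall exactly into the hypotheses of Lemma \ref{lem:unit_ip} with $\vmu = \thetastar$ and $\norm{\vmu}_2 = \sqrt d$. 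Applying that lemma directly yields, with probability at least $1 - 2\exp(-\frac{d}{8(\sigma^2+1)})$ over the draw of the training set, the correlation bound $\ip{\what, \thetastar} \geq \frac{2\sqrt n - 1}{2\sqrt n + 4\sigma}\sqrt d$. This is precisely the high-probability event named in the theorem, and the remainder of the argument proceeds deterministically conditioned on this event, so that $\what$ is now a fixed unit vector carrying a known lower bound on $\ip{\what, \thetastar}$.

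Next I would rewrite the population classification error in a single-class form. Conditioning on the label and using the sign symmetry of the model (the $y=-1$ case maps to the $y=+1$ case under $\vx \mapsto -\vx$, which flips $\ip{\what, \vx}$), the error of $f_{\what}$ equals $\prob_{\vx \sim \normal_d(\thetastar, \sigma^2 \matI)}[\ip{\what, \vx} \leq 0]$. I then invoke Lemma \ref{lem:gaussian_classification} with the unit vector $\vw = \what$, mean $\vmu = \thetastar$, and threshold $\rho = 0$ (valid since $\ip{\what, \thetastar} \geq 0$ on the good event), which bounds this probability by $\exp(-\ip{\what, \thetastar}^2 / (2\sigma^2))$. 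Substituting the correlation lower bound from the previous step gives exactly the claimed error bound $\exp(-\frac{(2\sqrt n - 1)^2 d}{2(2\sqrt n + 4\sigma)^2 \sigma^2})$.

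Since each ingredient is already established, there is no substantive analytic obstacle here; the proof is essentially a clean composition, and the main thing to watch is the bookkeeping of which randomness each probability is over. The factor $1 - 2\exp(-\frac{d}{8(\sigma^2+1)})$ is a probability over the training samples, whereas the final $\exp(\cdots)$ is the conditional population error of the fixed classifier $\what$ given that the training draw landed in the good event. Keeping these two sources of randomness separate, together with verifying the symmetry reduction and checking that $\rho = 0$ meets the $0 \le \rho \le \ip{\what,\thetastar}$ hypothesis of Lemma \ref{lem:gaussian_classification}, is all that the argument requires.
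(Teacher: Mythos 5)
Your proposal is correct and follows essentially the same route as the paper's proof: reduce to the i.i.d.\ variables $\vz_i = y_i \vx_i \sim \normal_d(\thetastar, \sigma^2 \matI)$, invoke Lemma \ref{lem:unit_ip} for the correlation bound, and then apply Lemma \ref{lem:gaussian_classification} with $\rho = 0$ to convert it into the stated error bound. Your explicit separation of the training-sample randomness from the population error, and the symmetry reduction to the single-class event $\prob[\ip{\what, y\vx} \leq 0]$, is in fact slightly more careful bookkeeping than the paper's terse write-up.
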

\begin{proof}
  Let $\vz_i = y_i \cdot \vx_i$ and note that each $\vz_i$ is independent and has distribution $\normal_d(\thetastar, \sigma^2 \matI)$.
  Hence we can invoke Lemma \ref{lem:unit_ip} and get
  \[
      \ip{\what, \thetastar} \; \geq \; \frac{2 \sqrt{n} - 1}{2 \sqrt{n} + 4 \sigma} \sqrt{d}
  \]
  with probability at least $1 - 2\exp(-\frac{d}{8 (\sigma^2 + 1)})$ as stated in the theorem.

  Next, unwrapping the definition of $f_{\what}$ allows us to write the classification error of $f_{\what}$ as
  \[
    \prob\brackets*{f_{\what}(x) \neq y} \; = \; \prob\brackets*{\ip{\what, \thetastar} \leq 0} \; .
  \]
  Invoking Lemma \ref{lem:gaussian_classification} with $\rho = 0$ then gives the desired bound.
\end{proof}

\begin{corollary}[Generalization from a single sample.]
  \label{cor:single_sample}
  Let $(\vx, y)$ be drawn from a $(\thetastar, \sigma)$-Gaussian model with 
  \[
    \sigma \; \leq \; \frac{d^{\sfrac{1}{4}}}{5 \sqrt{\log\sfrac{1}{\beta}}} \; .
  \]
  Let $\what \in\R^d$ be the unit vector $\what = \frac{y \vx}{\norm{\vx}_2}$.
  Then with probability at least $1 - 2\exp(-\frac{d}{8 (\sigma^2 + 1)})$, the linear classifier $f_{\what}$ has classification error at most $\beta$.
\end{corollary}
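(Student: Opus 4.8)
The plan is to obtain this corollary as the $n = 1$ specialization of Theorem \ref{thm:gaussian_standard}, followed by a short algebraic check that the hypothesis on $\sigma$ forces the resulting error bound below $\beta$. First I would observe that for a single sample the (unnormalized) class-weighted mean is $\vzbar = y \vx$, so its normalization $\sfrac{\vzbar}{\norm{\vzbar}_2} = \sfrac{y\vx}{\norm{y\vx}_2} = \sfrac{y\vx}{\norm{\vx}_2}$ coincides exactly with the vector $\what$ in the corollary statement (using $\abs{y} = 1$). Hence the hypotheses of Theorem \ref{thm:gaussian_standard} are met with $n = 1$, and in particular the success probability $1 - 2\exp(-\frac{d}{8(\sigma^2 + 1)})$ carries over verbatim, so no further probabilistic work is needed there.

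Next I would substitute $n = 1$ into the error expression from Theorem \ref{thm:gaussian_standard}. Since $2\sqrt{n} - 1 = 1$ and $2\sqrt{n} + 4\sigma = 2 + 4\sigma$, the classification error is at most
\[
  \exp\parens*{-\frac{d}{2(2 + 4\sigma)^2 \sigma^2}} \; .
\]
It therefore suffices to show that the exponent is at most $-\log\sfrac{1}{\beta}$, i.e.\ that $2(2 + 4\sigma)^2 \sigma^2 \log\sfrac{1}{\beta} \leq d$. Writing $L = \log\sfrac{1}{\beta}$, the assumed bound $\sigma \leq \sfrac{d^{\sfrac{1}{4}}}{5\sqrt{L}}$ gives the two useful inequalities $\sigma^2 L \leq \sfrac{\sqrt{d}}{25}$ and $\sigma^4 L^2 \leq \sfrac{d}{625}$, which I would feed into the factor $(2 + 4\sigma)^2\sigma^2$.

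The only mild obstacle is the additive constant in $(2 + 4\sigma)$, which prevents a single clean substitution; I would handle it by a two-case split on the size of $\sigma$. When $\sigma \leq 1$ one has $(2 + 4\sigma)^2 \leq 36$, so $2(2+4\sigma)^2\sigma^2 L \leq 72\,\sigma^2 L \leq \sfrac{72\sqrt{d}}{25}$, which is at most $d$ once $d$ exceeds a small absolute constant. When $\sigma > 1$ one has $(2 + 4\sigma)^2 \leq 36\sigma^2$, so $2(2+4\sigma)^2\sigma^2 L \leq 72\,\sigma^4 L \leq \sfrac{72}{625}\cdot\sfrac{d}{L}$, which is at most $d$ whenever $L$ exceeds a small absolute constant (true for any nontrivial target error $\beta$). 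In either case the exponent is dominated by $-L$, so the classification error is at most $e^{-L} = \beta$, completing the proof. I expect the bookkeeping of these constants (and confirming that they are absorbed by the explicit factor $5$ in the hypothesis) to be the only delicate part; everything else is a direct appeal to Theorem \ref{thm:gaussian_standard}.
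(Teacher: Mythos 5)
Your proposal is correct and takes essentially the same route as the paper: both invoke Theorem \ref{thm:gaussian_standard} with $n = 1$ (after noting $\what = \sfrac{y\vx}{\norm{\vx}_2}$ is exactly the normalized single-sample mean) and then check algebraically that the hypothesis on $\sigma$ pushes the error exponent below $-\log\sfrac{1}{\beta}$. The only difference is constant bookkeeping: the paper bounds $2 + 4\sigma \leq 3 d^{\sfrac{1}{4}}$ in one step (implicitly using $\log\sfrac{1}{\beta} \geq 1$), whereas your two-case split on $\sigma \lessgtr 1$ instead needs $d$ above a small absolute constant or $\beta$ bounded away from $1$ --- equally mild implicit side conditions, so both arguments are sound.
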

\begin{proof}
  Invoking Theorem \ref{thm:gaussian_standard} with $n = 1$ gives a classification error bound of
  \[
    \beta' \; \eqdef \; \exp\parens*{- \frac{d}{2 (2 + 4\sigma)^2 \sigma^2}} \; .
  \]
  It remains to show that $\beta' \leq \beta$.

  We now bound the denominator in $\beta'$.
  First, we have
  \begin{align*}
    2 + 4 \sigma \; &\leq \; 2 d^{\sfrac{1}{4}} + \frac{4}{5} d^{\sfrac{1}{4}} \\
                    &\leq \; 3 d^{\sfrac{1}{4}} \; .
  \end{align*}
  Next, we bound the entire denominator as
  \begin{align*}
    2 (2 + 4\sigma)^2 \sigma^2 \; &\leq \; 2 \cdot 9 \sqrt{d} \cdot \frac{\sqrt{d}}{25 \log \sfrac{1}{\beta}} \\
                                  &\leq \; \frac{d}{\log \sfrac{1}{\beta}}
  \end{align*}
  which yields the desired classification error when substituted back into $\beta'$.
\end{proof}

\begin{lemma}
  \label{lem:robustupper}
Assume a $(\thetastar, \sigma)$-Gaussian model.
Let $p \geq 1$, $\eps \geq 0$ be robustness parameters, and let $\what$ be a unit vector such that $\ip{\what, \thetastar} \geq \eps \norm{\what}_p^*$., where $\norm{\cdot}_p^*$ is the dual norm of $\norm{\cdot}_p$.
Then the linear classifier $f_{\what}$ has $\ell_p^\eps$-robust classification error at most
\[
  \exp\parens*{-\frac{\parens{\ip{\what, \thetastar} - \eps \norm{\what}_p^*}^2}{2 \sigma^2}} \; .
\]
\end{lemma}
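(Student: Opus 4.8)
The plan is to reduce the $\ell_p^\eps$-robust classification error to a one-dimensional Gaussian tail bound and then invoke Lemma~\ref{lem:gaussian_classification}. By the symmetry of the two class-conditional Gaussians it suffices to control the robust error on the positive class: a point $(\vx, +1)$ with $\vx \sim \normal(\thetastar, \sigma^2 \matI)$ is robustly misclassified exactly when there exists a perturbation $\Delta$ with $\norm{\Delta}_p \leq \eps$ such that $\ip{\what, \vx + \Delta} \leq 0$.

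First I would rewrite the existence of a bad perturbation as a deterministic margin condition. Since $f_{\what}(x') = \sign(\ip{\what, x'})$ is linear, the adversary's most effective move is to minimize $\ip{\what, \vx + \Delta} = \ip{\what, \vx} + \ip{\what, \Delta}$ over the $\ell_p$-ball of radius $\eps$. By the definition of the dual norm, $\inf_{\norm{\Delta}_p \leq \eps} \ip{\what, \Delta} = -\eps \norm{\what}_p^*$. Hence $\vx$ is robustly correctly classified if and only if $\ip{\what, \vx} > \eps \norm{\what}_p^*$, so the robust error on the positive class equals $\prob_{\vx \sim \normal(\thetastar, \sigma^2 \matI)}\brackets*{\ip{\what, \vx} \leq \eps \norm{\what}_p^*}$.

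Next I would apply Lemma~\ref{lem:gaussian_classification} with mean vector $\vmu = \thetastar$, unit vector $\vw = \what$, and threshold $\rho = \eps \norm{\what}_p^*$. The lemma's hypothesis $\ip{\vw, \vmu} \geq \rho$ is exactly the assumed condition $\ip{\what, \thetastar} \geq \eps \norm{\what}_p^*$, so it applies and yields the claimed bound $\exp\parens*{-\frac{\parens{\ip{\what, \thetastar} - \eps \norm{\what}_p^*}^2}{2 \sigma^2}}$. The identical computation on the negative class (where $\vx \sim \normal(-\thetastar, \sigma^2 \matI)$ and robust correctness requires $\ip{\what, \vx} < -\eps \norm{\what}_p^*$) gives the same bound, so averaging over the uniformly random label leaves it unchanged.

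The only real subtlety --- and the step I would take most care with --- is the dual-norm reduction: confirming that the worst-case perturbation is captured exactly by $\norm{\what}_p^*$ and that the characterization is an equivalence rather than merely a sufficient condition, so that no additional slack leaks in. Once that deterministic statement is in place, the probabilistic part is a direct invocation of the Gaussian concentration already established in Lemma~\ref{lem:gaussian_classification}, with the only remaining bookkeeping being the class symmetry.
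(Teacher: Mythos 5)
Your proof is correct and follows essentially the same route as the paper's: reduce the adversary's worst-case perturbation to the dual-norm quantity $\eps \norm{\what}_p^*$, then invoke Lemma~\ref{lem:gaussian_classification} with $\rho = \eps \norm{\what}_p^*$. The only cosmetic difference is that the paper handles both classes simultaneously via the substitution $y \cdot x \sim \normal(\thetastar, \sigma^2 \matI)$, whereas you treat the positive class and appeal to symmetry for the negative one; the content is identical.
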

\begin{proof}
  Per Definition \ref{def:robusterror}, we have to upper bound the quantity
  \[
      \probop_{(x,y) \sim \distP}\brackets*{\, \exists \, x' \in \perturbB(x) \, : \: f_{\what}(x') \ne y } \; .
  \]
  For linear classifiers, we can rewrite this event as follows:
  \begin{align*}
    \probop_{(x,y) \sim \distP}\brackets*{\, \exists \, x' \in \perturbB_p^\eps(x) \, : \: f_{\what}(x') \ne y } \; &= \; \probop_{(x,y) \sim \distP}\brackets*{\, \exists \, x' \in \perturbB_p^\eps(x) \, : \: \ip{y \cdot x', \what} \leq 0 } \\
                                                                                                             &= \; \probop_{(x,y) \sim \distP}\brackets*{\, \exists \, \Delta  \in \perturbB_p^\eps(0) \, : \: \ip{y \cdot (x + \Delta), \what} \leq 0 } \\
    &= \; \probop_{(x,y) \sim \distP}\brackets*{\, \min_{\Delta  \in \perturbB_p^\eps(0)} \ip{y \cdot (x + \Delta), \what} \leq 0 } \\
    &= \; \probop_{(x,y) \sim \distP}\brackets*{\, \ip{y \cdot x, \what} + \min_{\Delta  \in \perturbB_p^\eps(0)} \ip{y \cdot \Delta, \what} \leq 0 } \; .
  \end{align*}
  We now use the definition of the dual norm.
  Note that for any $\Delta \in \perturbB_p^\eps$, we also have $-\Delta \in \perturbB_p^\eps$.
  Since $y \in \{\pm 1\}$, we can drop the $y$ factor.
  Overall, we get
  \begin{align*}
      \probop_{(x,y) \sim \distP}\brackets*{\, \ip{y \cdot x, \what} + \min_{\Delta  \in \perturbB_p^\eps(0)} \ip{y \cdot \Delta, \what} \leq 0 }  \; &= \; \probop_{(x,y) \sim \distP}\brackets*{\, \ip{y \cdot x, \what} - \eps \norm{\what}^*_p \leq 0 } \\
                                                  &= \; \probop_{(x,y) \sim \distP}\brackets*{\, \ip{y \cdot x, \what} \leq \eps \norm{\what}^*_p  }  \; .
        &
  \end{align*}
  By assumption in the lemma, we have $\ip{\what, \thetastar} \geq \eps \norm{\what}_p^*$.
  Hence we can invoke Lemma \ref{lem:gaussian_classification} with $\vmu = \thetastar$ and $\rho = \eps \norm{\what}_p^*$ to get the desired bound on the robust classification error.
\end{proof}

\begin{theorem}
  \label{thm:gausslinf}
  Let $(\vx_1, y_1), \ldots, (\vx_n, y_n) \in \R^d \times \pmset$ be drawn i.i.d.\ from a $(\thetastar, \sigma)$-Gaussian model with $\norm{\thetastar}_2 = \sqrt{d}$.
  Let $\what \in\R^d$ be the unit vector in the direction of $\vzbar = \frac{1}{n} \sum_{i=1}^{n} y_i \vx_i$, i.e., $\what = \sfrac{\vzbar}{\norm{\vzbar}_2}$.
  Then with probability at least $1 - 2\exp(-\frac{d}{8 (\sigma^2 + 1)})$, the linear classifier $f_{\what}$ has $\ell_\infty^\eps$-robust classification error at most $\beta$ if
  \[
    \eps \; \leq \; \frac{2 \sqrt{n} - 1}{2 \sqrt{n} + 4\sigma} - \frac{\sigma \sqrt{2 \log \sfrac{1}{\beta}}}{\sqrt{d}}  \; .
  \]
\end{theorem}
\begin{proof}
  Let $\vz_i = y_i \cdot \vx_i$ and note that each $\vz_i$ is independent and has distribution $\normal_d(\thetastar, \sigma^2 \matI)$.
  Hence we can invoke Lemma \ref{lem:unit_ip} and get
  \[
      \ip{\what, \thetastar} \; \geq \; \frac{2 \sqrt{n} - 1}{2 \sqrt{n} + 4 \sigma} \sqrt{d}
  \]
  with probability at least $1 - 2\exp(-\frac{d}{8 (\sigma^2 + 1)})$ as stated in the theorem.

  Since $\norm{\what}_2 = 1$, we have $\norm{\what}_\infty^* = \norm{\what}_1 \leq \sqrt{d}$.
  The bound on $\eps$ in the theorem allows us to invoke Lemma \ref{lem:robustupper}.
  This yields an $\ell_2^\eps$-robust classification error of at most
  \[
    \exp\parens*{-\frac{\parens{\ip{\what, \thetastar} - \eps \sqrt{d}}^2}{2 \sigma^2}} \; .
  \]
  Since
  \[
  \ip{\what, \thetastar} - \eps \norm{\what}_p^* \; \geq \; \frac{2 \sqrt{n} - 1}{2 \sqrt{n} + 4 \sigma} \sqrt{d} - \sqrt{d} \parens*{ \frac{2 \sqrt{n} - 1}{2 \sqrt{n} + 4\sigma} - \frac{\sigma \sqrt{2 \log \sfrac{1}{\beta}}}{\sqrt{d}}}
  \]
  this simplifies to the robust classification error stated in the theorem.
\end{proof}

\begin{corollary}
  \label{cor:gausslinf}
  Let $(\vx_1, y_1), \ldots, (\vx_n, y_n) \in \R^d \times \pmset$ be drawn i.i.d.\ from a $(\thetastar, \sigma)$-Gaussian model with $\norm{\thetastar}_2 = \sqrt{d}$ and $\sigma \leq \frac{1}{32} d^{\sfrac{1}{4}}$.
  Let $\what \in\R^d$ be the unit vector in the direction of $\vzbar = \frac{1}{n} \sum_{i=1}^{n} y_i \vx_i$, i.e., $\what = \sfrac{\vzbar}{\norm{\vzbar}_2}$.
  Then with probability at least $1 - 2\exp(-\frac{d}{8 (\sigma^2 + 1)})$, the linear classifier $f_{\what}$ has $\ell_\infty^\eps$-robust classification error at most $0.01$ if
  \[
  n \; \geq \; \begin{cases} 1 \quad &\text{ for } \;\; \eps \, \leq \, \frac{1}{4}d^{-\sfrac{1}{4}} \\
  64 \, \eps^2\sqrt{d} & \text{ for } \; \; \frac{1}{4}d^{-\sfrac{1}{4}} \, \leq \, \eps \, \leq \, \frac{1}{4}\end{cases} \; .
  \]
\end{corollary}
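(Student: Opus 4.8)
The plan is to obtain this corollary as a direct specialization of Theorem \ref{thm:gausslinf} with the error target $\beta = 0.01$. That theorem already supplies the high-probability guarantee $1 - 2\exp(-\frac{d}{8(\sigma^2+1)})$ and reduces the claim to verifying the single inequality
\[
  \eps \;\le\; \frac{2\sqrt n - 1}{2\sqrt n + 4\sigma} \;-\; \frac{\sigma\sqrt{2\log 100}}{\sqrt d}
\]
in each of the two stated regimes. The first simplification I would make is to control the subtracted term: since $\sqrt{2\log 100} \le 4$ and $\sigma \le \frac{1}{32}d^{\sfrac{1}{4}}$, it is at most $\frac{4\sigma}{\sqrt d} \le \frac{1}{8} d^{-\sfrac{1}{4}}$. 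Moreover $\frac{2\sqrt n - 1}{2\sqrt n + 4\sigma}$ is decreasing in $\sigma$ and increasing in $n$, so throughout I may replace $\sigma$ by its extreme value $\frac{1}{32}d^{\sfrac{1}{4}}$ (equivalently $4\sigma = \frac{1}{8} d^{\sfrac{1}{4}}$) and, in each regime, replace $n$ by the smallest value permitted by the hypothesis.

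For the first regime ($\eps \le \frac{1}{4} d^{-\sfrac{1}{4}}$), I would substitute the trivial choice $n = 1$. The leading term then equals $\frac{1}{2 + 4\sigma} \ge \frac{1}{2 + \frac{1}{8} d^{\sfrac{1}{4}}}$, and it suffices to show that this is at least $\frac{1}{4} d^{-\sfrac{1}{4}} + \frac{1}{8} d^{-\sfrac{1}{4}} = \frac{3}{8} d^{-\sfrac{1}{4}}$, i.e.\ the target value of $\eps$ plus the bound on the subtracted term. Clearing denominators turns this into $\frac{63}{8}d^{\sfrac{1}{4}}\ge 4$, which holds for all $d \ge 1$.

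For the second regime ($\frac{1}{4} d^{-\sfrac{1}{4}} \le \eps \le \frac{1}{4}$) I would set $\sqrt n = 8\eps d^{\sfrac{1}{4}}$, the smallest value allowed by $n \ge 64\eps^2\sqrt d$. After substitution, the inequality to be verified becomes $h(\eps)\ge 0$, where $h(\eps) = \frac{16\eps d^{\sfrac{1}{4}} - 1}{16\eps d^{\sfrac{1}{4}} + \frac{1}{8} d^{\sfrac{1}{4}}} - \eps - \frac{1}{8} d^{-\sfrac{1}{4}}$. The key structural observation is that $h$ is concave in $\eps$: its only nonlinear piece is a ratio $\frac{a\eps + b}{c\eps + e}$ which is concave on the positive range (its second derivative has constant sign once one checks $b - \frac{ae}{c}<0$). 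Consequently the minimum of $h$ over the interval is attained at one of the two endpoints, so it is enough to check $\eps = \frac{1}{4} d^{-\sfrac{1}{4}}$ and $\eps = \frac{1}{4}$. Each check collapses to a simple linear inequality in $d^{\sfrac{1}{4}}$ (for instance, at $\eps = \frac{1}{4}$ one obtains $\frac{95}{132}d^{\sfrac{1}{4}} \ge \frac{97}{264}$, and at $\eps = \frac{1}{4}d^{-\sfrac{1}{4}}$ one obtains $\frac{63}{8}d^{\sfrac{1}{4}} \ge 4$), and both hold comfortably for $d \ge 1$.

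The only real obstacle is the bookkeeping in the second regime: one must (i) justify reducing to the extreme values of $\sigma$ and $n$ via the two monotonicities, and (ii) confirm the concavity of $h$ so that verifying the two endpoints suffices rather than the entire interval. Everything else is arithmetic substitution into Theorem \ref{thm:gausslinf}. It is worth noting that the constant $64$ is exactly calibrated to these endpoint checks: it forces $\sqrt n \ge 8\eps d^{\sfrac{1}{4}}$, which is what makes the $\eps = \frac{1}{4}$ endpoint inequality come out with positive slack; a materially smaller constant would weaken $\sqrt n$ and could violate the bound near $\eps = \frac{1}{4}$.
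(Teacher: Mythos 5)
Your proposal is correct and follows the same route as the paper's proof: specialize Theorem~\ref{thm:gausslinf} to $\beta = 0.01$, bound the subtracted term by $\frac{1}{8}d^{-\sfrac{1}{4}}$ using $\sigma \leq \frac{1}{32}d^{\sfrac{1}{4}}$ and $\sqrt{2\log 100} \leq 4$, and substitute $n = 1$ (first regime) and $n = 64\eps^2\sqrt{d}$ (second regime). The only genuine difference is how the second-regime inequality is verified over the interval $\frac{1}{4}d^{-\sfrac{1}{4}} \leq \eps \leq \frac{1}{4}$. The paper bounds the expression pointwise for every $\eps$ in the interval: the numerator satisfies $16\eps d^{\sfrac{1}{4}} - 1 \geq 12 \eps d^{\sfrac{1}{4}}$ (using the lower bound on $\eps$), the denominator is at most $(4 + \frac{1}{8})d^{\sfrac{1}{4}}$ (using the upper bound on $\eps$), and $\frac{1}{8}d^{-\sfrac{1}{4}} \leq \frac{\eps}{2}$, giving $\eps' \geq \frac{12}{5}\eps - \frac{1}{2}\eps \geq \eps$. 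You instead prove concavity of $h(\eps)$ and reduce to the two endpoint checks, which you carry out correctly. Both arguments are valid; the paper's is more elementary (pure inequality chaining, no second-derivative argument needed), while yours is more systematic and makes explicit where the constant $64$ is actually stressed. One cosmetic slip: in the first regime, clearing denominators in $\frac{1}{2 + \frac{1}{8}d^{\sfrac{1}{4}}} \geq \frac{3}{8}d^{-\sfrac{1}{4}}$ gives $\frac{61}{8}d^{\sfrac{1}{4}} \geq 6$ (i.e., $d^{\sfrac{1}{4}} \geq \sfrac{48}{61}$), not $\frac{63}{8}d^{\sfrac{1}{4}} \geq 4$ --- the latter is the regime-two lower-endpoint check; since both hold for all $d \geq 1$, nothing breaks.
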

\begin{proof}
  We begin by invoking Theorem \ref{thm:gausslinf}, which gives a $\ell_\infty^{\eps'}$-robust classification error at most $\beta = 0.01$ for
  \begin{align*}
    \eps' \; &= \; \frac{2 \sqrt{n} - 1}{2 \sqrt{n} + 4\sigma} - \frac{\sigma \sqrt{2 \log \sfrac{1}{\beta}}}{\sqrt{d}}  \\
             &\geq \; \frac{2 \sqrt{n} - 1}{2 \sqrt{n} + \frac{1}{8}d^{\sfrac{1}{4}}} - \frac{1}{8 d^{\sfrac{1}{4}}} \; .
  \end{align*}

  First, we consider the case where $\eps \leq \frac{1}{4} d^{-\sfrac{1}{4}}$.
  Using $n = 1$, the resulting robustness is
  \begin{align*}
        \eps' &\geq \; \frac{1}{2 + \frac{1}{8}d^{\sfrac{1}{4}}} - \frac{1}{8 d^{\sfrac{1}{4}}} \\
              &\geq \; \frac{1}{(2 + \frac{1}{8})d^{\sfrac{1}{4}}} - \frac{1}{8 d^{\sfrac{1}{4}}} \\
              &\geq \; \frac{1}{4} d^{-\sfrac{1}{4}} \\
              &\geq \; \eps
  \end{align*}
  as required.

  Next, we consider the case $\frac{1}{4}d^{-\sfrac{1}{4}} \, \leq \, \eps \, \leq \, \frac{1}{4}$.
  Substituting $n = 64\eps^2 \sqrt{d}$, we get
  \begin{align*}
    \eps' &\geq \; \frac{16 \eps d^{\sfrac{1}{4}} - 1}{16 \eps d^{\sfrac{1}{4}} + \frac{1}{8} d^{\sfrac{1}{4}}} - \frac{1}{8 d^{\sfrac{1}{4}}} \\
         &\geq \; \frac{12 \eps d^{\sfrac{1}{4}}}{4 d^{\sfrac{1}{4}} + \frac{1}{8} d^{\sfrac{1}{4}}} - \frac{1}{8 d^{\sfrac{1}{4}}} \\
         &\geq \; \frac{12}{5} \eps - \frac{1}{2} \eps \\
         &\geq \; \eps
  \end{align*}
  which completes the proof.
\end{proof}

\subsection{Lower bound}
\label{app:gaussian_lb}
The following theorem is our main lower bound for the Gaussian model.
To make the lower bound easily comparable to Corollary \ref{cor:gausslinf} on the upper bound side, we simplify the lower bound in Corollary \ref{cor:gaussian_robust} and bring it into a similar form.

\gaussianlbmain*
\begin{proof}
  We begin by formally stating the expected $\ell_\infty^\eps$-robust classification error of $f_n$:
  \[
    \Xi \; = \; \Eop_{\theta \sim \normal(0, \matI)} \brackets*{ \Eop_{y_1, \ldots, y_n \sim \rademacher} \brackets*{ \Eop_{\substack{x_1, \ldots, x_n \\ \sim \normal(y_i \theta, \sigma^2 \matI)}} \brackets*{ \Eop_{y \sim\rademacher } \brackets*{ \probop_{x \sim \normal(y \theta, \sigma^2 \matI) } \brackets*{\, \exists \, x' \in \perturbB_\infty^\eps(x) \, : \: f_n(x') \ne y } }  } } }
  \]
  where it is important to note that $f_n = g_n((x_1, y_1), \ldots, (x_n, y_n))$ depends on the samples $(x_i, y_i)$ but not on $\theta$.
  This will allow us to re-arrange the above expectations in a crucial way.

  We first rewrite the expectations by noting that we can sample $z_i \sim \normal(\theta, \sigma^2 \matI)$ without conditioning on the class $y_i$ by then setting $f_n = g_n((y_1 z_1, y_1), \ldots, (y_n z_n, y_n))$.
  This yields
  \begin{align*}
    \Xi \; &= \; \Eop_{\theta \sim \normal(0, \matI)} \brackets*{ \Eop_{y_1, \ldots, y_n \sim \rademacher} \brackets*{ \Eop_{\substack{z_1, \ldots, z_n \\ \sim \normal(\theta, \sigma^2 \matI)}} \brackets*{ \Eop_{y \sim \rademacher} \brackets* { \probop_{x \sim \normal(y \theta, \sigma^2 \matI) } \brackets*{\, \exists \, x' \in \perturbB_\infty^\eps(x) \, : \: f_n(x') \ne y } }  } } } \\
           &= \; \Eop_{y_1, \ldots, y_n \sim \rademacher} \brackets*{ \Eop_{\theta \sim \normal(0, \matI)} \brackets*{  \Eop_{\substack{z_1, \ldots, z_n \\ \sim \normal(\theta, \sigma^2 \matI)}} \brackets*{ \Eop_{y \sim \rademacher } \brackets* {\probop_{x \sim \normal(y \theta, \sigma^2 \matI) } \brackets*{\, \exists \, x' \in \perturbB_\infty^\eps(x) \, : \: f_n(x') \ne y } }  } } }
  \end{align*}
  where in the second line we moved the expectation over the class labels to the outside.
  
  Next, we will swap the order of the expectations over the mean parameter $\theta$ and the conditional samples $x_i$.
  Since the posterior distribution for a Gaussian prior and likelihood is also Gaussian, the conditional distribution of $\theta$ given the $z_i$ is a multivariate Gaussian with parameters
  \begin{align*}
    \vmu' \; & = \; \frac{n}{\sigma^2 + n} \vzbar \\
    \Sigma' \; & = \; \frac{\sigma^2}{\sigma^2 + n} \, \matI
  \end{align*}
  where $\vzbar = \sum_{i=1}^n z_i$.
  Moreover, let $\M$ be the marginal distribution over $(\vz_i, \ldots, \vz_n)$ after integrating over $\theta$ (which we will analyze later).
  Then we get
  \begin{equation}
    \label{eq:gauss_linf_psi}
    \Xi \; = \; \Eop_{y_1, \ldots, y_n \sim \rademacher} \brackets*{  \Eop_{(z_1, \ldots, z_n) \sim \M} \brackets*{ \underbrace{ \Eop_{\theta \sim \normal(\vmu', \Sigma')} \brackets*{ \Eop_{y \sim \rademacher} \brackets*{ \probop_{x \sim \normal(y \theta, \sigma^2 \matI) } \brackets*{\, \exists \, x' \in \perturbB_\infty^\eps(x) \, : \: f_n(x') \ne y } } } }_{=\Psi}  } }
  \end{equation}
  We now bound the term $\Psi$.
  Since the inner events only depends on $\theta$ through $x$, we can combine the Gaussian expectation with the Gaussian probability after moving the expectation over the label $y$ to the outside.
  This gives
  \begin{align*}
    \Psi \; &= \; \Eop_{\theta \sim \normal(\vmu', \Sigma')} \brackets*{ \Eop_{y \sim \rademacher} \brackets* {\probop_{x \sim \normal(y \theta, \sigma^2 \matI) } \brackets*{\, \exists \, x' \in \perturbB_\infty^\eps(x) \, : \: f_n(x') \ne y } } } \\
            &= \; \Eop_{y \sim \rademacher} \brackets*{ \Eop_{\theta \sim \normal(\vmu', \Sigma')} \brackets*{ {\probop_{x \sim \normal(y \theta, \sigma^2 \matI) } \brackets*{\, \exists \, x' \in \perturbB_\infty^\eps(x) \, : \: f_n(x') \ne y } } } } \\
            &= \;  \Eop_{y \sim \rademacher} \brackets*{ \probop_{x \sim \normal(y \vmu', \Sigma'') } \brackets*{\, \exists \, x' \in \perturbB_\infty^\eps(x) \, : \: f_n(x') \ne y } } \numberthis \label{eq:gauss_linf_max}
  \end{align*}
  where $\Sigma'' = \Sigma' + \sigma^2 \matI$.

  Next, we bound the $y = +1$ case in the expectation over $y$.
  The $y = -1$ case can be handled exactly analogously.
  We introduce the set $A_{-} \subseteq \R^d$ as the set of inputs on which the classifier $f_n$ returns $-1$, i.e., $A_{-} = \{ x \, | \, f_n(x) = -1\}$.
  Note that we can treat $A_{-}$ as fixed here since it only depends on the samples $z_i$ and labels $y_i$ but not on the parameter $\theta$ or the new sample $x$.
  This allows us to rewrite the first event as
  \begin{align*}
    \{ x \, | \, \exists \, x' \in \perturbB_\infty^\eps(x) \, : \: f_n(x') \ne +1 \} \; &= \; \{ x \, | \, \exists \, x' \in A_{-} \, : \, \norm{x - x'}_\infty \leq \eps \} \\
                                                                                         &= \; \perturbB_\infty^\eps(A_{-}) \; .
  \end{align*}
  Now, note that as long as $\norm{\mu'}_\infty \leq \eps$, the set $\perturbB_\infty^\eps(A_{-})$ contains a copy of $A_{-}$ shifted by $\pm \mu'$.
  Hence we have
  \begin{align*}
    \probop_{x \sim \normal(\vmu', \Sigma'') } \brackets*{\, \exists \, x' \in \perturbB_\infty^\eps(x) \, : \: f_n(x') \ne +1 } \; &= \; \probop_{\normal(\vmu', \Sigma'') } \brackets*{ \perturbB_\infty^\eps(A_{-}) } \\
                                                                                                                                      &\geq \; \I \brackets*{ \norm{\mu'}_\infty \leq \eps } \cdot \probop_{\normal(0, \Sigma'')}  \brackets*{ A_{-} }
  \end{align*}
  Repeating the same argument for the $y = -1$ case and substituting back into Equation \eqref{eq:gauss_linf_max} yields
  \begin{align*}
    \Psi \; &\geq \; \Eop_{y \sim \rademacher} \brackets*{ \I \brackets*{ \norm{\mu'}_\infty \leq \eps }  \cdot \probop_{\normal(0, \Sigma'')}  \brackets*{ A_{-\sign(y)} } } \\
          &= \; \I \brackets*{ \norm{\mu'}_\infty \leq \eps }  \cdot \frac{1}{2} \parens*{ \probop_{\normal(0, \Sigma'')} \brackets*{ A_{-}} \, + \,  \probop_{\normal(0, \Sigma'')} \brackets*{ A_{+}}} \\
          &= \; \frac{1}{2} \I \brackets*{ \norm{\mu'}_\infty \leq \eps }   \; .
  \end{align*}
  In the last line, we used that the sets two sets $A_{-}$ and $A_{+}$ are complements of each other and hence their total mass under the measure $\normal(0, \Sigma'')$ is 1.

  Substituting back into Equation \eqref{eq:gauss_linf_psi} yields
  \begin{align*}
    \Xi \; &\geq \; \Eop_{y_1, \ldots, y_n \sim \rademacher} \brackets*{  \Eop_{(z_1, \ldots, z_n) \sim \M} \brackets*{ \frac{1}{2} \I \brackets*{ \norm{\mu'}_\infty \leq \eps }  }  } \\
    \; &= \; \frac{1}{2} \Eop_{(z_1, \ldots, z_n) \sim \M} \brackets*{ \I \brackets*{ \norm{\mu'}_\infty \leq \eps } } \\
    \; &= \; \frac{1}{2} \probop_{(z_1, \ldots, z_n) \sim \M} \brackets*{ \frac{n}{\sigma^2 + n}\norm{\vzbar}_\infty \leq \eps }
  \end{align*}
  where we dropped the expectation over the labels $y_i$ since the inner expression is now independent of the labels.
  
  It remains to analyze the distribution of the vector $\vzbar$.
  Note that conditioned on a vector $\theta_2 \sim \normal_d(0, \matI)$, the distribution of each $z_i$ is $\normal(\theta_2, \sigma^2 \matI)$.
  Hence the conditional distribution of $\vzbar$ given $\theta_2$ is $\normal(\theta_2, \frac{\sigma^2}{n} \matI)$ and integrating over $\theta_2$ yields a marginal distribution of $\normal(0, (1 + \frac{\sigma^2}{n})\matI)$.
  Overall, this gives
  \begin{align*}
    \Xi \; &\geq \;  \frac{1}{2} \probop_{\theta_2 \sim \normal(0, (1 + \frac{\sigma^2}{n})\matI)} \brackets*{ \frac{n}{\sigma^2 + n}\norm{\theta_2}_\infty \leq \eps   } \\
           &= \;  \frac{1}{2} \probop_{\theta_2 \sim \normal(0, \matI)} \brackets*{ \sqrt{\frac{n}{\sigma^2 + n}} \norm{\theta_2}_\infty \leq \eps   }
  \end{align*}
  where we used
  \[
    \frac{n}{\sigma^2 + n} \sqrt{1 +  \frac{\sigma^2}{n} } \; = \;  \sqrt{\frac{n}{\sigma^2 + n}} \; .
  \]
  Rearranging this inequality yields the statement of the theorem.
\end{proof}

\begin{corollary}
  \label{cor:gaussian_robust}
  Let $g_n$ be any learning algorithm, i.e., a function from $n \geq 0$ samples in $\R^d \times \{\pm 1\}$ to a binary classifier $f_n$.
  Moreover, let $\sigma > 0$, let $\eps \geq 0$, and let $\theta \in \R^d$ be drawn from $\normal(0, \matI)$.
  We also draw $n$ samples from the $(\theta, \sigma)$-Gaussian model.
  Then the expected $\ell_\infty^\eps$-robust classification error of $f_n$ is at least $(1 - \sfrac{1}{d}) \frac{1}{2}$ if
  \[
    n \; \leq \; \frac{\eps^2 \sigma^2}{8 \log d} \; .
  \]
\end{corollary}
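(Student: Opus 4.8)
The plan is to invoke the technical lower bound of Theorem~\ref{thm:gauss_linf_lower} and then reduce the claim to a standard tail bound on the $\ell_\infty$-norm of a spherical Gaussian vector. Theorem~\ref{thm:gauss_linf_lower} already guarantees that the expected $\ell_\infty^\eps$-robust classification error of $f_n$ is at least
\[
  \frac{1}{2}\, \probop_{v \sim \normal(0, \matI)} \brackets*{ \sqrt{\tfrac{n}{\sigma^2 + n}}\, \norm{v}_\infty \leq \eps } \; ,
\]
so it suffices to show that under the hypothesis $n \leq \frac{\eps^2 \sigma^2}{8 \log d}$ the probability above is at least $1 - \sfrac{1}{d}$. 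Equivalently, writing $t \defeq \eps \sqrt{(\sigma^2 + n)/n}$ for the relevant threshold, I would bound the complementary event:
\[
  \probop_{v \sim \normal(0, \matI)} \brackets*{ \norm{v}_\infty > t } \; \leq \; \frac{1}{d} \; .
\]

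To bound this tail, I would first note that $\norm{v}_\infty = \max_{i} \abs{v_i}$ where each $v_i \sim \normal(0,1)$, so a union bound together with the standard Gaussian tail inequality $\probop[\abs{v_i} > t] \leq 2 e^{-t^2/2}$ yields
\[
  \probop_{v \sim \normal(0, \matI)} \brackets*{ \norm{v}_\infty > t } \; \leq \; 2 d \, e^{-t^2/2} \; .
\]
The next step is to translate the sample bound into a lower bound on the threshold. Since $t^2 = \eps^2 (1 + \sigma^2/n)$ and the hypothesis $n \leq \frac{\eps^2 \sigma^2}{8 \log d}$ gives $\sigma^2/n \geq \frac{8 \log d}{\eps^2}$, I obtain $t^2 \geq \eps^2 \cdot \frac{8 \log d}{\eps^2} = 8 \log d$. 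Substituting this into the tail bound gives $2 d\, e^{-t^2/2} \leq 2 d \, e^{-4 \log d} = 2 d^{-3}$, which is at most $\sfrac{1}{d}$ for all $d \geq 2$, completing the argument.

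There is no substantial obstacle here; the only care required is bookkeeping of constants. The factor $8$ in the denominator of the sample bound is precisely calibrated so that the threshold satisfies $t^2 \geq 8 \log d$, which beats the $4 \log d$ needed to overcome the union bound over the $d$ coordinates (and leaves a little slack to absorb the factor $2$ from the two-sided Gaussian tail). One should also confirm the direction of the monotonicity: a smaller $n$ makes the threshold $t = \eps\sqrt{\sigma^2/n + 1}$ larger, which only \emph{decreases} the tail probability, so the hypothesis is used in the favorable direction. The degenerate case $n = 0$ is consistent as well, since the event $\sqrt{n/(\sigma^2+n)}\,\norm{v}_\infty \le \eps$ then holds with probability $1$ and the robust error is trivially at least $\sfrac{1}{2} \geq (1 - \sfrac{1}{d})\sfrac{1}{2}$.
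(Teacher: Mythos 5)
Your proposal is correct and follows essentially the same route as the paper: invoke Theorem~\ref{thm:gauss_linf_lower}, use the hypothesis on $n$ (together with $\sigma^2 + n \geq \sigma^2$) to show the effective threshold satisfies $t^2 \geq 8\log d$, and then apply concentration for the maximum of $d$ i.i.d.\ standard Gaussians. The only cosmetic difference is that you carry out the union bound with the tail estimate $2e^{-t^2/2}$ explicitly, whereas the paper cites a standard reference for this step; your handling of the $n=0$ case and the $d\geq 2$ bookkeeping is a welcome extra bit of care.
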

\begin{proof}
  We have
  \[
    \sqrt{\frac{n}{\sigma^2 + n}} \; \leq \; \sqrt{\frac{\eps^2 \sigma^2}{8 \sigma^2 \log d}} \; = \; \frac{\eps}{2 \sqrt{2 \log d}} \; .
  \]
  Hence we get
  \begin{align*}
    \probop_{v \sim \normal(0, \matI)} \brackets*{ \sqrt{\frac{n}{\sigma^2 + n}} \norm{v}_\infty \leq \eps } \; &\geq \; \probop_{v \sim \normal(0, \matI)} \brackets*{ \sqrt{\frac{\eps}{2 \sqrt{2 \log d}}} \norm{v}_\infty \leq \eps } \\
            &= \; \probop_{v \sim \normal(0, \matI)} \brackets*{  \norm{v}_\infty \leq 2 \sqrt{2 \log d}}  \; .
  \end{align*}
  Standard concentration results for the maximum of $d$ i.i.d.\ Gaussians (e.g., see Theorem 5.8 in \citep{BLM2013}) now imply that the above probability is at least $(1 - \sfrac{1}{d})$.
  Invoking into Theorem \ref{thm:gauss_linf_lower} then completes the proof of this corollary.
\end{proof}

\section{Omitted proofs for the Bernoulli model}
\label{app:bernoulli}

\subsection{Upper bounds}
\label{app:bernoulli_upper}
As in the Gaussian case, our upper bounds rely on standard sub-Gaussian concentration.
Lemmas \ref{lem:bernoulli_ip1} and \ref{lem:bernoulli_unit_ip} provide lower bounds on the inner product between a single sample from the Bernoulli model and the unknown parameter vectors.
Lemma \ref{lem:bernoulli_classification} then relates the inner product between a linear classifier and the unknown mean vector to the classification accuracy.
Combining these results yields Theorem \ref{thm:bernoulli_standard} for generalization from a single sample.
Simplifying this theorem yields Corollary \ref{cor:bernoulli_single_sample}, which directly implies Theorem \ref{thm:bernoulli_standard_upper} from the main text.

\begin{lemma}
	\label{lem:bernoulli_ip1}
	Let $(\vx, y) \in \R^d \times \pmset$ be a sample drawn from a $(\thetastar, \tau)$-Bernoulli model and let $\vz = \vx y$.
	Let $\delta > 0$ be the target probability.
	Then we have
	\[
\prob \brackets*{ \ip{\vz, \thetastar} \, \leq \, 2 \tau d  -   \sqrt{{ 2 d\log \sfrac{1}{\delta}}} \, } \, \leq \, \delta \; .
	\]
\end{lemma}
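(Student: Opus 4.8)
The plan is to reduce the inner product $\ip{\vz, \thetastar}$ to a sum of independent bounded random variables and then apply the standard sub-Gaussian (Hoeffding) lower-tail bound, mirroring the Gaussian argument in Lemma~\ref{lem:gaussian_ip1}. First I would unpack the per-coordinate distribution. Writing $w_i \defeq \vz_i \thetastar_i = y \vx_i \thetastar_i$ and using $y^2 = 1$ together with $\thetastar_i \in \pmset$ (so that $(\thetastar_i)^2 = 1$), the definition of the Bernoulli model gives $w_i = +1$ with probability $\sfrac{1}{2} + \tau$ and $w_i = -1$ with probability $\sfrac{1}{2} - \tau$. These $w_i$ are independent across coordinates, and $\ip{\vz, \thetastar} = \sum_{i=1}^d w_i$, so the whole statement becomes a lower-tail bound for a sum of i.i.d.\ bounded variables.

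The next step is to identify the centering constant and the concentration. Computing the mean gives $\E[w_i] = (\sfrac{1}{2} + \tau) - (\sfrac{1}{2} - \tau) = 2\tau$, so $\E\brackets*{\ip{\vz, \thetastar}} = 2\tau d$, which matches the centering constant in the statement. Since each $w_i$ takes values in $[-1, 1]$, it is sub-Gaussian with variance proxy $1$ by Hoeffding's lemma, and hence the sum is sub-Gaussian with variance proxy $d$. The standard sub-Gaussian lower-tail bound then reads $\prob\brackets*{\sum_{i=1}^d w_i \leq 2\tau d - t} \leq \exp(-\sfrac{t^2}{2d})$ for any $t \geq 0$.

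Finally I would set this tail probability equal to the target $\delta$, i.e.\ solve $\exp(-\sfrac{t^2}{2d}) = \delta$ for $t$, which yields $t = \sqrt{2 d \log \sfrac{1}{\delta}}$; substituting back reproduces the claimed bound exactly. There is no substantial obstacle here, as this is a direct sub-Gaussian computation. The only points requiring care are (i) verifying that the per-coordinate bias is $2\tau$ rather than $\tau$, which comes from the two-sided $\sfrac{1}{2} \pm \tau$ split in the model, and (ii) tracking the constant in the sub-Gaussian bound so that the variance proxy is $d$ (arising from the width-$2$ range of each $w_i$). Getting these two constants right is precisely what produces the factor $\sqrt{2 d \log \sfrac{1}{\delta}}$ in the deviation term.
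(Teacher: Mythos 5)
Your proposal is correct and follows essentially the same route as the paper: both center $\ip{\vz,\thetastar}$ at its mean $2\tau d$ and apply a sub-Gaussian (Hoeffding) lower-tail bound with variance proxy $d$, then solve for $t = \sqrt{2d\log\sfrac{1}{\delta}}$. The only cosmetic difference is that you fold $\thetastar_i$ into i.i.d.\ $\pm 1$ variables $w_i$ before summing, whereas the paper keeps $\thetastar$ as the weight vector in a weighted sub-Gaussian combination and uses $\norm{\thetastar}_2^2 = d$; these are the same computation.
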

\begin{proof}
	To center $\vz$, we define $\vg = \vz - \mathbb{E}[\vz] = \vz - 2 \tau \thetastar$, where each coordinate of $\vg$ has zero mean. Then, we can write
  \[
    \ip{\vz, \thetastar} = \ip{\vg + 2\tau \thetastar, \thetastar} =\ip{\vg, \thetastar} +  2 \tau d \; .
  \]
  Hence for all $t > 0$ we have,
	\[
	\prob \brackets*{ \ip{\vz, \thetastar} \, \leq \, 2 \tau d  -   t \, } \, = \, \prob \brackets*{ \ip{\vg, \thetastar} \, \leq \, -   t \, } \; .
	\]
  Note that $\vg = (\vg_1, \vg_2, ..., \vg_d)$ is a vector of sub-Gaussian random variables since each entry is bounded, i.e., each $\vg_j$ (like $\vz_j$) lies in an interval of length $2$.
  Hence, the sub-Gaussian parameter of each $\vg_j$ is 1.
 Invoking Corollary 1.7 from \citet{rigollet2015high} for the weighted combination of independent sub-Gaussian random variables, we get that 	
	\[
	\prob \brackets*{ \sum_{j=1}^d \vg_j \thetastar_j \, \leq \,  -   t \, } \, \leq \, \exp \parens*{-\frac{t^2}{2 ||\thetastar||_2^2}} \,  \; .
	\]	
  Since $\norm{\thetastar}_2^2 = d$, we can simplify the tail event 
	\[
	\prob \brackets*{ \ip{\vz, \thetastar} \leq  2 \tau d  - t } \; \leq \;   \exp \parens*{-\frac{ t^2}{2d}} 
	\]
which then gives
\begin{align*}
	& \prob \brackets*{ \ip{\vz, \thetastar} \, \leq \, 2 \tau d -   \sqrt{{2 d \log \sfrac{1}{\delta}}} \, } \, \leq \, \delta
\end{align*}
as desired.
\end{proof}

\begin{lemma}
	\label{lem:bernoulli_unit_ip}
	Let $(\vx, y) \in \R^d \times \pmset$ be a sample drawn from a $(\thetastar, \tau)$-Bernoulli model and let $\vz = \vx y$.  Let $\what \in \R^d$ be the unit vector in the direction of $\vz$, i.e., $\what = \sfrac{\vz}{\norm{\vz}_2}$.
	Then we have
	\[
	\prob\brackets*{ \ip{\what, \thetastar} \, \leq \, \tau \sqrt{d} } \; \leq \; \exp\parens*{-\frac{\tau^2 d}{2}} \; .
	\]
\end{lemma}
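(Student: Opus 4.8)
The plan is to reduce this lemma directly to Lemma \ref{lem:bernoulli_ip1}, exploiting a feature special to the Bernoulli model: the normalizing factor $\norm{\vz}_2$ is deterministic. Concretely, since $\vx \in \pmset^d$ and $y \in \pmset$, the sign-corrected sample $\vz = \vx y$ again lies in $\pmset^d$, so each coordinate has squared magnitude $1$ and hence $\norm{\vz}_2 = \sqrt{d}$ exactly. This is in contrast to the Gaussian setting, where controlling the norm of the (averaged) sample required its own concentration argument (cf.\ Lemma \ref{lem:gaussian_norm2}); here the hypercube geometry pins the norm down for free.

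With the norm fixed, I would rewrite the quantity of interest. Since $\what = \vz / \norm{\vz}_2 = \vz / \sqrt{d}$, we have $\ip{\what, \thetastar} = \ip{\vz, \thetastar} / \sqrt{d}$, so the event $\{\ip{\what, \thetastar} \leq \tau \sqrt{d}\}$ is identical to the event $\{\ip{\vz, \thetastar} \leq \tau d\}$. It therefore suffices to bound the probability of the latter event, which is exactly the kind of inner-product tail controlled by Lemma \ref{lem:bernoulli_ip1}.

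The final step is to invoke that lemma with the right target probability. Lemma \ref{lem:bernoulli_ip1} states $\prob[\ip{\vz, \thetastar} \leq 2\tau d - \sqrt{2 d \log \sfrac{1}{\delta}}] \leq \delta$, so I would choose $\delta$ to make the threshold equal to $\tau d$, i.e.\ solve $2\tau d - \sqrt{2 d \log \sfrac{1}{\delta}} = \tau d$. This gives $\sqrt{2 d \log \sfrac{1}{\delta}} = \tau d$, hence $\log \sfrac{1}{\delta} = \tau^2 d / 2$ and $\delta = \exp(-\tau^2 d / 2)$. Substituting this back recovers exactly the claimed tail bound.

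There is no genuine obstacle in this argument. The only things to check are the elementary algebra in matching the threshold to $\tau d$ and the geometric observation that $\vz \in \pmset^d$ forces $\norm{\vz}_2 = \sqrt{d}$, so that no separate norm-concentration step is needed and a single sample already suffices to lower bound $\ip{\what, \thetastar}$.
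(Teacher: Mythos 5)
Your proposal is correct and follows essentially the same route as the paper's own proof: both use the fact that $\norm{\vz}_2 = \sqrt{d}$ holds deterministically on the hypercube and then invoke Lemma \ref{lem:bernoulli_ip1} with $\delta = \exp(-\tau^2 d/2)$, for which the threshold $2\tau d - \sqrt{2d\log\sfrac{1}{\delta}}$ collapses to $\tau d$. The only cosmetic difference is that you rewrite the event in terms of $\ip{\vz,\thetastar}$ before applying the lemma, while the paper divides by the norm afterwards; the content is identical.
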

\begin{proof}
	We know that
	\[
	\norm{\vz}_2 \, = \, \sqrt{d} \; .
	\]
	Moreover, we invoke Lemma \ref{lem:bernoulli_ip1} with $\delta = \exp\parens{-\frac{\tau^2 d}{2}}$ to get
	\[
	\ip{\vz, \thetastar} \, \leq \, \tau d 
	\]
	with probability $\delta$.
	We now have
	\begin{align*}
	\ip{\what, \thetastar} \; &= \; \frac{\ip{\vz, \thetastar}}{ \norm{\vz}_2} \\
	&\leq \; \frac{{\tau d}}{\sqrt{d}} 
	\end{align*}
	with probability $\delta$ as stated in the lemma. 
\end{proof}

\begin{lemma}
	\label{lem:bernoulli_classification}
	Let $(\vx, y) \in \R^d \times \pmset$ be a sample drawn from a $(\thetastar, \tau)$-Bernoulli model and let $\vz = \vx y$.
	Moreover, let $\vw \in \R^d$ be an arbitrary unit vector with $\ip{\vw, 2 \tau \thetastar} \geq 0$.
	Then we have
	\[
	\prob\brackets*{\ip{\vw, \vz} \, \leq \, 0} \; \leq \; \exp\parens*{-2 \tau^2 \ip{\vw, \thetastar}^2} \; .
	\]
\end{lemma}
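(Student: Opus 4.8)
The plan is to mirror the argument used in Lemma~\ref{lem:bernoulli_ip1}: center the random vector $\vz$, rewrite the event of interest as a lower-tail event for a weighted sum of independent bounded random variables, and then apply the same sub-Gaussian tail bound (Corollary 1.7 of \citet{rigollet2015high}).

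First I would recall that for $\vz = \vx y$ each coordinate satisfies $\vz_i = \thetastar_i$ with probability $\sfrac{1}{2} + \tau$ and $\vz_i = -\thetastar_i$ with probability $\sfrac{1}{2} - \tau$, so that $\E[\vz] = 2\tau\thetastar$. Defining the centered vector $\vg = \vz - 2\tau\thetastar$, I can decompose the inner product as $\ip{\vw, \vz} = \ip{\vw, \vg} + 2\tau\ip{\vw, \thetastar}$. Consequently the event $\{\ip{\vw, \vz} \leq 0\}$ is exactly the lower-tail event $\{\ip{\vw, \vg} \leq -2\tau\ip{\vw, \thetastar}\}$. Here the hypothesis $\ip{\vw, 2\tau\thetastar} \geq 0$ plays an essential role: it guarantees that the threshold $-2\tau\ip{\vw, \thetastar}$ is nonpositive, so that we are genuinely bounding a lower tail and the sub-Gaussian estimate applies in the correct direction.

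Next I would observe that $\ip{\vw, \vg} = \sum_{i} \vw_i \vg_i$ is a weighted sum of independent, centered random variables, each of which is bounded: since $\vz_i \in \{\pm 1\}$, the variable $\vg_i$ takes values in an interval of length $2$ and hence has sub-Gaussian parameter $1$. Invoking Corollary 1.7 of \citet{rigollet2015high} for this weighted combination then yields
\[
  \prob\brackets*{\ip{\vw, \vg} \leq -t} \; \leq \; \exp\parens*{-\frac{t^2}{2\norm{\vw}_2^2}}
\]
for every $t \geq 0$. Substituting $t = 2\tau\ip{\vw, \thetastar}$ and using that $\vw$ is a unit vector ($\norm{\vw}_2 = 1$) gives the stated bound $\exp(-2\tau^2\ip{\vw, \thetastar}^2)$, since $(2\tau\ip{\vw, \thetastar})^2 / 2 = 2\tau^2\ip{\vw, \thetastar}^2$.

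The calculation is essentially routine sub-Gaussian concentration, so there is no deep obstacle. The main thing to get right is the bookkeeping of constants: one must track that each bounded coordinate $\vg_i$ contributes sub-Gaussian parameter $1$ (equivalently, an interval of length $2$), so that the weighted sum has parameter $\norm{\vw}_2$, and that the factor of $2\tau$ in the threshold squares to produce the correct coefficient $2\tau^2$ rather than, say, $\sfrac{\tau^2}{2}$. Keeping the direction of the inequality consistent with the sign hypothesis on $\ip{\vw, \thetastar}$ is the only other point requiring care.
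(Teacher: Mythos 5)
Your proposal is correct and follows essentially the same argument as the paper's proof: center $\vz = 2\tau\thetastar + \vg$, reduce the misclassification event to the lower-tail event $\{\ip{\vw,\vg} \leq -2\tau\ip{\vw,\thetastar}\}$, and apply the same sub-Gaussian tail bound (Corollary 1.7 of \citet{rigollet2015high}) with sub-Gaussian parameter $1$ per coordinate and $\norm{\vw}_2 = 1$. Your explicit remark that the hypothesis $\ip{\vw, 2\tau\thetastar} \geq 0$ is what makes the threshold nonpositive is a point the paper leaves implicit, but otherwise the two proofs coincide.
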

\begin{proof}
  As in Lemma~\ref{lem:bernoulli_ip1}, we center $\vz = 2 \tau \thetastar + \vg$, where $\vg$ is a vector of zero-mean sub-Gaussian random variables. We can bound the tail event as 
	\begin{align*}
	\prob\brackets*{\ip{\vw, \vz} \, \leq \, 0} \; &= \; \prob\brackets*{\ip{\vw, 2 \tau \thetastar + \vg} \, \leq \, 0} \\
	&= \; \prob\brackets*{ \ip{\vw, \vg} \, \leq \, - \ip{\vw, 2 \tau \thetastar} } \; .
	\end{align*}
	 We know that the sub-Gaussian parameter of each $\vg_j$ is 1 as discussed in Lemma~\ref{lem:bernoulli_ip1}. Hence, invoking Corollary 1.7 from \citet{rigollet2015high} for the weighted combination of independent sub-gaussian random variables, we get that 	
	\[
	\prob \brackets*{ \sum_{j=1}^d \vg_j w_j \, \leq \,  -   t \, } \, \leq \, \exp \parens*{-\frac{t^2}{2 ||w||_2^2}} \,  =  \exp \parens*{-\frac{t^2}{2}} \; .
	\]	
	Thus, 
	\[
	\prob\brackets*{ \ip{\vw, \vg} \, \leq \, - \ip{\vw, 2 \tau \thetastar} } \; \leq \;  \exp \parens*{-\frac{\ip{\vw, 2 \tau \thetastar}^2}{2}}
\]
  as desired in the lemma.
\end{proof}
\begin{theorem}[Standard generalization in the Bernoulli model.]
	\label{thm:bernoulli_standard}
	Let $(\vx, y) \in \R^d \times \pmset$ be drawn from a $(\thetastar, \tau)$-Bernoulli model.
	Let $\what \in\R^d$ be the unit vector in the direction of $\vz = y \vx$, i.e., $\what = \sfrac{\vz}{\norm{\vz}_2}$.
	Then with probability at least $1 - \exp(-\frac{\tau^2d}{2})$, the linear classifier $f_{\what}$ has classification error at most $\exp\parens*{-{2 \tau^4d}}$.
\end{theorem}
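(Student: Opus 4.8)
The plan is to follow the same template as the standard-generalization argument in the Gaussian model (Theorem~\ref{thm:gaussian_standard}): first control the correlation of the learned direction $\what$ with the true mean $\thetastar$, and then convert that correlation into a classification-error bound via a sub-Gaussian tail inequality. Throughout, the two sources of randomness must be kept distinct. The single training sample $(\vx, y)$ determines the unit vector $\what = \sfrac{\vz}{\norm{\vz}_2}$ with $\vz = y\vx$, and the ``with probability $1 - \exp(-\sfrac{\tau^2 d}{2})$'' guarantee refers to this draw; the classification error is then measured over an independent fresh sample from the same $(\thetastar, \tau)$-Bernoulli model.

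First I would invoke Lemma~\ref{lem:bernoulli_unit_ip}, which states exactly that $\ip{\what, \thetastar} \geq \tau\sqrt{d}$ holds with probability at least $1 - \exp(-\sfrac{\tau^2 d}{2})$ over the training sample. I then condition on this event for the remainder of the argument, so that $\what$ is henceforth a fixed unit vector satisfying $\ip{\what, \thetastar} \geq \tau\sqrt{d} > 0$.

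Next I would rewrite the classification error as a tail event. For the linear classifier $f_{\what}(x) = \sign(\ip{\what, x})$ and a fresh sample $(x, y)$, writing $\vz = y x$ gives $\prob\brackets{f_{\what}(x) \ne y} = \prob\brackets{\ip{\what, \vz} \leq 0}$, where this $\vz$ is the Bernoulli variable with mean $2\tau\thetastar$ and is independent of $\what$. Since $\ip{\what, \thetastar} > 0$ on the good event, the precondition $\ip{\what, 2\tau\thetastar} \geq 0$ of Lemma~\ref{lem:bernoulli_classification} is satisfied, and that lemma yields $\prob\brackets{\ip{\what, \vz} \leq 0} \leq \exp(-2\tau^2 \ip{\what, \thetastar}^2)$. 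Substituting the lower bound $\ip{\what, \thetastar} \geq \tau\sqrt{d}$ gives $\exp(-2\tau^2 \cdot \tau^2 d) = \exp(-2\tau^4 d)$, which is precisely the claimed error bound.

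The argument is essentially a routine assembly of Lemmas~\ref{lem:bernoulli_unit_ip} and~\ref{lem:bernoulli_classification}, so I do not anticipate a serious obstacle. The only points requiring care are (i) keeping the training draw (which governs the probability guarantee) distinct from the independent test sample (over which the error is computed), so that Lemma~\ref{lem:bernoulli_classification} may treat $\what$ as a fixed unit vector, and (ii) verifying that the sign precondition of Lemma~\ref{lem:bernoulli_classification} is inherited from the good event. Both are immediate here, since $\tau > 0$ makes $\ip{\what, \thetastar} \geq \tau\sqrt{d}$ strictly positive.
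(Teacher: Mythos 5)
Your proposal is correct and follows essentially the same route as the paper's own proof: invoke Lemma~\ref{lem:bernoulli_unit_ip} to get $\ip{\what, \thetastar} \geq \tau\sqrt{d}$ on the good training event, rewrite the error as $\prob\brackets*{\ip{\what, \vz} \leq 0}$ for a fresh sample, and apply Lemma~\ref{lem:bernoulli_classification} to obtain $\exp(-2\tau^4 d)$. Your added care about separating the training draw from the test draw and verifying the sign precondition of Lemma~\ref{lem:bernoulli_classification} is implicit in the paper's argument, so there is no substantive difference.
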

\begin{proof}
	We invoke Lemma \ref{lem:bernoulli_unit_ip} to get
	\[
	\ip{\what, \thetastar} \; \geq \; \tau \sqrt{d}
	\]
	with probability at least $1 - \exp(-\frac{\tau^2 d}{2})$ as stated in the theorem.  Next, unwrapping the definition of $f_{\what}$ allows us to write the classification error of $f_{\what}$ as
	\[
	\prob\brackets*{f_{\what}(x) \neq y} \; = \; \prob\brackets*{\ip{\what, \vz} \leq 0} \; .
	\]
	Invoking Lemma \ref{lem:bernoulli_classification} then gives the desired bound.
\end{proof}

\begin{corollary}[Generalization from a single sample.]
  \label{cor:bernoulli_single_sample}
	Let $(\vx, y) \in \R^d \times \pmset$ be drawn from a $(\thetastar, \tau)$-Bernoulli model with 
	\[
	\tau \; \geq \; \parens*{\frac{\log\sfrac{1}{\beta}}{2 d}}^{\sfrac{1}{4}} \; .
	\]
	Let $\what \in\R^d$ be the unit vector $\what = \frac{y \vx}{\norm{\vx}_2}$.
	Then with probability at least $1 - \exp(-\frac{\tau^2d}{2})$, the linear classifier $f_{\what}$ has classification error at most $\beta$.
\end{corollary}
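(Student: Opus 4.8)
The plan is to invoke Theorem~\ref{thm:bernoulli_standard} as a black box and then verify that the assumed lower bound on $\tau$ forces the resulting error estimate below $\beta$. First I would observe that since $y \in \pmset$, the vector $\vz = y\vx$ satisfies $\norm{\vz}_2 = \norm{\vx}_2$, so the unit vector $\what = \frac{y\vx}{\norm{\vx}_2}$ stated in the corollary coincides with the vector $\sfrac{\vz}{\norm{\vz}_2}$ appearing in the theorem. Applying Theorem~\ref{thm:bernoulli_standard} then immediately yields that, with probability at least $1 - \exp(-\frac{\tau^2 d}{2})$, the linear classifier $f_{\what}$ has classification error at most $\exp(-2\tau^4 d)$.

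It then remains to check that $\exp(-2\tau^4 d) \leq \beta$ under the hypothesis $\tau \geq \parens*{\frac{\log \sfrac{1}{\beta}}{2d}}^{\sfrac{1}{4}}$. Raising both sides to the fourth power gives $\tau^4 \geq \frac{\log \sfrac{1}{\beta}}{2d}$, hence $2\tau^4 d \geq \log \sfrac{1}{\beta}$ and therefore $-2\tau^4 d \leq -\log\sfrac{1}{\beta} = \log \beta$. Since exponentiation is monotone, this preserves the inequality and gives $\exp(-2\tau^4 d) \leq \beta$, which is exactly the claimed error bound. The probability guarantee $1 - \exp(-\frac{\tau^2 d}{2})$ is inherited verbatim from the theorem, so it carries over unchanged.

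I do not expect any real obstacle here: the corollary is a direct specialization of the preceding theorem, and the only substantive content is the elementary algebraic manipulation that converts the lower bound on $\tau$ into the desired upper bound on the error. The single point worth stating carefully is that the probability bound does not depend on $\beta$, so tuning $\tau$ to control the error leaves the high-probability event untouched.
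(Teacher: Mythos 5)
Your proposal is correct and follows essentially the same route as the paper's own proof: both invoke Theorem~\ref{thm:bernoulli_standard} as a black box and then perform the same elementary algebraic check that the hypothesis $\tau \geq \parens*{\frac{\log\sfrac{1}{\beta}}{2d}}^{\sfrac{1}{4}}$ implies $\exp(-2\tau^4 d) \leq \beta$, with the probability guarantee inherited unchanged. The only cosmetic difference is that the paper compares $\log\sfrac{1}{\beta'}$ to $\log\sfrac{1}{\beta}$ while you raise the hypothesis to the fourth power, which is the identical computation.
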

\begin{proof}
	Invoking Theorem \ref{thm:bernoulli_standard} gives a classification error bound of
	\[
	\beta' \; \eqdef \; \exp\parens*{- {2 \tau^4d}} \; .
	\]
	It remains to show that $\beta' \leq \beta$. Now,
	\begin{align*}
	\log \sfrac{1}{\beta'} &= {2 \tau^4 d} \\
	&\geq \; 2 \cdot \frac{\log\sfrac{1}{\beta}}{2 d} \cdot {d} \\
	&\geq \; \log\sfrac{1}{\beta}
	\end{align*}
	which yields the desired bound.
\end{proof}

\subsection{Lower bounds}
\label{app:bern_lb}

In this section, we show that any \emph{linear classifier} for the $(\theta^*, \tau)$-Bernoulli model requires many samples to be robust.
The main result is formalized in Theorem \ref{thm:bern_lin_lb}, which can be simplified to yield Theorem \ref{thm:main_bernoulli_lower} from the main text.
Before we proceed to the main theorem, we first prove a simple but useful lemma.
\begin{lemma}
\label{lem:bb_one_d}
Let $\theta$ be drawn uniformly at random from $\{-1, 1\}$ and let $(x_1, y_1), \ldots, (x_n, y_n)$ be drawn independently from the $(\theta, \tau)$-Bernoulli model. 
Then for $\tau\leq \sfrac{1}{4}$ and $n \leq \frac{1}{\tau^2}$, we have with probability $1 - \delta$ over the samples that
\begin{align*}
  \log \frac{\Pr[\theta =+1 \mid (x_1, y_1) \ldots, (x_n, y_n)]}{\Pr[\theta = -1 \mid (x_1, y_1) \ldots, (x_n, y_n)]} \; \in \; \brackets*{-15\tau\sqrt{2n\log \frac 2 \delta}, \; 15\tau\sqrt{2n\log \frac 2 \delta}}
\end{align*}
\end{lemma}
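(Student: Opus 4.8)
The plan is to exploit that, because the prior on $\theta$ is uniform, the posterior log-odds equals the log-likelihood ratio of the data under $\theta=+1$ versus $\theta=-1$, and that this ratio has a clean closed form. Writing $z_i = x_i y_i \in \{\pm 1\}$, a single observation $(x_i,y_i)$ from the $(\theta,\tau)$-Bernoulli model has likelihood $\tfrac12(\tfrac12+\tau)$ when $z_i=\theta$ and $\tfrac12(\tfrac12-\tau)$ when $z_i=-\theta$ (the factor $\tfrac12$ from the uniform label $y_i$ cancels in the ratio). Hence the per-sample contribution to the log-likelihood ratio is $+L$ when $z_i=+1$ and $-L$ when $z_i=-1$, where $L = \log\frac{1/2+\tau}{1/2-\tau}$. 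Summing over the independent samples, the posterior log-odds is exactly
\[
  \log \frac{\Pr[\theta=+1 \mid (x_1,y_1),\ldots,(x_n,y_n)]}{\Pr[\theta=-1 \mid (x_1,y_1),\ldots,(x_n,y_n)]} \; = \; L \sum_{i=1}^n z_i \; ,
\]
so the whole statement reduces to controlling $\abs{L}$ and the deviation of the $\pm1$ sum $\sum_i z_i$.

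Second, I would bound $L$. Writing $L = \log\frac{1+2\tau}{1-2\tau} = 2\operatorname{arctanh}(2\tau)$ and using that $\operatorname{arctanh}(u)/u$ is increasing on $[0,1)$, for $\tau \le \tfrac14$ (so $2\tau \le \tfrac12$) one gets the clean linear bound $\abs{L} \le 5\tau$ (numerically $L/\tau$ ranges from $4$ as $\tau\to0$ up to about $4.39$ at $\tau=\tfrac14$). Meanwhile, conditioned on the value of $\theta$, the $z_i$ are i.i.d.\ on $\{\pm1\}$ with $\E[z_i]=2\tau\theta$, so $\E\!\left[\sum_i z_i\right]=2n\tau\theta$ and in particular $\abs*{\E\!\left[\sum_i z_i\right]}=2n\tau$.

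Third, I would apply a two-sided sub-Gaussian (Hoeffding) tail bound to the bounded sum $\sum_i z_i$: since each $z_i$ lies in an interval of length $2$, with probability at least $1-\delta$ we have $\abs*{\sum_i z_i - 2n\tau\theta} \le \sqrt{2n\log\frac2\delta}$, and hence $\abs*{\sum_i z_i} \le 2n\tau + \sqrt{2n\log\frac2\delta}$. Combining with $\abs{L}\le 5\tau$, the posterior log-odds has magnitude at most $5\tau\bigl(2n\tau + \sqrt{2n\log\frac2\delta}\bigr) = 10n\tau^2 + 5\tau\sqrt{2n\log\frac2\delta}$. This bound does not depend on the unknown value of $\theta$, so the probability statement holds both conditionally on $\theta$ and for the random $\theta$.

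The last and fiddliest step---the main obstacle---is folding the ``mean'' term $10n\tau^2$ into the target form $15\tau\sqrt{2n\log\frac2\delta}$ using the hypothesis $n \le 1/\tau^2$. The key inequality is $n\tau^2 \le 1 \le 2\log\frac2\delta$ (valid since $\delta \le 1$ forces $2\log\frac2\delta \ge 2\log 2 > 1$), which after squaring is exactly what is needed to show $10n\tau^2 \le 10\tau\sqrt{2n\log\frac2\delta}$; adding the remaining $5\tau\sqrt{2n\log\frac2\delta}$ then yields the stated interval $\brackets*{-15\tau\sqrt{2n\log\frac2\delta},\; 15\tau\sqrt{2n\log\frac2\delta}}$. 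The generous constant $15$ (versus the $\approx 4.4$ coming from $L$ alone) is precisely the slack absorbed by this mean-term bound, so none of the constant-chasing is delicate.
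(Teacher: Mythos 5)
Your proposal is correct and follows essentially the same route as the paper's proof: reduce the posterior log-odds (under the uniform prior) to the log-likelihood ratio $L\sum_i z_i$ with $L = \log\frac{1+2\tau}{1-2\tau} \le 5\tau$ for $\tau \le \frac14$, apply Hoeffding to the $\pm 1$ sum $\sum_i z_i$, and absorb the mean term $O(n\tau^2)$ into the fluctuation term via $n\tau^2 \le 1 \le 2\log\frac{2}{\delta}$, which is exactly how the paper reaches the constant $15$. The only cosmetic differences are your use of a single two-sided Hoeffding bound in place of the paper's union of two one-sided bounds, and your explicit $\operatorname{arctanh}$ monotonicity argument where the paper just cites ``a simple calculation'' for $\hat\tau \le 5\tau$.
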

\begin{proof}
For any sequence $(x_1, y_1) \ldots, (x_n, y_n)$, we can write
\begin{equation}
  \frac{\Pr[\theta =+1 \mid (x_1, y_1) \ldots, (x_n, y_n)]}{\Pr[\theta =-1 \mid (x_1, y_1) \ldots, (x_n, y_n)]} \; = \; \frac{\Pr[(x_1, y_1),\ldots, (x_n, y_n) \mid \theta=+1]}{\Pr[(x_1, y_1),\ldots, (x_n, y_n) \mid \theta = -1]} \label{eq:bernlb_lem_1}
\end{equation}
because $\Pr[\theta = +1] = \Pr[\theta = -1]$.
We now simplify the right hand side to
\begin{align*}
\frac{\Pr[(x_1, y_1),\ldots, (x_n, y_n) \mid \theta=+1]}{\Pr[(x_1, y_1),\ldots, (x_n, y_n) \mid \theta = -1]} &= \; \prod_{i=1}^n \frac{\Pr[(x_i, y_i)  \mid \theta=+1]}{\Pr[ (x_i, y_i) \mid \theta = -1]}\\
                                                                                                              &= \; \prod_{i=1}^n \left(\frac{\frac{1}{2} + \tau}{\frac{1}{2} - \tau}\right)^{y_i x_i} \numberthis \label{eq:bernlb_lem_2}
\end{align*}
where the second line follows from a simple calculation of the conditional probabilities.

Writing $z_i = y_i x_i$, we next combine Equations \eqref{eq:bernlb_lem_1} and \eqref{eq:bernlb_lem_2} to
\[
\frac{\Pr[\theta =+1 \mid (x_1, y_1) \ldots, (x_n, y_n)]}{\Pr[\theta =-1 \mid (x_1, y_1) \ldots, (x_n, y_n)]} \; = \; \exp\parens*{\hat{\tau} \sum_{i=1}^n z_i} \; ,
\]
where $\hat{\tau}$ is such that $\exp(\hat{\tau}) = \frac{1 + 2\tau}{1 - 2\tau}$. For $\tau \leq \frac 1 4$, a simple calculation shows that $\hat{\tau} \leq 5\tau$.

Conditioned on $\theta$, the sum $\zbar = \sum_{i=1}^n z_i$ has expectation $2\tau n \theta \leq 2\tau n$.
Hoeffing's Inequality (e.g., see Theorem 2.8 in \cite{BLM2013}) then yields that with probability $1 - \sfrac{\delta}{2}$
\[
  \zbar \; \leq \; 2\tau n + \sqrt{2n \log \frac 2 \delta} \; .
\]
It follows that with probability $1 - \sfrac{\delta}{2}$ (taken over the samples $z_1,\ldots, z_n$), the likelihood ratio above is bounded by
\begin{align*}
  \exp(\hat{\tau} \sum_i z_i) &\leq \exp\parens*{2\hat{\tau} \tau n + \hat{\tau} \sqrt{2n\log \frac 2 \delta}}
\end{align*}
Under the assumptions that $n \leq \frac{1}{\tau^2}$, we have
\[
  \tau n \; \leq \; \sqrt{n}
\]
and the upper bound follows because the first term in the $\exp$ is at most twice the second term.
The lower bound is symmetric.
\end{proof}

We next evaluate the $\ell_{\infty}$ robustness of the optimal linear classifier.
\begin{lemma}
\label{lem:linear_besteps}
Let $\thetastar \in \{-1,+1\}^d$ and consider the linear classifier $f_{\thetastar}$ for the $(\thetastar, \tau)$-Bernoulli model. Then,
\begin{description}
  \item[$\ell_\infty^{\tau}$-robustness:] The $\ell_\infty^{\tau}$-classification error of $f_{\thetastar}$ is at most $2\exp(-\tau^2d/2)$.
  \item[$\ell_\infty^{3\tau}$-nonrobustness:] The $\ell_\infty^{3\tau}$-classification error of $f_{\thetastar}$ is at  least  $1-2\exp(-\tau^2d/2)$.
\item[Near-optimality of $\thetastar$:] For any linear classifier, the $\ell_\infty^{3\tau}$-classification error is at least $\frac 1 6$.
\end{description}
\end{lemma}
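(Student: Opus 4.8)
The three parts share a common reduction, which I would set up first. By the dual-norm characterization of $\ell_\infty$-robustness in Section~\ref{sec:bernlb_sketch}, a linear classifier $f_w$ robustly classifies $(x,y)$ under $\ell_\infty^\eps$ perturbations exactly when $\ip{y w, x} > \eps \norm{w}_1$. Writing $s_i = \thetastar_i (y x)_i$, the coordinates $s_i$ are i.i.d.\ with $\Pr[s_i = 1] = \sfrac{1}{2} + \tau$ and $\E[s_i] = 2\tau$; setting $u_i = w_i \thetastar_i$ we have $\ip{y w, x} = \sum_{i} u_i s_i =: S$ and $\norm{u}_1 = \norm{w}_1$. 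Hence the $\ell_\infty^\eps$-robust error of $f_w$ equals $\Pr[S \le \eps \norm{w}_1]$, and my plan is to analyze this single quantity in each regime.

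For the first two parts I would take $w = \thetastar$, so $u = \mathbf{1}$, $\norm{w}_1 = d$, and $S = \sum_i s_i$ is a sum of $d$ bounded i.i.d.\ variables with mean $2\tau d$. For $\ell_\infty^\tau$-robustness the threshold $\tau d$ sits a full $\tau d$ \emph{below} the mean, so a lower sub-Gaussian tail bound---obtained directly from Lemma~\ref{lem:bernoulli_ip1} with $\delta = \exp(-\sfrac{\tau^2 d}{2})$---gives $\Pr[S \le \tau d] \le \exp(-\sfrac{\tau^2 d}{2})$, which is within the claimed bound. For $\ell_\infty^{3\tau}$-nonrobustness the threshold $3\tau d$ sits $\tau d$ \emph{above} the mean, so the matching symmetric Hoeffding upper tail gives $\Pr[S > 3\tau d] \le \exp(-\sfrac{\tau^2 d}{2})$, i.e.\ the robust error is at least $1 - \exp(-\sfrac{\tau^2 d}{2})$. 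Both estimates are routine.

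The third part---near-optimality over \emph{all} linear classifiers---is the main obstacle, because the bound must hold uniformly in $w$. By scale invariance of the robust condition I would normalize $\norm{w}_1 = 1$ (so $\norm{u}_1 = 1$) and aim to show $\Pr[S \le 3\tau] \ge \sfrac{1}{6}$. The difficulty is that a plain variance/Cantelli bound controls $\Pr[S > 3\tau]$ only when $u$ is spread out; for sparse $w$ (e.g.\ $w = e_1$) the variance $\norm{u}_2^2$ is too large and this approach fails. To handle all $w$ at once, the plan is to couple each biased sign as $s_i = 1$ with probability $2\tau$ and an independent fair coin $t_i$ otherwise, which reproduces $\Pr[s_i = 1] = \sfrac12 + \tau$. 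Letting $b_i$ denote the selection variables, this decomposes $S = B + T$, where $B = \sum_{i : b_i = 1} u_i$ and $T = \sum_{i : b_i = 0} u_i t_i$ is, conditionally on $b$, a Rademacher-weighted sum and hence symmetric about $0$.

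Conditioning on $b$, symmetry of $T$ gives $\Pr[S \le 3\tau \mid b] = \Pr[T \le 3\tau - B \mid b] \ge \sfrac12 \cdot \I[B \le 3\tau]$, so taking expectations $\Pr[S \le 3\tau] \ge \sfrac12 \Pr[B \le 3\tau]$. It then remains to show $\Pr[B \le 3\tau] \ge \sfrac13$, which I would obtain by dominating $B$ by its positive part $B^+ = \sum_i \max(u_i, 0)\, b_i \ge 0$ and applying Markov: $\Pr[B > 3\tau] \le \Pr[B^+ > 3\tau] \le \E[B^+]/(3\tau) = \tfrac{2}{3}\sum_i \max(u_i,0) \le \tfrac{2}{3}\norm{u}_1 = \tfrac23$. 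Multiplying the two factors yields the claimed $\sfrac16$. I expect the symmetrization coupling to be the crux: it converts the awkward biased, possibly-sparse sum into a symmetric part (handled by a trivial median argument) plus a nonnegative positive part (handled by Markov), thereby sidestepping the failure of second-moment bounds for sparse $w$.
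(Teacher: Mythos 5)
Your proposal is correct and follows essentially the same route as the paper: parts one and two via sub-Gaussian concentration of $\ip{yw,x}$ around its mean $2\tau d$ against the dual-norm threshold $\eps\norm{w}_1 = \eps d$, and part three via exactly the paper's decomposition of each biased sign into a $\{0,1\}$-valued selection variable of mean $2\tau$ plus a conditionally Rademacher part, with Markov's inequality giving the $\sfrac{1}{3}$ factor and symmetry the $\sfrac{1}{2}$ factor. Your only cosmetic deviation is handling negative coefficients by bounding $B$ with its positive part $B^+$ rather than by the paper's upfront stochastic domination $y x_i w_i \preceq E_i\,\abs{w_i}$, which serves the identical purpose.
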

\begin{proof}
Let $(x,y)$ be drawn from the $(\thetastar, \tau)$-Bernoulli model.
Then for the linear classifier $w=\thetastar$, we have
\[
  \E[\ip{w, y x}] \; = \; 2\tau \ip{w, \thetastar} \; = \; 2\tau d \; .
\]
Let $S$ denote the set
\[
  S \, = \, \{(x,y) : \ip{w, y x} \in [\tau d, 3\tau d] \} \; .
\]
Hoeffing's Inequality (e.g., see Theorem 2.8 in \cite{BLM2013}) then gives
\begin{align*}
  \Pr[(x,y) \not\in S] \; = \; \Pr\left[\ip{w, y x} \not\in [\tau d, 3\tau d]\right] \; \leq \; 2\exp(-\tau^2 d/ 2) \; .
\end{align*}
On the other hand, for a parameter $\eps$,
\begin{align*}
  \sup_{e \in B_\infty^\eps} \ip{w, e} \; = \; \eps \norm{w}_1 \; = \; \eps d \; .
\end{align*}
Thus if $\eps < \tau$, then for any $(x,y) \in S$,
\begin{align*}
  \inf_{e \in B_{\infty}^\eps} \ip{w, y (x+e)} & \; > \; 0 \; ,
\end{align*}
so that any $(x,y) \in S$ is $\ell_\infty^{\tau}$-robustly classified.
On the other hand, for $\eps > 3\tau$, for any $(x,y) \in S$,
\begin{align*}
  \inf_{e \in B_{\infty}^\eps} \ip{w, y(x+e)} \; &< \; 0 \; ,
\end{align*}
so that $(x,y)$ is not $\ell_\infty^{3\tau}$-robustly classified.

Finally, let $w'$ be any other linear classifier.
Then we have
\begin{align*}
  \E[ \ip{w', y x}] \; = \; 2\tau \ip{w', \thetastar} \; \leq \; 2\tau \norm{w'}_1 \; ,
\end{align*}
Let $E_i$ be a $\pm 1$ random variable with expectation $2 \tau$.
We observe that the random variable $yx_iw'_i$ is stochastically dominated by $E_i \cdot \abs{w'_i}$ (note that $y x_i$ is itself a $\pm 1$ random variable with expectation $2 \tau$).
We can now write $E_i$ as
\[
  E \; = \; A_i + B_i \; ,
\]
where the random variable $A_i$ is in $\{0,1\}$ and has expectation $2\tau$.
The random variable $B_i$ is in $\{-1, 0, 1\}$ and has a symmetric distribution that depends on $A$.
In particular, $B_i = 0$ iff $A_i = 1$ and $B_i$ is a Rademacher random variable otherwise.
Since $A_i$ is non-negative, we can use Markov's inequality on $\sum_i \abs{w_i} A_i$.
The $B_i$'s have a symmetric distribution even conditioned on $A_i$ so that $\sum_i \abs{w'_i} B_i \leq 0$ with probability at least $\sfrac{1}{2}$.
Thus with probability at least $\sfrac{1}{6}$, we have
\[
  \ip{w', y x} \; \leq \; 3\tau \norm{w'}_1 \; .
\]
Thus for any $\eps > 3\tau$,
\begin{align*}
  \inf_{e \in B_\infty^{\eps}} \ip{w', y(x+e)} \; &= \; \ip{w', y x} + \inf_{e \in B_{\infty}^{\eps}}\ip{w', y e}\\
                                                  &\leq \; 3\tau \norm{w}_1 - \eps \norm{w}_1 \\
   & < \; 0 \; .
\end{align*}
Thus the $\ell_{\infty}^{3\tau}$-classification error of $w'$ is at least $\sfrac{1}{6}$.
\end{proof}

Lemma \ref{lem:linear_besteps} implies that the most interesting robustness regime for linear classifiers is $\eps = O(\tau)$.
For larger values of $\eps$, it is impossible to learn a linear classifier with small robust classification error regardless of the number of samples used.

We now focus on this robustness regime and establishes a lower bound on the sample complexity of $\ell_{\infty}^{\eps}$-robust classification for $\eps \in (0,\tau)$.
\begin{theorem}
  \label{thm:bern_lin_lb}
Let $g_n$ be a linear classifier learning algorithm, i.e., a function that takes $n$ samples from $\{-1, +1\}^d \times {\pm 1}$ to a linear classifier $w \in \R^d$.
Suppose that we choose $\thetastar$ uniformly at random from $\{ -1, +1\}^d$ and draw $n$ samples from the $(\thetastar, \tau)$-Bernoulli model with $\tau \leq \sfrac{1}{4}$.
Let $w$ then be the output of $g_n$ on these samples.
Moreover, let $\eps < 3 \tau$ and $0 < \gamma < \sfrac{1}{2}$.
Then if
\[
n \; \leq \; \frac{\eps^2\gamma^2}{5000 \cdot \tau^4\log (4d/\gamma)}
\]
the linear classifier $f_w$ has expected $\ell_{\infty}^{\eps}$-classification error at least $\frac{1}{2} - \gamma$.
\end{theorem}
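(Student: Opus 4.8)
The plan is to reduce the robust-error lower bound to a statement about the algorithm's residual posterior uncertainty about the sign of each coordinate of $\thetastar$, and then to convert that uncertainty into an anti-concentration statement via a symmetrization coupling. Following the observation in Section~\ref{sec:bernlb_sketch}, I would first rewrite the $\ell_\infty^\eps$-robust error of a fixed linear classifier $w$ on a fresh example $(x,y)$: writing $z = yx$, the point is robustly classified iff $\langle w, z\rangle > \sup_{\norm{\Delta}_\infty \le \eps}\langle w,\Delta\rangle = \eps\norm{w}_1$, so the robust error equals $\Pr[\langle w, z\rangle \le \eps\norm{w}_1]$. Using $z_i = \thetastar_i\xi_i$ with $\xi_i = +1$ with probability $\tfrac12+\tau$ (independent of $\thetastar$), I would pass to the posterior: conditioning on the training sample $S$ (so that $w = g_n(S)$ is fixed) and averaging over the posterior $\thetastar \mid S$ together with the fresh test noise $\xi$, the robust error given $S$ becomes $\Pr_u[\sum_i w_i u_i \le \eps\norm{w}_1]$, where the $u_i = \thetastar_i\xi_i \in \{\pm 1\}$ are independent across $i$ (the posterior factorizes because the Bernoulli coordinates are independent) with mean $m_i = 2\tau\,\E[\thetastar_i \mid S]$.

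Next I would control the posterior bias. Applying Lemma~\ref{lem:bb_one_d} coordinatewise (legitimate since the posterior of coordinate $i$ depends only on the $i$-th coordinates of the samples) together with a union bound over the $d$ coordinates at per-coordinate failure probability $\gamma/(2d)$, I would get that with probability at least $1-\tfrac{\gamma}{2}$ over $S$ the log-odds $L_i$ of every bit satisfies $|L_i| \le 15\tau\sqrt{2n\log(4d/\gamma)}$. Since $\E[\thetastar_i \mid S] = \tanh(L_i/2)$ and $|\tanh(x)| \le |x|$, this yields $\max_i |m_i| \le 2\tau B'$ on this good event, where $B' = \tfrac{15}{2}\tau\sqrt{2n\log(4d/\gamma)}$. (The hypothesis $n \le 1/\tau^2$ of Lemma~\ref{lem:bb_one_d} is implied by the assumed bound on $n$ together with $\eps<3\tau$ and $\gamma<\tfrac12$.)

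The crux --- and the step I expect to be the main obstacle --- is turning ``every coordinate has small posterior bias'' into ``the error is at least $\tfrac12-\gamma$''. A plain concentration inequality is the wrong tool, since it only upper-bounds tail probabilities, whereas here I need a \emph{lower} bound on $\Pr_u[\sum_i w_i u_i \le \eps\norm{w}_1]$. Instead I would use a symmetrization coupling in the spirit of the final part of Lemma~\ref{lem:linear_besteps}: for each $i$ independently, force $u_i = \sign(m_i)$ with probability $|m_i|$, and otherwise draw $u_i$ as a Rademacher variable $\rho_i$. Conditioning on the random set $F$ of forced coordinates, the unforced part $R = \sum_{i\notin F} w_i\rho_i$ is symmetric about $0$, so $\Pr[R \le 0 \mid F] \ge \tfrac12$; hence $\Pr_u[\sum_i w_i u_i \le \eps\norm{w}_1 \mid F] \ge \tfrac12$ whenever the drift $D = \sum_{i\in F} w_i\sign(m_i)$ satisfies $D \le \eps\norm{w}_1$. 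A Markov bound using $\E|D| \le \norm{w}_1\max_i|m_i| \le 2\tau B'\norm{w}_1$ gives $\Pr[D > \eps\norm{w}_1] \le 2\tau B'/\eps$, so the robust error on the good event for $S$ is at least $\tfrac12 - \tau B'/\eps$ (the degenerate case $w=0$ is the constant classifier and has error at least $\tfrac12$ directly).

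Finally I would assemble the pieces. Combining the good-event probability $1-\tfrac{\gamma}{2}$ with the conditional bound $\tfrac12 - \tau B'/\eps$ and imposing $\tau B'/\eps \le \tfrac{\gamma}{2}$ --- equivalently $n \le \tfrac{\eps^2\gamma^2}{450\,\tau^4\log(4d/\gamma)}$, which is implied by the theorem's hypothesis with its looser constant $5000$ --- the expected robust error is at least $(1-\tfrac{\gamma}{2})(\tfrac12-\tfrac{\gamma}{2}) = \tfrac12 - \tfrac{3\gamma}{4} + \tfrac{\gamma^2}{4} \ge \tfrac12-\gamma$ for $\gamma<\tfrac12$, as claimed.
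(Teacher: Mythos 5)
Your proposal is correct and follows essentially the same route as the paper's proof: the same dual-norm reduction of robust error to $\Pr[\ip{w,z} \le \eps\norm{w}_1]$, the same coordinatewise application of Lemma~\ref{lem:bb_one_d} with a union bound at failure probability $\gamma/(2d)$, and the same anti-concentration device --- your ``forcing'' coupling is exactly the paper's decomposition $E_i = A_i + B_i$ into a nonnegative drift part (handled by Markov) and a conditionally symmetric Rademacher part, followed by the identical $(1-\gamma/2)(\tfrac12-O(\gamma))$ assembly. The only cosmetic differences are that you convert log-odds to posterior bias via the identity $\E[\thetastar_i \mid S]=\tanh(L_i/2)$ (slightly cleaner than the paper's $e^x-1\le 2x$ step) and apply Markov directly to the drift at threshold $\eps\norm{w}_1$ rather than at $\tfrac{a_n}{\gamma}\norm{w}_1$, which is the same inequality rearranged.
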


Before we proceed to the formal proof, we briefly explain the approach at a high level.
Informally, Lemma~\ref{lem:bb_one_d} above implies that for small $n$, the algorithm $g_n$ is sufficiently uncertain about each co-ordinate $\thetastar_i$ so that in expectation, the dot product $\ip{w, \thetastar}$ is small compared to $\|w\|_1$.
Since the $\ell_1$ norm $\|w\|_1$ is dual to the $\ell_\infty$ norm bounding the adversarial perturbation, it can be related to the adversarial robustness of the classifier $w$ on a fresh sample $x$.
This then leads to the lower bound stated above, as we will now prove in more detail.
\begin{proof}
Let $S$ be the set of $n$ samples input to $g_n$ and let $w$ be the resulting classifier as defined in the theorem.
Our first goal is to bound the uncertainty in the estimate $w$ by establishing an upper bound on $\abs*{\E[\thetastar_i | S]}$ for each $i \in [d]$, which will in turn allow us to bound $\E_{\thetastar}\brackets*{\ip{w, \thetastar} | S}$.

We have
\[
  \E[ \thetastar_i | S ] \; = \; \prob\brackets{\thetastar_i = +1 | S} - \prob\brackets{\thetastar_i = -1 | S} \; .
\]
We first consider the case that $\prob\brackets{\thetastar_i = +1 | S} \geq \prob\brackets{\thetastar_i = -1 | S}$, which means that the conditional expectation $\E[ \thetastar_i | S ]$ is non-negative.
Hence it suffices to provide an upper bound on this quantity.
The lower bound in the complementary case $\prob\brackets{\thetastar_i = +1 | S} < \prob\brackets{\thetastar_i = -1 | S}$ can be derived analogously.

We have
\begin{align*}
  \E[ \thetastar_i | S ] \; &= \; \prob\brackets{\thetastar_i = +1 | S} - \prob\brackets{\thetastar_i = -1 | S} \\
                            &= \; \prob\brackets{\thetastar_i = -1 | S} \parens*{ \frac{ \prob\brackets{\thetastar_i = +1 | S}}{ \prob\brackets{\thetastar_i = -1 | S} } -1}  \\
                            &\leq \; \frac{1}{2} \parens*{ \frac{ \prob\brackets{\thetastar_i = +1 | S}}{ \prob\brackets{\thetastar_i = -1 | S} } -1} \numberthis \label{eq:bernlb_1}
\end{align*}
where we used the assumption $\prob\brackets{\thetastar_i = +1 | S} \geq \prob\brackets{\thetastar_i = -1 | S}$ (and hence $\prob\brackets{\thetastar_i = -1 | S}  \leq \sfrac{1}{2}$).

Next, we bound the ratio of probabilities by invoking Lemma~\ref{lem:bb_one_d} (note that we have $\tau \leq \sfrac{1}{4}$ and $n \leq \sfrac{1}{\tau^2}$ as required).
With probability $(1-\frac \gamma 2)$, $S$ is such that for all $i \in [d]$ we have
\[
  \frac{\Pr[\thetastar_i =+1 \mid S]}{\Pr[\thetastar_i = -1 \mid S]} \; \in \; \brackets*{ \exp\parens*{-15\tau\sqrt{2n\log \frac{4d}{\gamma}}}, \; \exp\parens*{15\tau\sqrt{2n\log \frac{4d}{\gamma}}}} \; .
\]
Substituting this into Equation \eqref{eq:bernlb_1} then yields
\begin{align*}
  \E[ \thetastar_i | S ] \; &\leq \; \frac{1}{2} \parens*{ \exp\parens*{15\tau\sqrt{2n\log \frac{4d}{\gamma}}} - 1}  \\
                            &\leq \; 15\tau\sqrt{2n\log \frac{4d}{\gamma}} 
\end{align*}
where we used the inequality $e^x - 1 \leq 2x$ for $0 \leq x \leq 1$ (note that the upper bound on $n$ in the theorem implies that the argument to the exponential function is in this range).

Combining the bound above with the analogous lower bound gives
\[
  \abs*{\E[\thetastar_i | S]} \; \leq \; 15\tau\sqrt{2n\log \frac{4d}{\gamma}}
\]
so that
\begin{align*}
  \E_{\thetastar}\brackets*{\ip{w, \thetastar} | S} \; & = \; \sum_{i=1}^d \E_{\thetastar}\brackets*{w_i \thetastar_i | S} \\
                                                       & = \; \sum_{i=1}^d w_i \cdot \E_{\thetastar}\brackets*{\thetastar_i | S} \\
                                                       & \leq \; 15\tau\sqrt{2n\log \frac{4d}{\gamma}} \cdot \norm{w}_1 \; .
\end{align*}
We condition on such an $S$ for the rest of this proof.

The second part of the proof will bound the classification margin the linear classifier $w$ achieves on a fresh sample $x$.
Incorporating the class label $y$, this margin is the quantity $y\ip{w, x}$.
From the first part of the proof, it follows that
\begin{align*}
  \Eop_{\thetastar, (x,y)}[ \ip{w, y x} ] \; &= \;  2 \tau \cdot \E_{\thetastar} [\ip{w, \thetastar}] \\
                                         &\leq \; 30\tau^2 \sqrt{2n\log (4d/\gamma)}\cdot \norm{w}_1 \; .
\end{align*}
To simplify the following calculation, we introduce the shorthand $a_n \eqdef 30\tau^2 \sqrt{2n\log (4d/\gamma)}$.
Next, we provide a tail bound on $\ip{w, yx}$.
Similar to Lemma \ref{lem:linear_besteps}, we observe that the random variable $y x_i w_i$ is stochastically dominated by $E_i \cdot \abs{w_i}$ where $E_i$ is a $\pm 1$ random variable with expectation $a_n$.
We can again write $E_i$ as
\[
  E \; = \; A_i + B_i \; ,
\]
where the random variable $A_i$ is in $\{0,1\}$ and has expectation $a_n$.
The random variable $B_i$ is in $\{-1, 0, 1\}$ and has a symmetric distribution that depends on $A$.
In particular, $B_i = 0$ iff $A_i = 1$ and $B_i$ is a Rademacher random variable otherwise.
Since $A_i$ is non-negative, we can use Markov's inequality on $\sum_i \abs{w_i} A_i$.
The $B_i$'s have a symmetric distribution even conditioned on $A_i$ so that $\sum_i \abs{w_i} B_i \leq 0$ with probability at least $\sfrac{1}{2}$.
Thus with probability at least $\frac{1-\gamma}{2}$, we have
\[
  \ip{w, y x} \; \leq \; \frac{a_n}{\gamma} \norm{w}_1 \; .
\]
Using the upper bound on $n$ from the theorem statement, we have
\begin{align*}
  \frac{a_n}{\gamma} \; &\leq \; \frac{30 \tau^2 \sqrt{ 2n \log(4d / \gamma)}}{\gamma} \\
                        &< \; \eps \; .
\end{align*}
Next, consider the strongest adversarial perturbation $e \in \perturbB_{\infty}^{\eps}$ for a given $w$, i.e., the vector $e \in \R^d$ achieving
\[
  \min_{e \in \perturbB_{\infty}^\eps} \ip{w, e} \; .
\]
By duality, the minimum value is exactly $\eps \norm{w}_{1}$.
Hence conditioned on the samples $S$ and the bound on $\ip{w, yx}$, the adversarially perturbed point $x + e$ is mis-classified because
\begin{align*}
  y\ip{w, x} \; &= \; y \ip{w, x} + y\ip{w, e} \\
  &< \; \eps \norm{w}_1 - \eps \norm{w}_1 \\
  &= \; 0 \; .
\end{align*}
The overall probability of this event occuring is at least $1 - \frac{\gamma}{2}$ (conditioning on $S$) times $\frac{1-\gamma}{2}$ (bound on $\ip{w, x}$).
Since
\[
  \parens*{1 - \frac{\gamma}{2}} \parens*{\frac{1 - \gamma}{2}} \; \geq \; \frac{1}{2} - \gamma \; .
\]
this completes the proof.
\end{proof}

\section{Omitted Figures}
\label{sec:additional_experiments}
\begin{figure*}[!htp]
\begin{center}
\includegraphics[width=0.9\textwidth]{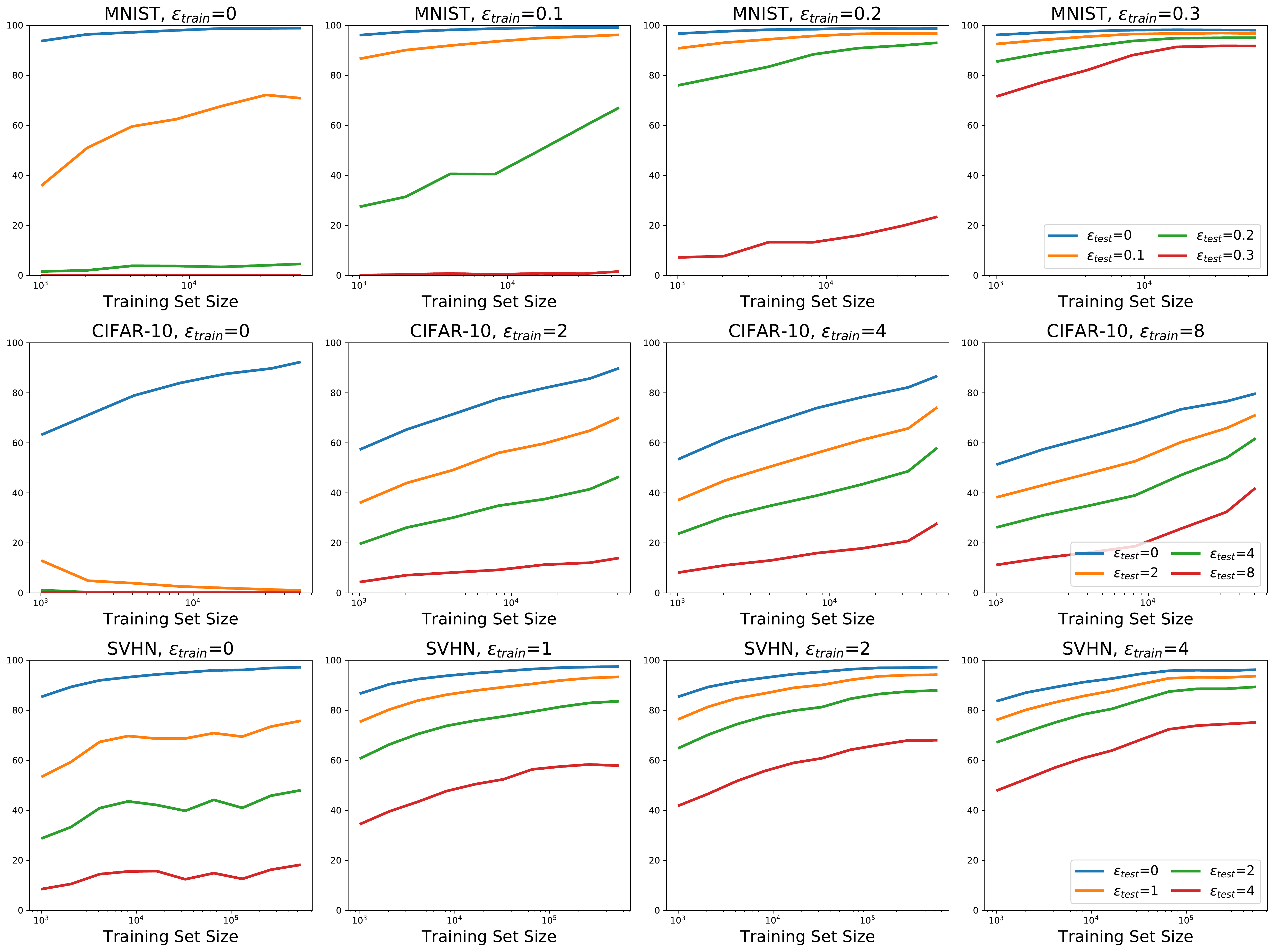}
\caption{Complete experiments for adversarially robust generalization for $\ell_\infty$ adversaries.
For each dataset and training $\eps$ we report the performance of the corresponding classifier for each testing $\eps$.
We observe that the best performance on natural examples is achieved through natural training and the best adversarial performance is achieved when training with the largest $\eps_{train}$ considered.}
\label{fig:linf_complete}
\end{center}
\end{figure*}

\begin{figure*}[!htp]
\begin{center}
\includegraphics[width=0.9\textwidth]{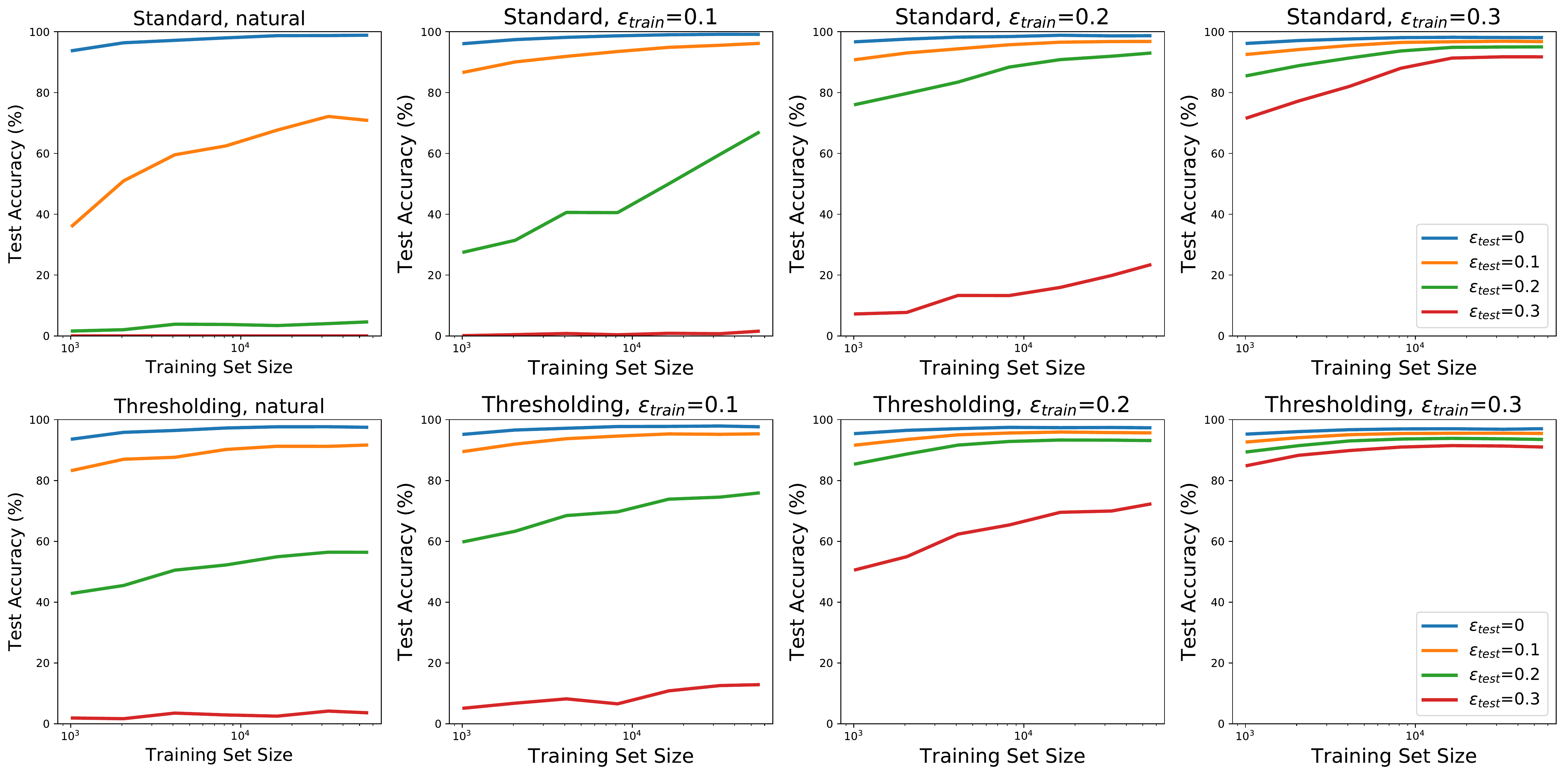}
\caption{Complete experiments for adversarially robust generalization for $\ell_\infty$ adversaries for standard networks (\textit{top row}) and networks with thresholding (\textit{bottom row}) for the MNIST dataset.
Thresholding corresponds to replacing the first convolutional layer with two channels $\textrm{ReLU}(x - \eps)$ computing $\textrm{ReLU}(x - (1-\eps))$.
For each training $\eps_{train}$ we report the performance of the corresponding classifier for each testing $\eps_{test}$.
For natural training, we use thresholding filters identical to those used for $\eps_{train}=0.1$.
We observe that in each case, explicitly encoding thresholding filters in the network architecture boosts the adversarial robustness for a given training $\eps_{train}$ and training set size.}
\label{fig:linf_thresh_complete}
\end{center}
\end{figure*}

\end{document}